\documentclass[twoside]{article}

\usepackage[preprint]{aistats2027}
\usepackage[round,authoryear]{natbib}
\usepackage[utf8]{inputenc}
\usepackage[T1]{fontenc}
\usepackage{lmodern}
\usepackage{amsmath,amssymb,amsthm,mathtools}
\usepackage{microtype}
\usepackage{booktabs}
\usepackage{graphicx}
\usepackage{tabularx}
\usepackage{enumitem}
\usepackage{url}
\usepackage[table]{xcolor}
\usepackage{etoolbox}
\usepackage{placeins}
\usepackage{float}
\floatstyle{ruled}
\newfloat{algorithm}{tbp}{loa}
\floatname{algorithm}{Algorithm}
\definecolor{tablehead}{HTML}{EDF0F3}
\definecolor{exchead}{HTML}{EEEAF6}
\definecolor{cdfhead}{HTML}{FBEDE4}
\definecolor{recoveryrow}{HTML}{EAF3ED}
\AtBeginEnvironment{table}{\renewcommand{\arraystretch}{1.08}}
\AtBeginEnvironment{table*}{\renewcommand{\arraystretch}{1.08}}
\usepackage{hyperref}
\definecolor{linkblue}{HTML}{0057B8}
\hypersetup{colorlinks=true,allcolors=linkblue}

\theoremstyle{plain}
\newtheorem{theorem}{Theorem}
\newtheorem{proposition}{Proposition}
\newtheorem{corollary}{Corollary}
\newtheorem{lemma}{Lemma}
\theoremstyle{definition}
\newtheorem{assumption}{Assumption}
\newtheorem{remark}{Remark}

\newcommand{\E}{\mathbb E}
\newcommand{\PP}{\mathbb P}
\newcommand{\KL}{\mathrm{KL}}
\newcommand{\kl}{\mathrm{kl}}
\newcommand{\1}{\mathbb 1}
\newcommand{\calD}{\mathcal D}
\newcommand{\keycell}[1]{{\bfseries\boldmath #1}}

\begin{document}
\runningauthor{Pauline Bourigault, Emmanuelle Bourigault}
\twocolumn[
\aistatstitle{Coverage Cliffs in Off-Policy Quantile Certification}
\aistatsauthor{Pauline Bourigault \And Emmanuelle Bourigault}
\aistatsaddress{Imperial College London \And Huawei Noah's Ark Lab}
]

\begin{abstract}
We study when logged contextual-bandit data can certify an upper tail quantile for a policy selected from the same sample. We construct a PAC-Bayes Bernstein lower-confidence envelope for the target cumulative distribution function (CDF), simultaneous over a predeclared threshold grid and all data-dependent posteriors. The resulting procedure, TailCert, uses the envelope's first crossing of level $p$ to return an atom-safe upper confidence bound on the selected posterior's pooled $p$-quantile (or inconclusive). It can also be used to select a policy under a certified tail constraint. The concentration ingredients are standard; the main theoretical contribution is a matched finite-class certification boundary comparing full logs with their joint weighted CDF indicators at a fixed threshold. On an explicit rare-action class, the required sample size grows linearly with the overlap factor $W$, logarithmically with policy-search and confidence complexity, and quadratically with the inverse CDF safety margin; the upper and lower bounds agree up to constants in the stated regime. On this class, the CDF reduction hides which actions exceeded the budget; retaining action counts restores the full-log rate. Simulations verify the predicted search-complexity shift, winner's-curse failure of naive reuse, and the power cost of splitting.
\par\smallskip\noindent\textbf{Code:} \url{https://github.com/paulinebourigault/coverage-cliffs}
\end{abstract}
\section{Introduction}
Before deploying a policy, that is a rule for choosing actions, such as which language model answers a request, an operator may require evidence that at least $99\%$ of requests cost at most a budget $\tau$. We ask how the information retained from historical logs affects this certification when the policy is selected from independently fixed candidates using the same data. On an explicit model, keeping only weighted in-budget indicators increases the required sample size. We identify the lost information and summaries that restore access to an established certificate without new outcomes.

Information loss is already visible with two uniformly logged actions and binary outcomes (Figure~\ref{fig:two_action_recovery}). A weighted cumulative distribution function (CDF) indicator records the action when the outcome stays within budget, but becomes zero for either action when the budget is exceeded. It therefore hides which action caused each rare failure. The original log contains those labels. As the desired percentile increases, losing them can be much more costly than any constant-factor slack in a confidence inequality.

For a fixed policy with outcome CDF $F$ and $p$-quantile $Q_p$, $p\in(0,1)$, the constraint $Q_p\le\tau$ is equivalent to both $F(\tau)\ge p$ and $G(\tau):=\PP(Y>\tau)\le1-p$. For $n$ logged outcomes $Y_i$ with importance weights $w_i$, define
\[
 \widehat F=\frac1n\sum_iw_i\1\{Y_i\le\tau\},\qquad
 \widehat G=\frac1n\sum_iw_i\1\{Y_i>\tau\},
\]
where $\1$ is the indicator function. Their sum is the average weight $\bar w=n^{-1}\sum_iw_i$, which need not equal one. The complementary-CDF estimator $\widehat G$ and its favorable upper-tail variance are established results of \citet{huang2022offarxiv}, who also give empirical-Bernstein bounds and combine the two estimators. Finite-list validity after selection follows by a union bound over such intervals. These estimator guarantees leave a distinct question: how much certification power is lost when the log is reduced to the joint CDF indicators, and can \emph{any} rule using only that reduction avoid the cost? We also ask which additional summaries restore access to a full-log certificate.

We use \emph{coverage cliffs} informally for limits on reliably certifying a safe policy from the available logged information. The results quantify sample requirements at controlled error; insufficient evidence can leave a valid procedure inconclusive. On an explicit rare-action family, we compare the full log with its joint weighted in-budget indicators across $K$ candidates, isolating information loss from the choice of confidence inequality. Write $u=1-p$, let $W$ bound the importance weights, let $\gamma$ be the safe policy's slack in the tail constraint, and let $\delta$ be the false-certification tolerance. At fixed power, the full-data requirement is order $Wu\log(K/\delta)/\gamma^2$, while the joint-CDF reduction requires order $Wp\log(K/\delta)/\gamma^2$. At relative margin $\gamma=cu$ with fixed $c\in(0,1/2]$, these grow as $1/u$ and $1/u^2$. The statement concerns all rules using the specified reduction, including joint processing across candidates; it does not cover methods retaining weights, action counts, or other normalization information.

The practical implication is a storage rule: \textbf{retain the sum of weights and the sum of squared weights alongside CDF moment summaries.} For fixed candidates, these two totals recover the established exceedance certificate at every stored threshold (Section~\ref{sec:repair}). This can change which policy is certified without collecting new outcomes. TailCert additionally certifies the pooled quantile of a data-selected policy mixture, including when outcomes have point masses.

\paragraph{Contributions.}
\begin{enumerate}[leftmargin=1.25em,itemsep=0.15em,topsep=0.15em]
\item \textbf{An information limit for certification.} We prove matching sample requirements for full logs and joint weighted CDF indicators on the same rare-action family (Theorem~\ref{thm:boundary}), including confidence and search costs. In the stated regime, the order-$p/(1-p)$ penalty applies to every rule using only that reduction.
\item \textbf{A repair without new outcomes.} We prove that action counts restore sufficient information in the binary model (Proposition~\ref{prop:recovery}). For general contextual weights and fixed candidates, two weight totals alongside CDF moments exactly recover the established empirical-Bernstein exceedance certificate (Section~\ref{sec:repair}).
\item \textbf{When this changes deployment decisions.} Paired-log experiments reproduce the predicted loss and recovery (Figure~\ref{fig:information}). The search cost persists across $125$ routers sharing five actions. On RouterBench and a $98{,}000$-generation reasoning study, these certification gaps determine whether useful routers can be certified.
\end{enumerate}

\section{Setting}\label{sec:setup}
Let $\calD_n=\{(X_i,A_i,Y_i)\}_{i=1}^n$ contain $n$ i.i.d.\ logged requests: context $X_i$, action $A_i$ drawn from a known behavior policy $\pi_0$, and loss $Y_i\in[0,B]$ for a known $B>0$. Our main results concern $K$ candidate policies $\pi_1,\ldots,\pi_K$, fixed independently of the certification log, possibly using separate development data. The deployed candidate may nevertheless be selected using the certification log. We assume that the context distribution and the conditional outcome law given context and action remain unchanged when the policy changes. Candidate $j$ has weights $w_j(x,a)=\pi_j(a\mid x)/\pi_0(a\mid x)$ and support contained in the logger's support. The weight corrects how often the logger takes an action relative to the candidate; large weights mean that relevant actions are poorly represented in the log.

\begin{assumption}[Bounded overlap]\label{ass:overlap}
A known finite constant $W$ satisfies $0\le w_j(x,a)\le W$ for all candidates and all $(x,a)$ in the logger's support.
\end{assumption}

A tail budget and a confidence level play different roles. A $99$th-percentile constraint permits at most $1\%$ of future requests to exceed the budget; confidence $1-\delta$ controls the chance that inference from the random log incorrectly certifies the selected policy. An inconclusive result means the procedure has not established the constraint; it does not by itself show that the policy is unsafe.

Write $\PP_{\pi_j}$ and $\E_{\pi_j}$ for probability and expectation when actions follow $\pi_j$. For $p\in(0,1)$ and a threshold $t$, define
\begin{align*}
 G_j(t)&=\PP_{\pi_j}(Y>t),\\
 \widehat G_j(t)&=\frac1n\sum_{i=1}^n w_j\big(X_i,A_i \big)\1\big\{Y_i>t\big\},\\
 Q_{p,j}&=\inf\big\{t:G_j(t)\le1-p\big\}.
\end{align*}
The corresponding CDF and estimate are $F_j(t)=1-G_j(t)$ and $\widehat F_j(t)=n^{-1}\sum_iw_j(X_i,A_i)\1\{Y_i\le t\}$. Importance weighting gives $G_j(t)=\E_{\pi_0}[w_j(X,A)\1\{Y>t\}]$, so $\widehat G_j(t)$ is unbiased for each fixed candidate $j$. Selection is handled by simultaneous confidence bounds, rather than an unbiasedness claim for the selected estimate. The deployment constraint is $G_j(\tau)\le1-p$, equivalently $Q_{p,j}\le\tau$, for a budget $\tau$ fixed before certification. This constraint limits exceedance frequency; it does not further restrict the magnitude of outcomes above $\tau$. If several thresholds are reported, they belong to a finite predeclared grid $\mathcal T\subset[0,B)$ and inference accounts for that multiplicity. The bound $Q_{p,j}\le B$ is always available. Appendix~\ref{app:posterior_certificate} extends the notation and certificate to a data-selected posterior over an independently fixed policy family.

\section{Certification and information loss}\label{sec:boundary}

Fix a threshold $\tau$, write $Z=\1\{Y>\tau\}$ and $u=1-p$, and call candidate $j$ safe when $G_j:=\E_{\pi_j}Z\le u$. A selector-certifier $\Psi\in\{\bot,1,\ldots,K\}$, with $\bot$ denoting inconclusive, is $\delta$-valid on a model class if its probability of outputting an unsafe candidate is at most $\delta$ under every model. We compare procedures using different reductions of the \emph{same} log, while requiring power $P_j\{\Psi=j\}\ge1-\beta$ under a model $P_j$ with uniquely safe candidate $j$; $\beta$ is the tolerated probability of failing to select and certify it. Validity controls wrong certifications; power requires useful decisions and rules out always returning inconclusive. Throughout this section, set
\[
 \delta_*=\min\{\delta,\beta/2\},\qquad
 L_* = \log(2K/\delta_*).
\]

\paragraph{Rare-action family.}
There is one context and $K$ target actions, each logged with probability $1/W$, where $W\ge K\ge2$; a dummy action receives the remaining probability. Candidate $j$ always chooses action $j$, so $w_j\in\{0,W\}$. Outcomes are binary, $Y\in\{0,B\}$, with $\tau\in[0,B)$. Under $P_0$, all target actions have tail probability $u+\gamma$. Under $P_j$, only action $j$ changes, to $u-\gamma$. The dummy outcome law is fixed across models.

A useful procedure must distinguish a log with no safe candidate from one with a safe candidate, then identify which candidate is safe. These requirements produce the confidence and search costs, respectively. We ask how reducing the log changes the data needed for both. Here $\kl$ denotes Bernoulli relative entropy.

\begin{figure}[t]
\centering
\begingroup
\small
\setlength{\tabcolsep}{5pt}
\renewcommand{\arraystretch}{1.12}
\begin{tabular}{@{}lcccc@{}}
\toprule
Observation $i$ & 1 & 2 & 3 & 4\\
Tail flag $Z_i$ & 0 & 1 & 0 & 1\\
\midrule
Action in $\calD$  & 1 & \textbf{1} & 2 & \textbf{1}\\
Action in $\calD'$ & 1 & \textbf{2} & 2 & \textbf{2}\\
\bottomrule
\end{tabular}
\par\smallskip
$\Downarrow\quad\text{retain joint CDF indicators}\quad\Downarrow$
\par\smallskip
\fcolorbox{black!35}{black!4}{%
\parbox{0.87\linewidth}{\centering
Same sequence for both logs:\par\smallskip
$(2,0),\ (0,0),\ (0,2),\ (0,0)$}}
\par\smallskip
\textbf{Recover counts by retaining $N$ as well:}
\par\smallskip
\begin{tabular}{@{}lccc@{}}
\toprule
Log & Actions $N$ & Successes $S$ & Failures $N-S$\\
\midrule
$\calD$  & $(3,1)$ & $(1,1)$ & $(2,0)$\\
$\calD'$ & $(1,3)$ & $(1,1)$ & $(0,2)$\\
\bottomrule
\end{tabular}
\endgroup
\caption{\textbf{The same CDF data can hide opposite failure allocations.} Bold entries mark the erased action labels. The joint CDF sequence preserves the total number of failures but not their allocation to actions. Adding action counts recovers that allocation as counts. This is a schematic example, not experimental evidence.}
\label{fig:two_action_recovery}
\end{figure}

\begin{theorem}[Certification cost before and after CDF reduction]\label{thm:boundary}
Fix $p\in[1/2,1)$, $W\ge K\ge2$, $0<\gamma\le u/2$, and $0<\delta<1-\beta<1$. On the rare-action family $\{P_0,\ldots,P_K\}$, consider $\delta$-valid rules with $P_j\{\Psi=j\}\ge1-\beta$ for every $j$. Put
\[
 H=\max\!\left\{\begin{aligned}
 &\kl(1-\beta \,\| \, \delta),\\
 &(1-\beta)\log K-\log2
 \end{aligned}\right\}.
\]
\emph{(a) Full logs.} Every such rule requires
\begin{equation}\label{eq:lower}
 n\ge\frac{WuH}{16\gamma^2}.
\end{equation}
Certifying when the empirical-Bernstein exceedance upper bound is at most $u$ attains validity and the required output power whenever
\begin{equation}\label{eq:upper}
 n\ge\frac{32WuL_*}{\gamma^2}.
\end{equation}
\emph{(b) Joint CDF indicators.} If a rule observes only
\[
 T_i=\bigl(W\1\{A_i=j,Y_i\le\tau\}\bigr)_{j=1}^K,
 \qquad i=1,\ldots,n,
\]
then it requires
\begin{equation}\label{eq:cdflower}
 n\ge\frac{WpH}{8\gamma^2}.
\end{equation}
Certifying when the empirical-Bernstein CDF lower bound is at least $p$ attains validity and the required output power when $n\ge48WpL_*/\gamma^2$.

Both sufficient procedures use candidate error level $\delta_*/K$ and output any certified candidate. The sufficient bound in (a) also holds without $W\ge K$ for any independently fixed candidate list satisfying Assumption~\ref{ass:overlap}, whenever some candidate $s$ is uniquely safe with $G_s\le u-\gamma$.
\end{theorem}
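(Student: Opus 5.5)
The plan has three parts: the two lower bounds come from KL/data-processing arguments, and the two sufficiency claims come from empirical-Bernstein concentration plus a union bound. For the lower bounds I would compute the per-sample KL divergence between $P_j$ and $P_0$ under each observation model and then apply two standard reductions. The first is the confidence cost. Consider the event $E=\{\Psi=j\}$. Validity under $P_0$ (no safe candidate) gives $P_0(E)\le\delta$, and power gives $P_j(E)\ge1-\beta$. By data processing, $n\,\KL_1(P_j\|P_0)\ge\kl(1-\beta\,\|\,\delta)$, where $\KL_1$ is the one-observation divergence of whatever is retained. The second is the search cost. Apply Fano's inequality in its averaged form, with the uniform prior on $j$ and reference $P_0$:
\[
(1-\beta)\log K-\log 2\le \frac1K\sum_j n\,\KL_1(P_j\|P_0).
\]
Combining the two, $n\ge H/\max_j\KL_1(P_j\|P_0)$. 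So (a) and (b) reduce to bounding $\KL_1$ under each observation model.

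\emph{Full logs.} $P_j$ and $P_0$ differ only on observations with $A=j$, which occur with probability $1/W$, and there the outcome is Bernoulli$(u-\gamma)$ versus Bernoulli$(u+\gamma)$. Hence $\KL_1=\frac1W\kl(u-\gamma\|u+\gamma)$. Bounding this by the $\chi^2$ divergence gives at most $4\gamma^2/((u+\gamma)(1-u-\gamma))$. With $\gamma\le u/2$ and $p\ge1/2$, this is at most $c\gamma^2/u$ for some $c\le16$, which yields \eqref{eq:lower}.

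\emph{Joint CDF indicators.} $T_i$ takes one of $K+1$ values: $W e_j$ when $A=j$ and $Z=0$, and $0$ otherwise. Under $P_j$, $\PP(T=We_j)=(1-u+\gamma)/W=(p+\gamma)/W$, compared with $(p-\gamma)/W$ under $P_0$. The zero cell absorbs the difference $2\gamma/W$, but its mass is at least $1-K/W\cdot(\cdot)$, and it is bounded below because the dummy action's mass is fixed. The $\chi^2$ divergence is therefore at most
\[
\frac{(2\gamma/W)^2}{(p-\gamma)/W}+\frac{(2\gamma/W)^2}{P_0(T=0)}.
\]
The main obstacle is controlling the zero-cell term when $W=K$, since $P_0(T=0)$ could then be small. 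I would use $P_0(T=0)\ge K(u+\gamma)/W\ge u$ together with $4\gamma^2/(W^2u)\le 4\gamma^2/W\cdot(1/(Wu))$. Because $\gamma\le u/2$, we have $\gamma^2/u\le\gamma/2$, and I expect this, with careful constants, to be dominated by the first term $\lesssim 8\gamma^2/(Wp)$. That gives \eqref{eq:cdflower}.

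\emph{Sufficiency.} For each candidate, apply the empirical-Bernstein bound (Maurer–Pontil type) to the weighted variables $wZ\in[0,W]$ at level $\delta_*/K$, then take a union bound over the $K$ candidates; this gives validity at level $\delta_*\le\delta$. The true variance of $wZ$ is at most $W\,G_j\le Wu$. Hence, for the safe candidate $s$, with probability $1-\delta_*$ the upper bound is at most
\[
(u-\gamma)+\sqrt{2WuL_*/n}+cWL_*/n,
\]
which is at most $u$ once $n\ge32WuL_*/\gamma^2$; the constant can be checked after absorbing the empirical-variance deviation. Power also requires that no unsafe candidate is certified, and this is exactly the validity event. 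The total failure probability is therefore at most $2\delta_*\le\beta$. For (b), the same argument applies to $w\1\{Y\le\tau\}$, whose variance is at most $Wp$. This gives the $48WpL_*/\gamma^2$ requirement and uses only the $T_i$. Nothing in the full-log argument used $W\ge K$ or the specific family beyond boundedness and the existence of a uniquely safe $s$ with $G_s\le u-\gamma$, which gives the final remark.
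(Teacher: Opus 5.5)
Your part (a) lower bound and both sufficiency arguments are sound and follow the paper. Using $\eta=\delta_*$ rather than $\beta/2$ for the power event is also fine, since $\min(\delta_*/K,\delta_*/2)=\delta_*/K$ still gives $L_*$ and the total failure is $2\delta_*\le\beta$.

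The gap is the lower bound in part (b). Once the data are reduced to the vectors $T_i$, $P_0$ is the wrong comparison law, and no tightening of constants repairs this. Write $Q_j$ for the one-observation law of $T$ under $P_j$. The zero vector is produced by the dummy action and by \emph{every} exceedance, so its frequency tracks the total number of failures. Since $P_0$ carries $2\gamma/W$ more failure mass than any $P_j$, the zero cell is informative.

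When $W=K$, the zero cell has mass $u+\gamma$ under $P_0$. Its $\chi^2$ contribution $4\gamma^2/\{W^2(u+\gamma)\}$ therefore exceeds your first term $4\gamma^2/\{W(p-\gamma)\}$ by the factor $(p-\gamma)/\{W(u+\gamma)\}$, which is large whenever $W\ll p/u$. Your estimate via $\gamma^2/u\le\gamma/2$ only bounds that term by $2\gamma/W^2$, and this is below $8\gamma^2/(Wp)$ only when $W\ge p/(4\gamma)$.

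This is a true lower bound on the divergence, not just a loose upper bound. For $K=W=2$ and $\gamma=u/2$, data processing onto the indicator of $\{T=0\}$ gives $\KL(Q_j\|Q_0)\ge\kl(u\,\|\,\tfrac32u)\approx(\tfrac12-\log\tfrac32)\,u\approx0.09\,u$. The bound you need is $8\gamma^2/(Wp)=u^2/p$. Your route therefore yields only an order-$1/u$ requirement, which is the full-log order, and loses exactly the $p/u$ separation that part (b) asserts. The reason is structural: the CDF reduction still reveals \emph{whether} some candidate is safe at essentially the full-log rate; what it hides is \emph{which} one.

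The missing idea is to let the alternatives serve as each other's nulls. For $k\ne j$, candidate $j$ is unsafe under $P_k$, so validity gives $Q_k^{\otimes n}(\Psi=j)\le\delta$, while power gives $Q_j^{\otimes n}(\Psi=j)\ge1-\beta$. For Fano, take an alternative such as $Q_1^{\otimes n}$ as the reference law rather than $Q_0^{\otimes n}$. Every alternative gives the zero cell the same mass $1-\{Kp-(K-2)\gamma\}/W$, and $Q_j,Q_k$ differ only by swapping the masses $(p\pm\gamma)/W$ on $We_j$ and $We_k$. Hence
\[
 \KL(Q_j\|Q_k)=\frac{2\gamma}{W}\log\frac{p+\gamma}{p-\gamma}\le\frac{4\gamma^2}{W(p-\gamma)}\le\frac{8\gamma^2}{Wp},
\]
and both the two-point and the Fano steps then give $n\ge WpH/(8\gamma^2)$. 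Validity is needed only on the alternatives.

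A minor point on the CDF sufficiency: the safe candidate's summand has second moment $W(p+\gamma)\le3Wp/2$. Its variance need not be at most $Wp$; it exceeds $Wp$ when $W\gamma>(p+\gamma)^2$. This factor $3/2$ is what turns $32$ into $48$.
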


The factor $W$ reflects how rarely the logger observes each target action. The additional difference is $u$ versus $p$: a full log attributes rare failures to actions, whereas the CDF reduction must identify the safe action through a small change among common successes. For fixed $0<\beta\le1/4$ and $0<\delta\le1/4$, the bounds match at orders $Wu\log(K/\delta)/\gamma^2$ and $Wp\log(K/\delta)/\gamma^2$, up to constants depending on the fixed power. Thus better processing of the reduced data, even jointly across candidates, cannot remove its order-$p/u$ cost on this family. At relative margin $\gamma=cu$, the two costs grow as $1/u$ and $1/u^2$ as $p\to1$. The achievable bounds use established empirical-Bernstein inference \citep{huang2022offarxiv}; the separation quantifies the cost of the observation restriction. Appendix~\ref{app:boundary} gives both testing proofs, and Appendix~\ref{app:information} the exact information comparison.

\paragraph{A worked two-action example.}
Set $K=W=2$: the logger chooses each action with probability $1/2$, and candidate $j$ always takes action $j$. The CDF vector is $(2,0)$ after an in-budget outcome on action 1, $(0,2)$ after one on action 2, and $(0,0)$ after an exceedance on either action. Thus even the \emph{joint} vector reveals that a failure occurred but hides its action.

Figure~\ref{fig:two_action_recovery} gives two four-observation logs with the same CDF sequence. Each has one observed success per action, $S=(1,1)$, but all failures belong to action 1 in one log and action 2 in the other. Adding the action counts $N$ recovers the failure counts $E=N-S$. These counts need not reconstruct the action at each failed observation; they retain what is needed for inference on the two Bernoulli risks.

The action counts alone have the same distribution under every tail-risk model in this construction. Yet combining them with the CDF successes restores the missing failure counts. For rare tails, the informative failure labels are sparse, whereas CDF inference must distinguish small differences among abundant successes. Theorem~\ref{thm:boundary}(b) quantifies this loss as an order-$p/u$ penalty; Proposition~\ref{prop:recovery} formalizes the repair. Neither effect requires small logging probabilities.

\begin{proposition}[Recovery by action counts]\label{prop:recovery}
In the binary rare-action model, let $N_j=\sum_i\1\{A_i=j\}$ and $S_j=\sum_i\1\{A_i=j,Y_i\le\tau\}$. The pair $(N,S)$ is sufficient for arbitrary target-action tail probabilities. Conditional on $N$, the exceedance counts $N_j-S_j$ are independent binomials. In particular, adding $N$ to the CDF success counts recovers $\widehat G_j=W(N_j-S_j)/n$ and the sufficient log-size bound~\eqref{eq:upper}.
\end{proposition}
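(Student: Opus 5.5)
I will write the likelihood of the full log in the binary rare-action model explicitly and read off sufficiency from the factorization theorem. Let $q_j=\PP(Y>\tau\mid A=j)$ denote the tail probability of target action $j$, with the dummy action's outcome law fixed. The log-likelihood of $\calD_n$ is then
\[
 \prod_{i=1}^n \pi_0(A_i)\prod_{j=1}^K q_j^{\,N_j-S_j}(1-q_j)^{S_j}\cdot h(\text{dummy observations}).
\]
The factor $\pi_0(A_i)$ is known and parameter-free. The dummy factor $h$ depends only on the fixed dummy law and the dummy observations. Hence the dependence on $(q_1,\ldots,q_K)$ is through $(N,S)$ alone, and the Fisher--Neyman factorization gives sufficiency for arbitrary $q\in[0,1]^K$. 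No restriction to the models $P_0,\ldots,P_K$ is needed.

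For the conditional law, I first note that $N=(N_1,\ldots,N_K,N_{\mathrm{dummy}})$ is multinomial with logging probabilities $1/W$ for each target action. Given the action sequence, the outcomes are independent Bernoulli draws with action-specific parameters, because observations are i.i.d. and outcomes depend only on the action. Therefore, conditional on the full action sequence, $N_j-S_j\sim\mathrm{Bin}(N_j,q_j)$ independently across $j$. This conditional law depends on the sequence only through $N$, so it is also the law conditional on $N$. This also shows that $N$ alone is ancillary for $q$, which matches the remark before the proposition.

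For recovery, $w_j(X_i,A_i)=W\1\{A_i=j\}$, so
\[
 \widehat G_j=\frac1n\sum_i W\1\{A_i=j\}\1\{Y_i>\tau\}=\frac{W(N_j-S_j)}{n}.
\]
The same identity holds for the sum of squared weighted indicators, $W^2(N_j-S_j)/n$, so the empirical-Bernstein upper bound on $G_j$ is a function of $(N,S)$. The certifier in Theorem~\ref{thm:boundary}(a) uses only $\widehat G_j$ and this empirical variance at level $\delta_*/K$ for each $j$. It can therefore be run exactly on $(N,S)$. Its validity and power guarantee under $n\ge 32WuL_*/\gamma^2$, namely \eqref{eq:upper}, transfer verbatim. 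The CDF success counts are themselves computable from the $T_i$, since $S_j=W^{-1}\sum_i T_{ij}$. So appending $N$ to the CDF reduction suffices.

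The main obstacle is bookkeeping rather than depth. I must confirm that the certificate in part (a) depends on the data only through the per-candidate first and second empirical moments of $w_j Z$, and that no other statistic of the full log enters. If the Bernstein bound used, for instance, a centered variance involving $\bar w$, I would check that $\bar w_j=WN_j/n$ is also a function of $N$, which it is.
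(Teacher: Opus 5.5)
Your proposal is correct and follows the paper's proof essentially step for step. Both use the factorization theorem, with the logging probabilities and the fixed dummy-outcome law absorbed into a parameter-free factor. Both use conditional independence of the $\mathrm{Bin}(N_j,q_j)$ exceedance counts given $N$. Both reconstruct the first two sample moments of the $\{0,W\}$-valued summands, giving $\widehat\sigma_j^2=W^2(E_j-E_j^2/n)/(n-1)$ with $E_j=N_j-S_j$, so the same Exc-EB certificate, and hence \eqref{eq:upper}, is recovered. The paper additionally notes in passing that exact one-sided binomial bounds at level $\delta_*/K$, with upper bound one when $N_j=0$, give valid simultaneous inference from $(N,S)$; this is not needed for the stated claims.
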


The factorization proof and exact information comparison are in Appendix~\ref{app:recovery}. Count recovery also explains the scope of Theorem~\ref{thm:boundary}(b): normalized, control-variate, or model-assisted procedures may use information excluded by its observation model. Even $\widehat F_j(B)=WN_j/n$ reveals the missing counts. The restriction therefore concerns the CDF data at $\tau$, rather than every CDF-based method. A CDF mean and empirical second moment at $\tau$ are functions of that reduction; adding actions, separate weights, other thresholds, or full-log utility scores gives a richer observation.

\paragraph{Extension to shared actions.}
Appendix~\ref{app:tailoverlap} gives a sufficient bound governed by the tail second moment $V_j=\E[w_j^2Z]$, together with a global range term. It also proves necessity for families whose shared tail information is sufficiently separated. The construction includes 125 distinct routers over five uniformly logged actions. This extends the scope beyond private actions, while leaving highly redundant candidate lists outside the necessity claim.

\section{Recovering a usable certificate}\label{sec:cert}\label{sec:repair}
We show how two retained weight totals recover an exceedance certificate exactly from CDF moments. For fixed candidates, the result is the same empirical-Bernstein exceedance bound (Exc-EB) computed from full logs. This certificate, due to \citet{huang2022offarxiv} and unioned across candidates, attains the full-data bound in Theorem~\ref{thm:boundary}(a) and the tail-moment bound in Appendix~\ref{app:tailoverlap}.

Write $w_{ij}=w_j(X_i,A_i)$. In the binary construction, action counts supply both weight totals: $\sum_iw_{ij}=WN_j$ and $\sum_iw_{ij}^2=W^2N_j$. With contextual or stochastic policies the weights vary, so we retain these two totals directly. Subtracting the corresponding CDF moments recovers the exceedance mean and variance needed by Exc-EB.

\paragraph{Computing the recovered certificate.}
Under Assumption~\ref{ass:overlap}, fix $\delta\in(0,1)$, $K$ candidate policies, a nonempty finite threshold grid $\mathcal T$, and $n\ge2$ independently of the certification sample. Suppose a CDF computation retains
\begin{align*}
 S_{F,j}(t)&=\sum_iw_{ij}\1\{Y_i\le t\},\\
 Q_{F,j}(t)&=\sum_iw_{ij}^2\1\{Y_i\le t\}.
\end{align*}
Two additional totals per candidate, $S_{W,j}=\sum_iw_{ij}$ and $Q_{W,j}=\sum_iw_{ij}^2$, recover the exceedance sums exactly:
\begin{align*}
 S_{E,j}(t)&=S_{W,j}-S_{F,j}(t),\\
 Q_{E,j}(t)&=Q_{W,j}-Q_{F,j}(t).
\end{align*}
Consequently, with $\widehat G_j(t)=S_{E,j}(t)/n$ and
\[
 \widehat\sigma_j^2(t)=\frac{Q_{E,j}(t)-S_{E,j}(t)^2/n}{n-1},
\]
the exceedance empirical-Bernstein bound (in the form of \citet[Theorem~4]{maurer2009empirical}, which we use for Exc-EB throughout) is recovered exactly:
\begin{equation}\label{eq:summaryrepair}
\begin{aligned}
 r_j(t)&=\sqrt{\frac{2\widehat\sigma_j^2(t)\ell}{n}}
               +\frac{7W\ell}{3(n-1)},\\
 U_j(t)&=\min\{1,\widehat G_j(t)+r_j(t)\},\\
 \ell&=\log\frac{2K|\mathcal T|}{\delta}.
\end{aligned}
\end{equation}
With probability at least $1-\delta$, $G_j(t)\le U_j(t)$ simultaneously for all $(j,t)$, including a pair selected using these data.

\paragraph{Why the reconstruction is valid.}
The identity $\1\{Y>t\}=1-\1\{Y\le t\}$ gives both sums; since the indicator is binary, they are the first and second sample moments of $w_{ij}\1\{Y_i>t\}$. Apply Lemma~\ref{lem:eb} at error level $\delta/(K|\mathcal T|)$ and take a union bound. Clipping is valid because $G_j(t)\le1$.

\textbf{Two totals preserve access to the same Exc-EB certificate as direct exceedance processing.} They work at every stored threshold and for contextual stochastic policies. This reconstruction does not establish statistical sufficiency outside the binary model, minimal storage, or recovery of the observation sequence needed by general betting procedures. The denominator remains $n$, rather than the observed total weight; fixed-sample validity does not license optional stopping.

\begin{algorithm}[t]
\caption{Recover and certify with Exc-EB}
\label{alg:recovered}
\textbf{Setup.} Under Assumption~\ref{ass:overlap}, fix $K$ candidates, a threshold grid $\mathcal T\ni\tau$, and $n\ge2$ independently of the certification sample; specify $p,\delta\in(0,1)$ and a known weight bound $W$.

\textbf{Inputs.} Retained $S_{F,j}(t),Q_{F,j}(t)$ and totals $S_{W,j},Q_{W,j}$; utility scores $\widehat v_j$.
\begin{enumerate}[leftmargin=*,topsep=3pt,itemsep=2pt,parsep=0pt]
\item For every $(j,t)$, recover $S_{E,j}(t)=S_{W,j}-S_{F,j}(t)$ and $Q_{E,j}(t)=Q_{W,j}-Q_{F,j}(t)$.
\item Form the recovered mean and sample variance; compute $U_j(t)$ using~\eqref{eq:summaryrepair}.
\item Form the certified set $\mathcal C_\tau=\{j:U_j(\tau)\le1-p\}$.
\item If $\mathcal C_\tau=\varnothing$, return \textbf{inconclusive}.
\item Otherwise return $\widehat j\in\arg\max_{j\in\mathcal C_\tau}\widehat v_j$ (break ties by index), with certificate $Q_{p,\widehat j}\le\tau$.
\end{enumerate}
\end{algorithm}

\paragraph{A practical check before collecting more data.}
Algorithm~\ref{alg:recovered} applies the repair without new outcomes. Utility scores may use the same sample, but need their own inputs and receive no utility guarantee here. If neither the required summaries nor raw logs remain, reconstruction is unavailable. Declaring success whenever any of several certificate methods succeeds requires additional multiplicity accounting. On RouterBench, the repair changes the deployed router at a fixed log size (Section~\ref{sec:experiments}).

\paragraph{TailCert for selected policy mixtures.}
To deploy a mixture, draw a policy from a learned distribution $\rho$ on each request. This distribution is called a posterior; $\bar F_\rho$ is the CDF of the resulting pooled outcomes. For an independently fixed policy family and prior, Appendix~\ref{app:cdf} constructs a PAC-Bayes Bernstein lower envelope $L_\rho(t)$, simultaneous over all admissible posteriors and a predeclared grid $\mathcal T$. TailCert returns
\[
 \widehat q^{\rm TailCert}_p(\rho)
 =\min\big\{t\in\mathcal T:L_\rho(t)\ge p\big\},
\]
or inconclusive when the set is empty. On the same $1-\delta$ event this bounds the pooled $p$-quantile, including at atoms (point masses), and permits selection under $L_\rho(\tau)\ge p$. This direct CDF construction is CDF-PB in the experiments; Exc-PB uses exceedances instead. The matched finite-class rates in Theorem~\ref{thm:boundary} are attained by empirical Bernstein, not asserted for TailCert.

Exc-PB and Exc-Bet are our adaptations of established PAC-Bayes and betting methods to weighted exceedances. Appendices~\ref{app:posterior_certificate}--\ref{app:pbpower} give Exc-PB's posterior-uniform guarantee and worst-case power bound, including the tuning-grid cost. Appendix~\ref{app:betting} specifies Exc-Bet; its reported power constants remain empirical.

\section{Experiments}\label{sec:experiments}
We test the argument in two stages. Paired simulations isolate whether information removal causes the predicted penalty and whether retaining counts removes it. Outcome-table studies then ask whether that distinction changes which useful policy can be deployed from a fixed log budget. EB, PB and Bet denote empirical-Bernstein, PAC-Bayes and betting bounds; CDF and Exc identify the statistic used.
\paragraph{Retaining counts restores the tail scaling.}
Does the observed penalty come from the confidence inequality or the information it receives? Figure~\ref{fig:information} compares rules on identical logs from Theorem~\ref{thm:boundary}. At fixed relative margin, the log size for $90\%$ power scales as $u^{-2.00}$ for the exact CDF test, versus $u^{-1.00}$ to $u^{-1.03}$ for every tested rule retaining counts or exceedances. The exact CDF/count ratio grows on the predicted order $p/u$, including when the candidate count varies at fixed overlap (Table~\ref{tab:infostudy}). Thus even the exact CDF test retains the penalty, while adding counts changes the scaling.

We compare three information classes: joint CDF indicators only, exceedance indicators, and CDF indicators with action counts. Each rule returns the first certified candidate in a fixed order, so CDF-only selection uses only the stipulated reduction. The study uses $\gamma=u/2$, $\delta=0.05$ and $2000$ repetitions per log size; Appendix~\ref{app:infostudy} specifies the rules. With one candidate, the penalty vanishes on-policy and reappears under poor overlap. Generic certificates have at most $1.1\%$ observed false certification under the tested nulls; Table~\ref{tab:infovalidity} gives the exact tests' nominal validity and Monte Carlo results.

\begin{figure*}[t]
\centering
\includegraphics[width=0.98\textwidth]{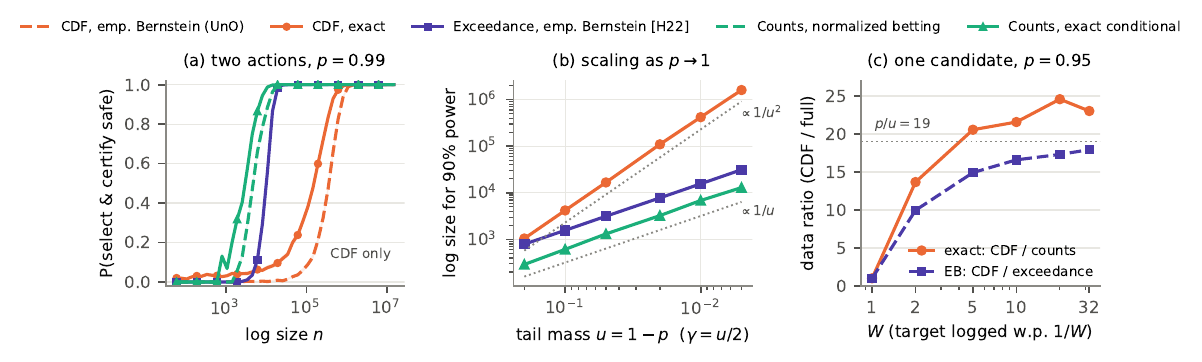}
\caption{\textbf{Retaining counts restores inverse-tail scaling on the family of Theorem~\ref{thm:boundary}} ($\gamma=u/2$, $\delta=0.05$, $2000$ repetitions per point). Orange: rules seeing only the joint CDF indicators; purple: exceedance indicators; green: action counts in addition. (a) Probability of selecting and certifying the safe candidate with two uniformly logged actions at $p=0.99$; shading gives pointwise $95\%$ Wilson intervals over simulation repetitions. (b) Log size for $90\%$ power against the tail mass $u$: Plotted CDF slopes are $-1.95$ (CDF-EB) and $-2.00$ (CDF-exact), versus $-1.00$ to $-1.03$ for the other plotted rules. (c) One candidate logged with probability $1/W$ at $p=0.95$: the exact CDF/count ratio is $1$ on-policy and $21$--$25$ for $W\ge5$, comparable in order to $p/u=19$.}
\label{fig:information}
\end{figure*}

\paragraph{Search costs persist when policies share actions.}
The effect does not require a separate action for every candidate. For $125$ affine routers sharing five uniformly logged actions (Corollary~\ref{cor:affine}), increasing the candidate list from $2$ to $125$ raises Exc-EB's $90\%$-power log size from $15$k to $27$k at $p=0.90$. Across the tested lists and both tail levels, dividing log size by $V\log(K/\delta)/\gamma^2$, with $V=5(u+\gamma)$, gives comparable values; the CDF/count penalty also persists. A repeated-candidate control illustrates the scope limit: a nominal union bound pays for duplicate routers even though duplicates add no selection problem. Appendix~\ref{app:infostudy} reports the full comparisons.

\paragraph{Selection must be included in the certificate.}
A separate rare-action simulation checks the predicted normalization, search complexity and data reuse. Exceedance power curves nearly coincide after scaling log size by $n^\star=W(1-p)\log(K/\delta)/\gamma^2$. Exc-EB reaches $90\%$ power at $(4.7$--$5.4)n^\star$; the other tested exceedance certificates differ mainly in constants. The naive rule omitting search and tuning penalties falsely certifies about $10\%$ at nominal $5\%$, while splitting needs $1.4$--$1.9\times$ the log size of full-sample Exc-PB. These measurements concern the implemented certificates. Appendix~\ref{app:empirical_details} gives settings, curves and the remaining method comparisons.

\paragraph{From recovered bounds to deployment decisions.}
The outcome-table studies ask whether recovery changes which useful policy can be deployed from a fixed log budget. Reconstructed and directly computed Exc-EB bounds agree to $10^{-14}$ and yield identical decisions on both tables (Appendix~\ref{app:infostudy}). We therefore compare that recoverable certificate with CDF certificates and other exceedance methods, all on the same logs at nominal $\delta=0.05$.

Candidates and budgets are fixed on development data. Selection maximizes importance-weighted utility among certified candidates; if none certifies, it returns an uncertified reference policy and records abstention. Utility selection uses full logs, so the CDF-based variants here are valid simultaneous-certificate comparisons, not selectors restricted to Theorem~\ref{thm:boundary}(b)'s observation model. Appendix~\ref{app:empirical_details} gives the full protocols and comparator definitions.

\paragraph{Recovery changes the deployed router.}
With $3{,}640$ logs, recovered Exc-EB permits mean deployed quality $0.607$, whereas the CDF certificates almost always return the reference router, whose quality is $0.305$ (Table~\ref{tab:fixedbudget}). The same pattern holds at $p=0.99$. These are decisions from the same observed outcomes: exact reconstruction changes what the pipeline can certify. No valid procedure falsely certified in any cell of the main study.

RouterBench contains $36{,}481$ prompts and five models; separate development and evaluation splits define routers trading quality against dollar cost. Table~\ref{tab:router} explains the decision gap: at $p=0.95$ under uniform logging, Exc-EB needs $1.9$k logs for $90\%$ certification probability, versus $26.7$k for CDF-EB. Norm-Bet matches Exc-Bet at the displayed precision, so this gap does not extend to the normalization-aware comparator.

\begin{table}[t]
\centering\small
\setlength{\tabcolsep}{3pt}
\caption{\textbf{Data needed for a tail constraint.} Estimated log sizes in thousands (lower is better), interpolated at $90\%$ empirical certification on RouterBench ($K=32$, $\delta=0.05$). Uniform exploration probability $\epsilon$ gives $W=5/\epsilon$. ``--'': not reached within $300$k. PB: PAC-Bayes; Bet: betting; EB: empirical Bernstein. CDF columns are unnormalized. Norm-Bet matches Exc-Bet at the displayed precision. $30$ log sizes (about $25\%$ steps), $300$ repetitions. Bold compares the recoverable EB and CDF-EB routes under uniform logging.}
\label{tab:router}
\begin{tabular}{llrrrrr}
\toprule
& & \multicolumn{3}{c}{\cellcolor{exchead}\textbf{Exceedance}} & \multicolumn{2}{c}{\cellcolor{cdfhead}\textbf{CDF}}\\
\cmidrule(lr){3-5}\cmidrule(lr){6-7}
\rowcolor{tablehead}
$p$ & $\epsilon$ & \textbf{PB} & \textbf{Bet} & \textbf{EB} & \textbf{EB} & \textbf{PB}\\
\midrule
0.90 & 1 & 1.2 & 0.8 & \keycell{1.0} & \keycell{7.3} & 11.5\\
0.90 & 0.5 & 2.3 & 1.5 & 1.8 & 4.4 & 22.5\\
0.90 & 0.2 & 5.5 & 3.6 & 4.4 & 6.5 & 58.8\\\addlinespace[3pt]
0.95 & 1 & 2.3 & 1.5 & \keycell{1.9} & \keycell{26.7} & 43.4\\
0.95 & 0.5 & 4.4 & 2.9 & 3.6 & 12.7 & 83.5\\
0.95 & 0.2 & 10.7 & 6.9 & 8.6 & 13.7 & 220.6\\\addlinespace[3pt]
0.99 & 1 & 10.7 & 6.9 & \keycell{8.6} & \keycell{--} & --\\
0.99 & 0.5 & 25.9 & 13.4 & 16.7 & 184.9 & --\\
0.99 & 0.2 & 62.7 & 32.3 & 50.3 & 119.7 & --\\\bottomrule
\end{tabular}
\end{table}
\begin{table}[t]
\centering
\caption{\textbf{Reprocessing changes the deployment decision.} RouterBench, uniform logging, $K=32$, $\delta=0.05$: mean deployed quality $\pm$ Monte Carlo standard error over $300$ paired logs, including fallback, and certification frequency. Standard errors describe each method's mean, not a paired difference or uncertainty in the benchmark population. Bold and shading mark exact recovery of Exc-EB, not statistical superiority. All listed methods had zero observed unsafe certifications.}
\label{tab:fixedbudget}
\small
\setlength{\tabcolsep}{1.8pt}
\begin{tabular}{lcc}
\toprule
\rowcolor{tablehead}
\textbf{Method} & \textbf{Deployed quality} & \textbf{Cert. rate}\\
\midrule
\rowcolor{tablehead}\multicolumn{3}{l}{$p=0.95$, $n=3{,}640$}\\
\rowcolor{recoveryrow}\textbf{Exc-EB / recovered} & \keycell{$0.6069\pm0.0004$} & \keycell{$100.0\%$}\\
Norm-Bet & $0.6084\pm0.0004$ & $100.0\%$\\
CDF-EB & $0.3052\pm0.0007$ & $0.3\%$\\
CDF-PB & $0.3045\pm0.0000$ & $0.0\%$\\
\addlinespace
\rowcolor{tablehead}\multicolumn{3}{l}{$p=0.99$, $n=13{,}676$}\\
\rowcolor{recoveryrow}\textbf{Exc-EB / recovered} & \keycell{$0.6089\pm0.0001$} & \keycell{$100.0\%$}\\
Norm-Bet & $0.6093\pm0.0001$ & $100.0\%$\\
CDF-EB & $0.3055\pm0.0010$ & $0.3\%$\\
CDF-PB & $0.3045\pm0.0000$ & $0.0\%$\\
\addlinespace\bottomrule
\end{tabular}
\end{table}

\paragraph{The same pattern appears under token budgets.}
The reasoning study also finds much lower certification costs for exceedance-based methods. At $p=0.90$ under uniform logging, Exc-Bet needs 4.3k requests and CDF-EB 46.2k. At $p=0.95$, neither tested CDF method reaches $90\%$ certification within 98k for any tested logger, while Exc-Bet needs 7.1k--34.4k. Norm-Bet is within $6\%$ of Exc-Bet; Table~\ref{tab:gpu} gives the remaining comparisons. Seven open-weight configurations supply 98,000 generations on 14,000 problems. Development fixes budgets and routers; replay logs resample the 9,800-problem evaluation pool and certify that finite population.

A separate held-out benchmark evaluation uses a log containing each evaluation problem once with randomized logged actions, a stratified certificate, and fresh generations on 2,000 held-out problems. Under the stated model that each dataset supplies i.i.d.\ requests and their mixture has GSM8K share $q=0.413$, the selected router's certified exceedance bound is $0.0968$ at the $0.10$ budget; the reported held-out exceedance is $102/2000=0.051$. This interpretation depends on that population model and is separate from finite-pool replay. Appendix~\ref{app:empirical_details} gives the design and all held-out measurements.

\paragraph{Explaining log sizes and designing logging.}
Population-overlap calculations explain the reported median thresholds within a factor of two across the tested configurations (Appendix~\ref{app:predict}). These calculations use evaluation-population moments and are retrospective, not validated prospective sample-size forecasts. For fixed stochastic targets mixing different tail risks, tail-aware logging reduces the reported Exc-Bet log size by $2.3$--$6.0$ times on RouterBench at $10$--$20\%$ relative margins and by $1.1$--$1.3$ times in the reasoning study (Appendix~\ref{app:tailaware}). Gains depend on tail heterogeneity and on whether the variance term dominates the range term, as Proposition~\ref{prop:logging} suggests.

\section{Related work}\label{sec:related}
\paragraph{Off-policy distributions and risks.}
UnO constructs CDF bands from pointwise off-policy intervals \citep{chandak2021universal,thomas2015high}; OPRA develops importance-weighted and doubly robust distribution estimates \citep{huang2021off,huang2022mdp}. Most directly, \citet{huang2022offarxiv} introduce the complementary-CDF estimator, explain its tail-dependent variance, give empirical-Bernstein bounds, and combine CDF and complementary-CDF estimates using their empirical variances. We use these established estimators. Our principal result compares the data required by full logs and by the joint CDF-indicator reduction for the same selection-and-certification problem. It establishes an information restriction, rather than a limitation of every method described as CDF-based. Uniform policy-class risk bounds \citep{wan2026riskaware}, off-policy prediction intervals \citep{taufiq2022conformal,zhang2023conformal,wan2025pacopp}, and selected-policy score-quantile certificates \citep{stepcops2026} address related targets; score-distribution guarantees require additional justification to describe deployment outcomes.

\citet{luo2026boosting} use quantile gradients to improve sample efficiency in CVaR policy optimization in MDPs; we study finite-sample certification after selection from contextual-bandit logs.

Anytime-valid off-policy inference already provides CDF bands under adaptive logging and without a known global weight bound \citep{waudby2024anytime}. Our fixed-sample comparison asks how much information remains after a specified reduction of the log, and which additional summaries recover an established tail certificate.

\paragraph{Confidence bounds and policy selection.}
At a fixed threshold, our target is an off-policy Bernoulli mean. Existing self-normalized inference already supports policy selection \citep{kuzborskij2021confident}. Off-policy confidence sequences jointly exploit the known weight mean and the weighted outcome mean \citep{karampatziakis2021offpolicy}. Exc-PB uses nonnegative-loss PAC-Bayes concentration \citep{maurer2003bound,thiemann2017strongly,gabbianelli2024importance}; Exc-Bet uses predictable plug-in betting \citep{waudby2024estimating}. The contribution is the comparison of attainable and necessary certification costs, not selection validity alone.

\paragraph{Converse arguments and their scope.}
Our proofs use standard testing and Fano arguments, as in off-policy estimation \citep{li2015toward} and arm identification \citep{kaufmann2016complexity,kano2019good}. Conditional on a nonzero observation, the joint-CDF construction is a categorical mode-identification problem, connecting its KL calculation to \citet{jain2022mode}. The new comparison concerns what is lost when the same off-policy logs are reduced to these categories. The shared-action lower bound extends the local overlap argument under a quantitative separation condition; it does not characterize arbitrary redundant policy classes. Propensity overfitting \citep{swaminathan2015self} also motivates the caution about selected-mixture overlap in Appendix~\ref{app:mixture}.

\section{Discussion and limitations}\label{sec:discussion}
\paragraph{Practical consequences.}
Before collecting more outcomes after a failed certificate, check which summaries the procedure used. If CDF moments and the two weight totals, or the raw logs, remain available, Section~\ref{sec:repair} reconstructs Exc-EB without new measurements. Its bound may certify a policy that the CDF certificate could not. Normalized, control-variate, and model-assisted procedures can also exploit information excluded by Theorem~\ref{thm:boundary}(b), as the Norm-Bet comparisons illustrate.

\paragraph{What this enables and leaves open.}
A monitoring pipeline must decide which summaries to retain before discarding request-level logs. For fixed candidates and stored thresholds, two normalization totals per candidate preserve access to Exc-EB from CDF moments. This supports the cost and token-budget decisions studied here; latency budgets fit the same formulation if the assumptions hold. The broader question is which summaries preserve other downstream decisions. We establish sufficiency in the binary family and recover one existing certificate for general weights. Preserving arbitrary future candidates, utility criteria, or sequential decisions remains open.

\paragraph{Necessity and assumptions.}
The matched finite-family comparison assumes $W\ge K$ and a unique safe candidate. The local lower bound permits shared actions under the stated tail cross-moment condition and comparable overlaps. Neither establishes an instance-wise $\log K$ cost for arbitrary router families: duplicates and substantial shared information can reduce search complexity. Guarantees assume known propensities, bounded weights, independently fixed candidates or prior, and a predeclared threshold grid. They concern one decision per request, not sequential decisions with trajectory weights.

\paragraph{Evidence and logging design.}
Paired simulations test the information comparison directly; the outcome-table studies show that exact recovery changes which router can be deployed. These experiments establish decision consequences of the specified information choices, not how frequently CDF-only summaries are used in practice. Replay certifies finite evaluation populations; held-out generations require the separate population assumptions in Section~\ref{sec:experiments}. Calculations using evaluation-population moments explain log sizes retrospectively; prospective planning needs an independent pilot and uncertainty accounting. Tail-aware logging minimizes the tail second moment for a fixed target, but sample-size gains also depend on the range term and certificate. Designing one logger for an unknown selected policy is a further problem.

\section{Conclusion}
On an explicit finite-candidate family, reducing off-policy logs to joint CDF indicators erases the action identities of rare failures and incurs an order-$p/(1-p)$ certification penalty. Retaining action counts removes this penalty in the binary model; for general contextual weights, two normalization totals recover the established empirical-Bernstein exceedance certificate. Experiments show that recovery can change deployment decisions.

\bibliography{refs_aistats}
\clearpage
\onecolumn
\appendix
\makeatletter
\setlength{\@fptop}{0pt}
\setlength{\@fpsep}{18pt plus 2pt minus 2pt}
\setlength{\@fpbot}{0pt plus 1fil}
\makeatother
\section*{Supplement}
\begin{itemize}[leftmargin=1.15em,itemsep=0.25em,topsep=0.3em]
\item \textbf{Appendix~\ref{app:boundary}: information separation.}
Proves both parts of Theorem~\ref{thm:boundary}, using one concentration tool and two information bounds. Start here to check the claimed sample-size gap.
\item \textbf{Appendix~\ref{app:recovery}: recovery and intuition.}
Proves Proposition~\ref{prop:recovery}, explains the information erased in Figure~\ref{fig:two_action_recovery}, and gives the on-policy limit.
\item \textbf{Appendix~\ref{app:tailoverlap}: shared actions.}
Supplies the extension summarized in Section~\ref{sec:boundary}, including the five-action router construction and its scope conditions.
\item \textbf{Appendix~\ref{app:certificates}: certificate formulas.}
Defines the posterior, CDF, betting, and normalization-aware procedures supporting Sections~\ref{sec:repair} and~\ref{sec:experiments}, with their validity and power qualifications.
\item \textbf{Appendix~\ref{app:design_scope}: design and limits.}
Derives the fixed-target logging consequence and explains mixture-overlap limitations and the scope of the information comparison discussed in Sections~\ref{sec:related}--\ref{sec:discussion}.
\item \textbf{Appendix~\ref{app:experiments_support}: empirical details.}
Documents the protocols, reference procedures, tables, and supplementary plots behind Section~\ref{sec:experiments}; it distinguishes fixed-pool replay from population deployment.
\end{itemize}

\paragraph{Notation conventions.}
We inherit the notation of Section~\ref{sec:setup}. All logarithms are natural. $\KL(P\|Q)$ denotes relative entropy, and $\kl(a\|b)=a\log(a/b)+(1-a)\log((1-a)/(1-b))$ is its Bernoulli form, with the usual continuous boundary conventions. $P^{\otimes n}$ is the law of $n$ independent draws from $P$, and $I(U;V)$ denotes mutual information. We write $\rho\ll\pi$ for absolute continuity and $\delta_\theta$ for a point mass at $\theta$. The comparisons $a\lesssim b$ and $a\asymp b$ mean, respectively, an upper bound and two-sided bounds up to positive constants in the stated regime. Expectations without a subscript use the current model's logging law.

\section{Proof of the information separation}\label{app:boundary}
This section proves Theorem~\ref{thm:boundary}. The same empirical-Bernstein tool supplies both achievable bounds; testing arguments then show how the information restriction changes the necessary log size.

\subsection{The concentration tool}\label{app:eb}
The following established confidence bound and its power calculation are used in the main theorem, the reconstruction recipe in Section~\ref{sec:repair}, and the shared-action extension in Appendix~\ref{app:tailoverlap}.

\begin{lemma}[Empirical-Bernstein certification and power]\label{lem:eb}
Let $n\ge2$ and $X_1,\ldots,X_n$ be IID in $[0,W]$, $W\ge1$, with mean $\mu$ and $\mathrm{Var}(X_i)\le V$. Write $\bar X=n^{-1}\sum_{i=1}^nX_i$ and $S_n^2=(n-1)^{-1}\sum_{i=1}^n(X_i-\bar X)^2$ for the sample mean and variance. For $\alpha\in(0,1)$, put $\ell=\log(2/\alpha)$ and
\[
 r_n=\sqrt{2S_n^2\ell/n}+\frac{7W\ell}{3(n-1)}.
\]
Each bound $\bar X+r_n$ and $\bar X-r_n$ has one-sided error probability at most $\alpha$. Suppose the mean is at least $h\in(0,1]$ below an upper certification threshold, or at least $h$ above a lower threshold. If $\eta\in(0,1)$ and
\[
 L=\log\frac{2}{\min(\alpha,\eta/2)},\qquad
 n\ge\max\!\left\{\frac{32VL}{h^2},\frac{11WL}{h}\right\},
\]
then the appropriate bound crosses its threshold with probability at least $1-\eta$.
\end{lemma}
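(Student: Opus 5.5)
The validity half is Theorem~4 of \citet{maurer2009empirical} applied to the rescaled variables $X_i/W\in[0,1]$. Rescaling back multiplies the sample standard deviation term by $W$ in both the variance and the mean, so it gives $\sqrt{2S_n^2\ell/n}+7W\ell/(3(n-1))$. Maurer--Pontil state the one-sided bound at level $\alpha$ with $\log(2/\alpha)$, so each direction holds with probability at least $1-\alpha$. Applying the result to $W-X_i$ gives the lower bound $\bar X-r_n$.

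For power, take the upper case: $\mu\le c-h$, and we need $\bar X+r_n\le c$. I would intersect two high-probability events, each failing with probability at most $\eta/2$.
\begin{enumerate}[leftmargin=*,itemsep=1pt]
\item \emph{Mean deviation.} By Bernstein's inequality, $\bar X-\mu\le\sqrt{2V\log(2/\eta)/n}+W\log(2/\eta)/(3n)$.
\item \emph{Sample standard deviation.} By Maurer--Pontil's Theorem~10, $S_n\le\sigma+\sqrt{2W^2\log(2/\eta)/(n-1)}$, with $\sigma\le\sqrt V$.
\end{enumerate}
Since $L\ge\log(2/\alpha)$ and $L\ge\log(4/\eta)\ge\log(2/\eta)$, every logarithm can be replaced by $L$. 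On the intersection,
\[
 \bar X+r_n-\mu\le 2\sqrt{2VL/n}+\frac{2WL}{\sqrt{n(n-1)}}+\frac{WL}{3n}+\frac{7WL}{3(n-1)}.
\]
Using $n-1\ge n/2$, the last three terms are at most $C\,WL/n$ with $C\approx 2\sqrt2+1/3+14/3\approx 7.8$.

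It remains to show the right-hand side is at most $h$. Because $n\ge 32VL/h^2$, we have $2\sqrt{2VL/n}\le 2\sqrt{2h^2/32}=h/2$. Because $n\ge 11WL/h$, we have $C\,WL/n\le (7.8/11)h\approx0.71h$. These sum to $1.21h$, which is slightly more than $h$, so the constants need tightening.

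This constant bookkeeping is the main obstacle, and I would fix it in one of two ways.
\begin{enumerate}[leftmargin=*,itemsep=1pt]
\item Allocate failure probabilities differently, or use $\sqrt{n(n-1)}\ge n\sqrt{1/2}$ more carefully. The cleanest route is to absorb the cross term $\sqrt{2}\,W L/\sqrt{n(n-1)}\cdot\sqrt{2}$ via $ab\le$ AM-GM into the two main terms.
\item Bound $S_n^2$ directly instead of $S_n$. The population-unbiased $S_n^2$ concentrates as $S_n^2\le V+W\sqrt{2V L/n}+O(W^2L/n)$, which gives $\sqrt{2S_n^2L/n}\le\sqrt{2VL/n}+O(WL/n)$. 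This saves a factor on the leading term: it becomes $(\sqrt2+\sqrt2)\sqrt{VL/n}$, at most $h/2$ with room to spare once $h\le1\le W$ is used.
\end{enumerate}
I expect the second route to fit within the stated $32$ and $11$ after using $W\ge1$ and $h\le1$ to merge the lower-order terms.

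The lower-threshold case follows by the same argument applied to $W-X_i$, which has the same variance.
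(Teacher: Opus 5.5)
Your decomposition matches the paper's proof. Validity comes from Maurer--Pontil Theorem~4 together with its reflection through $W-X_i$. Power comes from Bernstein for $\bar X-\mu$ and Theorem~10 for $S_n$, each at failure $\eta/2$. This gives the same bound $2\sqrt{2VL/n}+WL/(3n)+2WL/\sqrt{n(n-1)}+7WL/(3(n-1))$.

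The gap is the final numerical step. As written, your argument does not prove the lemma. The estimate $n-1\ge n/2$ makes the lower-order constant about $7.8$, which does not fit under $h/2$ with the factor $11$. Neither proposed repair works as stated. In your second route, $(\sqrt2+\sqrt2)\sqrt{VL/n}$ equals $2\sqrt{2VL/n}$, which is the leading term you already had. So nothing is saved, and there is no ``room to spare'': at $n=32VL/h^2$ this term is exactly $h/2$. For the same reason, absorbing anything into the leading term by AM--GM cannot help. Also, the term $2WL/\sqrt{n(n-1)}$ has no $\sqrt V$ factor to trade against.

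The missing observation is that the hypotheses already force $n$ to be large. Since $\eta<1$, we have $\min(\alpha,\eta/2)<1/2$, so $L>\log4$. With $W\ge1$ and $h\le1$, the condition $n\ge11WL/h$ then gives $n>11\log4>15$, so $n\ge16$. Hence $n/(n-1)\le16/15$, and the three $WL$ terms total at most
\[
 \Bigl(\tfrac13+2\sqrt{16/15}+\tfrac73\cdot\tfrac{16}{15}\Bigr)\frac{WL}{n}<\frac{5WL}{n}\le\frac{5h}{11}<\frac h2 .
\]
This is what the paper does: it uses $n\ge11$ to bound the sum by $5.2\,WL/n$. With this observation your original chain of inequalities closes, with no reallocation of failure probabilities and no switch to concentration of $S_n^2$.
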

\begin{proof}
One-sided validity is the empirical-Bernstein inequality of \citet[Theorem~4]{maurer2009empirical}, applied also to $W-X_i$ for the opposite direction. With probability at least $1-\eta/2$, Bernstein's inequality gives the required one-sided empirical-mean deviation at most $\sqrt{2VL/n}+WL/(3n)$. With the same failure budget, concentration of the sample standard deviation \citep[Theorem~10]{maurer2009empirical} gives
\[
 \sqrt{S_n^2}\le\sqrt V+W\sqrt{2L/(n-1)}.
\]
On their intersection, the total distance from the mean to the relevant confidence bound is at most
\begin{align*}
 &2\sqrt{\frac{2VL}{n}}+
 \frac{WL}{3n}+\frac{2WL}{\sqrt{n(n-1)}}+
 \frac{7WL}{3(n-1)}\\
 &\hspace{1em}\le2\sqrt{\frac{2VL}{n}}+\frac{5.2WL}{n}.
\end{align*}
The sample-size assumptions force $n\ge11$, using $W\ge1$, $h\le1$ and $L\ge\log4$, so the final numerical inequality holds. The first term is at most $h/2$, and the second is at most $5.2h/11<h/2$. A union bound proves the power statement.
\end{proof}

\subsection{Full logs: proof of Theorem~\ref{thm:boundary}(a)}\label{app:fullproof}
The upper bound uses the concentration lemma above. The lower bound separates the confidence cost by a two-model test and the search cost by Fano's inequality.
\paragraph{Sufficiency.}
For each candidate, apply the empirical-Bernstein upper bound in Lemma~\ref{lem:eb} at level $\alpha=\delta_*/K$. A union bound gives simultaneous validity with failure probability at most $\delta_*\le\delta$. Apply the lemma to the uniquely safe candidate $s$ with power failure $\eta=\beta/2$. Its logarithmic term is
\[
 \log\frac{2}{\min(\delta_*/K,\beta/4)}
 =\log(2K/\delta_*)=L_*,
\]
because $K\ge2$ and $\delta_*\le\beta/2$. For its summands $w_s(X_i,A_i)Z_i$, the second moment obeys $V_s\le WG_s\le Wu$. Condition~\eqref{eq:upper} implies both conditions in Lemma~\ref{lem:eb}: the range condition follows from $\gamma\le u/2$. Thus the safe candidate is certified except on an event of probability $\beta/2$. Outside this event and the simultaneous-validity failure, it is the only possible output. The total failure probability is at most $\beta/2+\delta_*\le\beta$.

\paragraph{Necessity.}
Under $P_j$ and $P_0$, only action $j$ has a different conditional outcome law. Consequently,
\[
 \KL(P_j^{\otimes n}\,\|\,P_0^{\otimes n})
 =\frac nW\kl \big(u-\gamma \,\|\, u+\gamma \big)
 \le\frac{16n\gamma^2}{Wu}.
\]
Indeed, $\kl(a\|b)\le(a-b)^2/[b(1-b)]$, $b=u+\gamma\ge u$, and $1-b=p-\gamma\ge1/4$. Under $P_0$ every output is unsafe, so $P_0(\Psi=j)\le\delta$. Under $P_j$, $P_j(\Psi=j)\ge1-\beta>\delta$. Data processing for this event and monotonicity of binary KL give
\[
 \kl (1-\beta \,\|\, \delta)
 \le\KL \big(P_j^{\otimes n}\,\|\,P_0^{\otimes n} \big).
\]
For the search cost, let $J$ be uniform on $\{1,\ldots,K\}$, draw $\calD_n\mid J=j\sim P_j^{\otimes n}$, and map abstention to an arbitrary label. The resulting estimate has error probability at most $\beta$. Fano's inequality yields
\[
 (1-\beta)\log K-\log2
 \le I(J;\calD_n)
 \le\frac1K\sum_j\KL \big(P_j^{\otimes n}\,\|\,P_0^{\otimes n} \big).
\]
Independent randomization of the decision rule leaves these information bounds unchanged. Combining the two inequalities proves~\eqref{eq:lower}.

\paragraph{Order statement.}
For fixed $0<\beta\le1/4$ and $0<\delta\le1/4$, $L_*$ is within constant factors of $\log K+\log(1/\delta)$, with constants allowed to depend on the fixed power. Moreover,
\[
 \kl \big(1-\beta \,\|\, \delta \big)
 \ge(1-\beta)\log(1/\delta)-\log2.
\]
Together with the Fano term, this lower bounds a constant multiple of $\log K+\log(1/\delta)$; for bounded $K$, the confidence term absorbs the bounded search term. These observations give the stated minimax order. No uniform order claim is made as $\delta$ approaches $1-\beta$.

\subsection{Joint CDF data: proof of Theorem~\ref{thm:boundary}(b)}\label{app:penalty}
This proof changes only the observation available to the decision rule. Collapsing all unlabelled failures into one category reduces the information per observation.
Write $Q_j$ for the one-observation law of the entire CDF vector $T$. It is categorical on $\{0,We_1,\ldots,We_K\}$, where $e_j$ is the $j$th standard basis vector in $\mathbb R^K$. Its masses are
\[
 Q_j(We_j)=\frac{p+\gamma}{W},\qquad
 Q_j(We_k)=\frac{p-\gamma}{W}\quad(k\ne j),
\]
\[
 Q_j(0)=1-\frac{Kp-(K-2)\gamma}{W}\ge1-p>0.
\]
The zero mass is the same for every alternative. For $j\ne k$, only two nonzero categories change, and
\begin{align*}
 \KL(Q_j\|Q_k)
 &=\frac{2\gamma}{W}\log\frac{p+\gamma}{p-\gamma}\\
 &\le\frac{4\gamma^2}{W(p-\gamma)}
 \le\frac{8\gamma^2}{Wp}.
\end{align*}
Here $\log(1+x)\le x$ and $\gamma\le u/2\le p/2$.

\paragraph{Confidence and search.}
Fix $k\ne j$. Candidate $j$ is unsafe under $P_k$, so validity implies $Q_k^{\otimes n}(\Psi=j)\le\delta$, while power implies $Q_j^{\otimes n}(\Psi=j)\ge1-\beta$. Data processing gives
\[
 \kl(1-\beta\,\|\,\delta)
 \le n\KL(Q_j \,\|\,Q_k)
 \le\frac{8n\gamma^2}{Wp}.
\]
Write $T_{1:n}=(T_1,\ldots,T_n)$. Taking $J$ uniform on $\{1,\ldots,K\}$ with $T_{1:n}\mid J=j\sim Q_j^{\otimes n}$ and again assigning an arbitrary label on abstention, Fano's inequality and the reference law $Q_1^{\otimes n}$ give
\begin{align*}
 (1-\beta)\log K-\log2&\le I(J;T_{1:n})\\
 &\le\frac1K\sum_j n\KL(Q_j \,\|\, Q_1)\\
 &\le\frac{8n\gamma^2}{Wp}.
\end{align*}
This proves~\eqref{eq:cdflower}, including for randomized rules. The argument uses validity only on the alternatives; inclusion of $P_0$ makes parts (a) and (b) of Theorem~\ref{thm:boundary} use exactly the same model class.

\paragraph{Matching upper bound.}
Let $X_{ij}=w_j(X_i,A_i)\1\{Y_i\le\tau\}$. Use empirical-Bernstein lower bounds at level $\delta_*/K$ and output any candidate whose bound is at least $p$. For the safe candidate, $\E X_{ij}=p+\gamma$ and $\E X_{ij}^2=W(p+\gamma)\le3Wp/2$. The lower-bound version of Lemma~\ref{lem:eb}, with failure $\beta/2$, uses $L_*$. Thus $n\ge48WpL_*/\gamma^2$ suffices for its certification, and also dominates $11WL_*/\gamma$. As in the full-data proof, the union of safe-candidate certification failure and simultaneous-validity failure has probability at most $\beta$. This proves the claimed validity and power on every $P_j$.

\subsection{Boundary case: one candidate}\label{app:singlecandidate}
The following separate calculation isolates the confidence cost when there is no search over policies. It complements the one-candidate comparison in Figure~\ref{fig:information}(c), rather than deriving the $K$-candidate search term.
\begin{proposition}[Single-candidate CDF reduction]\label{prop:cdfsingle}
For one deterministic target action logged with probability $1/W$, $W\ge2$, let $0<p<1$, $0<\gamma\le1-p$, and $0<\delta<1-\beta<1$. Write $a_\star$ for the target action, $w_i=W\1\{A_i=a_\star\}$, and $F(t)$ for the outcome CDF under the policy that always chooses $a_\star$. A possibly randomized rule depending only on $(w_i\1\{Y_i\le\tau\})_{i=1}^n$, valid at level $\delta$ for every $F(\tau)<p$, and certifying $F(\tau)=p+\gamma$ with probability at least $1-\beta$, requires
\[
 n\ge\frac{Wp}{2\gamma^2}\kl(1-\beta \,\|\, \delta).
\]
\end{proposition}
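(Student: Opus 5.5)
The argument is a two-point testing bound on the single-candidate reduction, following the confidence half of the proof of Theorem~\ref{thm:boundary}(b). The observation $T_i=w_i\1\{Y_i\le\tau\}$ takes only the values $W$ and $0$. It equals $W$ exactly when the target action is logged and its outcome is in budget. Under a model with target CDF value $F(\tau)=q$, each $T_i$ is therefore Bernoulli on $\{0,W\}$ with $\PP(T_i=W)=q/W$. The reduced data thus depend on the model only through $q$, whatever the dummy or other actions do.

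Let $Q_q$ denote this one-observation law. I will compare the alternative $q_1=p+\gamma$, which requires $\gamma\le1-p$ so that $q_1\le1$, with null models $q_0=p-\epsilon$ for small $\epsilon>0$. Each null has $F(\tau)<p$, so validity gives $Q_{q_0}^{\otimes n}(\text{certify})\le\delta$. Power gives $Q_{q_1}^{\otimes n}(\text{certify})\ge1-\beta$. Applying data processing to the certify event, together with monotonicity of binary KL and the assumption $\delta<1-\beta$, yields
\[
 \kl(1-\beta\,\|\,\delta)\le n\,\kl\!\left(\frac{q_1}{W}\,\Big\|\,\frac{q_0}{W}\right).
\]
Independent randomization of the rule does not affect this step.

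Next I bound the per-observation divergence with the chi-square bound $\kl(a\|b)\le(a-b)^2/[b(1-b)]$. Taking $a=q_1/W$ and $b=q_0/W$ gives
\[
 \kl\le\frac{(\gamma+\epsilon)^2}{W^2}\cdot\frac{W}{q_0}\cdot\frac{1}{1-q_0/W}.
\]
Since $W\ge2$ and $q_0\le1$, the factor $1-q_0/W$ is at least $1/2$. Letting $\epsilon\downarrow0$, which is legitimate because the inequality holds for every $\epsilon>0$ and the bound is continuous in $\epsilon$, gives
\[
 \kl\le\frac{2\gamma^2}{Wp}.
\]
Rearranging produces the stated requirement $n\ge Wp\,\kl(1-\beta\|\delta)/(2\gamma^2)$.

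The main obstacle is obtaining the constant $1/2$ without extra assumptions such as $p\ge1/2$ or $\gamma\le p/2$. The chi-square bound evaluated at the null $b\approx p/W$ avoids any dependence on $q_1$, and the factor $(1-b)^{-1}\le2$ comes from $W\ge2$. The limiting step is the one delicate point: if $p$ were the lowest admissible value the null could sit on the boundary, but validity is only required for $F(\tau)<p$, so the limit argument is needed.
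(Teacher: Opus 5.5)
Your proposal is correct and follows the paper's proof. Both arguments reduce each observation to $W\cdot\mathrm{Bernoulli}(F(\tau)/W)$, apply data processing to the certification event, and bound the per-observation divergence by the chi-square bound at the null, using $1-p/W\ge1/2$. The only difference is at the boundary $F(\tau)=p$. You let nulls $F(\tau)=p-\epsilon$ tend to $p$ in the final inequality. The paper instead first extends validity to $F(\tau)=p$, using continuity of the certification probability in $F$. Either route is valid.
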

\begin{proof}
Put $F=F(\tau)$. The retained observations have law $W\cdot\mathrm{Bernoulli}(F/W)$. For fixed $n$, the probability of certification is continuous in $F$, also for a randomized rule, because it is a finite sum of polynomial sampling probabilities with fixed decision probabilities. Validity for $F<p$ therefore implies validity at the boundary $F=p$. Data processing gives
\begin{align*}
 \kl(1-\beta\|\delta)
 &\le n\kl\!\left(\frac{p+\gamma}{W}\middle\|\frac pW\right)\\
 &\le\frac{n\gamma^2}{Wp(1-p/W)}
 \le\frac{2n\gamma^2}{Wp}.
\end{align*}
Rearrangement proves the result.
\end{proof}

\section{Recovery and the information carried by failures}\label{app:recovery}
The first subsection proves the count-recovery proposition behind Figure~\ref{fig:two_action_recovery}. The remaining calculations explain the exact information ratio and its on-policy limit; they are not additional assumptions for using the recovery recipe.

\subsection{Proof of Proposition~\ref{prop:recovery}}\label{app:countproof}
For arbitrary target-action tail probabilities $g=(g_1,\ldots,g_K)$, the full-data likelihood has the form
\[
 h(\calD_n)\prod_{j=1}^K
 g_j^{N_j-S_j}(1-g_j)^{S_j},
\]
where $h$ contains logging probabilities and the fixed dummy outcome law and does not depend on $g$. The factorization criterion proves sufficiency of $(N,S)$. The vector $N$ has a known multinomial law independent of $g$; conditional on $N$, the counts $N_j-S_j$ are independent $\mathrm{Bin}(N_j,g_j)$. Exact one-sided binomial confidence bounds with error $\delta_*/K$ therefore give valid simultaneous inference; a candidate with $N_j=0$ receives upper bound one. To obtain the proved upper bound~\eqref{eq:upper}, put $E_j=N_j-S_j$ and reconstruct
\[
 \widehat G_j=\frac{WE_j}{n},\qquad
 \widehat\sigma_j^2=\frac{W^2(E_j-E_j^2/n)}{n-1}.
\]
The variance formula uses $n$ observations taking values in $\{0,W\}$. Applying the same Exc-EB bound therefore recovers~\eqref{eq:upper}. This sufficiency argument does not assert optimal constants for every normalized interval implementation.

\subsection{Exact comparison of the induced experiments}\label{app:information}
For distinct alternatives $j,k$, let $D_F$ and $D_E$ be their one-observation KL divergences after retaining, respectively, the joint CDF and joint exceedance indicators. Then
\begin{equation}\label{eq:reducedkl}
\begin{aligned}
 D_F&=\frac{2\gamma}{W}\log\frac{p+\gamma}{p-\gamma},\\
 D_E&=\frac{2\gamma}{W}\log\frac{u+\gamma}{u-\gamma},\\
 \KL(P_j\|P_k)&=D_F+D_E.
\end{aligned}
\end{equation}

The calculation of $D_F$ is in Appendix~\ref{app:penalty}. The joint exceedance vector has mass $(u-\gamma)/W$ at $We_j$, mass $(u+\gamma)/W$ at $We_k$ for $k\ne j$, and a common zero mass. The same two-category calculation gives $D_E$ in~\eqref{eq:reducedkl}. For full logs, the contribution of the two changed actions is
\[
 \KL(P_j\,\|\,P_k)
 =\frac1W\{\kl(u-\gamma\,\|\,u+\gamma)
          +\kl(u+\gamma\,\|\,u-\gamma)\}.
\]
Separating success and exceedance terms gives $D_F+D_E$. This equality is a property of the construction and does not assert a general additive decomposition for statistical reductions.

\paragraph{Two tail regimes.}
For fixed $p$ and $\gamma\to0$, expansion of the logarithms gives $D_F\sim4\gamma^2/(Wp)$ and $D_E\sim4\gamma^2/(Wu)$, hence $D_E/D_F\to p/u$. When instead $\gamma=cu$ with fixed $c\in(0,1/2]$ and $u\to0$,
\[
 D_E=\frac{2cu}{W}\log\frac{1+c}{1-c},\qquad
 D_F\sim\frac{4c^2u^2}{Wp}.
\]
Thus the inverse-tail-probability information ratio also holds in the relative-margin regime, where a fixed-$p$ local expansion alone would not suffice.

\subsection{When the information gap disappears}\label{app:onpolicy}
For a target action logged with probability $q\in(0,1]$, compare tail probabilities $u-\gamma$ and $u$. The CDF-only, exceedance-only and full-data divergences are, respectively,
\[
 \kl \Big(q \big(p+\gamma \big) \,\|\, qp \Big),\quad
 \kl \Big(q \big(u-\gamma \big)\,\|\,qu \Big),\quad
 q\,\kl \big(u-\gamma \,\|\, u \big).
\]
For fixed $p,q$ and $\gamma\to0$, the ratio of exceedance to CDF divergence converges to $p(1-qp)/\{u(1-qu)\}$. This equals one at $q=1$ and tends to $p/u$ as $q\to0$. This is a separate one-candidate comparison, not a claim that the family with $K\ge2$ permits on-policy logging for every candidate simultaneously.

\section{Tail overlap and shared actions}\label{app:tailoverlap}\label{sec:tailoverlap}
This extension supports the scope discussion in Section~\ref{sec:boundary}. It gives a sufficient certificate bound for general fixed candidates and a matching necessity result on separated families sharing actions. The main CDF separation does not depend on this extension. Retain $u=1-p$, $\delta_*=\min\{\delta,\beta/2\}$, and $L_*=\log(2K/\delta_*)$ from the main text.

\subsection{Conditions and statement}\label{app:sharedstatement}

The preceding family has one private action per candidate. For general candidates, define $g(x,a)=\PP(Y>\tau\mid X=x,A=a)$ and the tail second moment
\[
 V_j=\E_{\pi_0}\big[w_j(X,A)^2Z \big]
     =\E \big[w_j(X,A)^2g(X,A) \big].
\]
Here and below, expectations of functions of $(X,A)$ use the logger's fixed context/action law. For discrete actions, equivalently,
\[
 V_j=\E_X\sum_{a:\pi_0(a\mid X)>0}
       \frac{\pi_j(a\mid X)^2}{\pi_0(a\mid X)}g(X,a).
\]
It is at most $WG_j$ and excludes weight variation on non-tail outcomes. A global range term can nevertheless remain in finite-sample bounds. The second moment $V_j$ measures how strongly rare violations are amplified by reweighting. The sufficient bound below has a variance term and a range term; reducing $V_j$ helps sample size only while the former is the bottleneck. For the converse, the normalized tail cross moment defined below measures how much a perturbation helping candidate $j$ also helps candidate $k$. Requiring this transfer to be less than one half lets $j$ become safe while every other candidate remains unsafe.

\begin{theorem}[Tail overlap and shared-action necessity]\label{thm:instance}
Under Assumption~\ref{ass:overlap}, fix $K\ge1$ candidates, $0<\delta<1-\beta<1$, and $L_K=\log\{2/\min(\delta_*/K,\beta/4)\}$, equal to $L_*$ when $K\ge2$.
\emph{(a) Sufficiency.} Simultaneous empirical-Bernstein exceedance upper bounds at levels $\delta_*/K$ certify a candidate $s$ of margin $\gamma_s=u-G_s>0$ with probability at least $1-\beta/2$ whenever
\[
 n\ge\max\!\left\{\frac{32V_sL_K}{\gamma_s^2},
                         \frac{11WL_K}{\gamma_s}\right\}.
\]
The rule that outputs any certified candidate is $\delta$-valid. If $s$ is uniquely safe, its output probability is at least $1-\beta$.
\emph{(b) Necessity.} Let $K\ge2$ and let $P_0$ be a binary-outcome base model, with $g_0(x,a)=\PP_{P_0}(Y>\tau\mid X=x,A=a)$. For any outcome model $P$, write $G_j(P)$ and $V_j(P)$ for candidate $j$'s risk and tail second moment under $P$. Assume $g_0\le3/4$ wherever some candidate weight is positive, $G_j(P_0)=u+\gamma$, and
\begin{align*}
 V_j^0&=\E \big[w_j^2g_0 \big]\ge3\gamma W,\\
 C_{kj}&:=\E \big[w_kw_jg_0 \big]\le\zeta V_j^0
\end{align*}
for every ordered pair $k\ne j$, where $\gamma>0$ and $0\le\zeta<1/2$. There exist alternatives $P_j$ with unique safe candidate $j$, margin $\gamma$, and $V_j(P_j)\in[V_j^0/3,V_j^0]$. Every rule $\delta$-valid on $\{P_0,P_1,\ldots,P_K\}$ and with output power $1-\beta$ under each $P_j$ requires
\[
 n\ge\frac1{16\gamma^2}\max\!\left\{\begin{aligned}
 &\max_jV_j^0\,\kl(1-\beta \,\|\,\delta),\\
 &V_H \big[(1-\beta)\log K-\log2 \big]
 \end{aligned}\right\},
\]
where $V_H=(K^{-1}\sum_j1/V_j^0)^{-1}$.
\emph{(c)} For comparable $V_j^0\asymp V$, fixed $0<\beta\le1/4$, and $0<\delta\le1/4$, the bounds match at order $V\log(K/\delta)/\gamma^2$ on this family.
\end{theorem}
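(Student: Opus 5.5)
Part (a) follows the sufficiency argument for Theorem~\ref{thm:boundary}(a), with the crude bound $V_s\le Wu$ replaced by the tail second moment. For each $j$ we apply the upper-bound version of Lemma~\ref{lem:eb} to the IID summands $w_j(X_i,A_i)Z_i\in[0,W]$ at level $\alpha=\delta_*/K$. A union bound then gives simultaneous validity with failure probability at most $\delta_*\le\delta$, so any certified output is safe outside this event. For candidate $s$ we use $\mathrm{Var}(w_sZ)\le\E[w_s^2Z]=V_s$, the gap $h=\gamma_s\le1$, and $\eta=\beta/2$. The logarithmic factor is then $\log\{2/\min(\delta_*/K,\beta/4)\}=L_K$, which equals $L_*$ when $K\ge2$ because $\delta_*/K\le\beta/4$. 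The displayed sample-size condition is exactly the hypothesis of the lemma, so $s$ is certified with probability at least $1-\beta/2$. If $s$ is uniquely safe, then outside the validity failure it is the only certifiable candidate, and the output power is at least $1-\beta/2-\delta_*\ge1-\beta$. The lemma needs $W\ge1$; if some $W<1$ is admissible, we first enlarge $W$ to $\max\{W,1\}$.

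For part (b), the construction is a tilt of $g_0$ along candidate $j$'s weight: set $g_j=g_0(1-\lambda_jw_j)$ with $\lambda_j=2\gamma/V_j^0$. Then $G_j(P_j)=G_j(P_0)-\lambda_jV_j^0=u-\gamma$, so $j$ is safe with margin $\gamma$. For $k\ne j$ we have $G_k(P_j)=u+\gamma-\lambda_jC_{kj}\ge u+\gamma-2\gamma\zeta>u$ because $\zeta<1/2$, so every other candidate stays unsafe. For this to define a valid model we need $\lambda_jw_j\le2\gamma W/V_j^0\le2/3$, which holds by the assumption $V_j^0\ge3\gamma W$. This also gives $g_j\ge g_0/3$, hence $V_j(P_j)=\E[w_j^2g_j]\in[V_j^0/3,V_j^0]$. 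The per-observation divergence is
\[
\KL(P_j\|P_0)=\E\,\kl\bigl(g_j\,\|\,g_0\bigr)\le\E\frac{(\lambda_jw_jg_0)^2}{g_0(1-g_0)}\le4\lambda_j^2\E[w_j^2g_0]=\frac{16\gamma^2}{V_j^0},
\]
where the second inequality uses $1-g_0\ge1/4$. The confidence term follows as in Theorem~\ref{thm:boundary}(a): $P_0$ has no safe candidate, so $P_0(\Psi=j)\le\delta$, while $P_j(\Psi=j)\ge1-\beta$. Data processing gives $\kl(1-\beta\|\delta)\le16n\gamma^2/V_j^0$, and we take the best $j$, namely $\max_jV_j^0$. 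For the search term, Fano's inequality with $P_0^{\otimes n}$ as the reference law gives $(1-\beta)\log K-\log2\le K^{-1}\sum_j16n\gamma^2/V_j^0=16n\gamma^2/V_H$.

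The step I expect to be the main obstacle is the tilt bookkeeping. Nonnegativity of $g_j$, the bound $1-g_0\ge1/4$ (assumed only where weights are positive, which is the only place the tilt acts), the range of $V_j(P_j)$, and the requirement that the cross moments keep competitors unsafe all have to hold at once. The constants $3$, $3/4$ and $1/2$ in the hypotheses appear tuned for exactly this, so I would verify each inequality pointwise in $(x,a)$. The chi-square bound on $\kl$ also needs $g_0>0$ wherever $g_j\neq g_0$; where $g_0=0$ the tilt is zero and that region contributes nothing to the divergence.

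Part (c) combines the two bounds. When $V_j^0\asymp V$, both $V_H$ and $\max_jV_j^0$ are of order $V$. For fixed $\beta,\delta\le1/4$ we use $\kl(1-\beta\|\delta)\ge(1-\beta)\log(1/\delta)-\log2$, and the lower bound becomes of order $V\log(K/\delta)/\gamma^2$, as in the order statement for Theorem~\ref{thm:boundary}. For the upper bound, take $s=j$ under $P_j$, so that $V_s\le V_j^0\lesssim V$ and $L_K\asymp\log(K/\delta)$. The range term $11WL_K/\gamma$ is dominated by the variance term because $V\ge3\gamma W$ implies $WL_K/\gamma\le VL_K/(3\gamma^2)$. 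The two bounds therefore match up to constants.
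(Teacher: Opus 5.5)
Your proposal is correct and follows the paper's proof essentially step for step. Part (a) applies Lemma~\ref{lem:eb} to $w_sZ_i$ with $\alpha=\delta_*/K$ and $\eta=\beta/2$. Part (b) uses the same tilt $g_0(1-2\gamma w_j/V_j^0)$, the chi-square bound giving $16\gamma^2/V_j^0$, data processing against $P_0$, and Fano's inequality with reference $P_0^{\otimes n}$. Part (c) notes that $V_j^0\ge3\gamma W$ makes the variance term dominate. Your $W<1$ aside is unnecessary: support containment gives $\E_{\pi_0}w_j=1$, so $W\ge1$ holds automatically.
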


Disjoint supports are the special case $C_{kj}=0$. The proof below sets $g^{(j)}=g_0(1-2\gamma w_j/V_j^0)$: the cross-moment condition ensures that improving candidate $j$ does not make another candidate safe. It is a sufficient separation condition, not a universal transition.

The construction in Appendix~\ref{app:affine} shows that necessity can cover $K\gg W$. Small cross moments alone do not verify the equal-baseline-risk assumptions in applications. The sufficient bound in part (a) applies to general lists; the necessity statement is limited to the constructed families and does not cover arbitrary lists containing duplicates.

\subsection{Proof of the shared-action theorem}\label{app:sharedproof}
\paragraph{Theorem~\ref{thm:instance}(a).}
Apply Lemma~\ref{lem:eb} to the summands $w_s(X_i,A_i)Z_i$, with variance at most $V_s$, candidate error $\alpha=\delta_*/K$, power failure $\eta=\beta/2$, and gap $h=\gamma_s$. The logarithmic term is $L_K$, equal to $L_*$ for $K\ge2$ as shown in Appendix~\ref{app:boundary}. The simultaneous-validity event has failure at most $\delta_*$. Hence the safe candidate is certified with failure at most $\beta/2$, and is output with failure at most $\beta$ when uniquely safe.

\paragraph{Theorem~\ref{thm:instance}(b).}
The assumptions give $3\gamma W\le V_j^0\le W(u+\gamma)$, hence $\gamma\le u/2$. Write $\kappa_j=2\gamma/V_j^0$. The condition $V_j^0\ge3\gamma W$ implies $0\le\kappa_jw_j\le2/3$. Keep the law of $(X,A)$ fixed and define
\[
 g^{(j)}(x,a)=g_0(x,a)\{1-\kappa_jw_j(x,a)\}.
\]
It lies in $[g_0/3,g_0]$, so it defines a valid binary-outcome model. The target means become
\begin{align*}
 G_j(P_j)&=u+\gamma-\kappa_jV_j^0=u-\gamma,\\
 G_k(P_j)&=u+\gamma-\kappa_jC_{kj}
            \ge u+(1-2\zeta)\gamma>u\quad(k\ne j).
\end{align*}
Thus $j$ is uniquely safe. Also $V_j(P_j)=\E[w_j^2g^{(j)}]\in[V_j^0/3,V_j^0]$. The laws agree wherever $w_jg_0=0$. Elsewhere $0<g_0\le3/4$, so Bernoulli KL and $1-g_0\ge1/4$ give
\[
 \KL(P_j \,\|\,P_0)
 \le\E\frac{(\kappa_jw_jg_0)^2}{g_0(1-g_0)}
 \le4\kappa_j^2V_j^0
 =\frac{16\gamma^2}{V_j^0}.
\]
where the integrand is defined as zero on the agreement set. Under $P_0$ all candidates are unsafe. Data processing on $\{\Psi=j\}$ gives
\[
 \kl(1-\beta \,\|\, \delta)\le\frac{16n\gamma^2}{V_j^0}
 \quad\text{for every }j.
\]
For uniform $J\in\{1,\ldots,K\}$ with $\calD_n\mid J=j\sim P_j^{\otimes n}$, Fano and the reference law $P_0^{\otimes n}$ give
\[
 (1-\beta)\log K-\log2
 \le\frac1K\sum_j\KL(P_j^{\otimes n}\,\|\, P_0^{\otimes n})
 \le\frac{16n\gamma^2}{V_H}.
\]
Combining these bounds proves part (b).

\paragraph{Theorem~\ref{thm:instance}(c).}
Under $P_j$, part (a) applies with $V_j(P_j)\le V_j^0$ and margin $\gamma$. Since $V_j^0\ge3\gamma W$, the variance sample-size term dominates the range term. Comparable $V_j^0$ imply comparable $V_H$, and the order argument from Appendix~\ref{app:boundary} applies. This proves matching order on the constructed family, not on every instance sharing its second moments.

\subsection{A concrete family over five actions}\label{app:affine}
This example shows that the shared-action result allows many more candidates than logged actions. It supplies the shared-action construction used in Appendix~\ref{app:infostudy}.

\begin{corollary}[Many routers sharing five actions]\label{cor:affine}
Identify the five actions with $\mathbb F_5=\{0,1,2,3,4\}$, with arithmetic modulo five, log them uniformly, and take contexts uniform on $\mathbb F_5^2$. For $(a,b,c)\in\mathbb F_5^3$ and $x=(x_1,x_2)$, the 125 deterministic routers $f_{a,b,c}(x)=ax_1+bx_2+c\pmod5$ have pairwise agreement at most $1/5$. With constant $g_0=u+\gamma\le3/4$ and $0<\gamma\le u/2$, they satisfy Theorem~\ref{thm:instance}(b): $W=5$, $V_j^0=5(u+\gamma)$, and $C_{kj}/V_j^0\le1/5$. Every fixed subset of at least two routers retains the guarantee.
\end{corollary}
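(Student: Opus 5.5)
The plan is to verify the hypotheses of Theorem~\ref{thm:instance}(b) directly for the affine family, and then read off the subset claim from the fact that the conditions are pairwise. Logging is uniform over five actions and each router is deterministic, so $w_j(x,a)=5\cdot\1\{a=f_j(x)\}\in\{0,5\}$, which gives $W=5$. Because $g_0\equiv u+\gamma$ is constant, $G_j(P_0)=\E[w_j g_0]=u+\gamma$, since $\E w_j=1$. Similarly $V_j^0=\E[w_j^2]g_0=25\cdot\tfrac15(u+\gamma)=5(u+\gamma)$.

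The remaining conditions go as follows.
\begin{itemize}
\item \emph{Variance condition.} The requirement $V_j^0\ge3\gamma W=15\gamma$ reduces to $u+\gamma\ge3\gamma$, that is $\gamma\le u/2$, which is assumed.
\item \emph{Bound on $g_0$.} The condition $g_0\le3/4$ is assumed.
\item \emph{Cross moment.} Here $C_{kj}=\E[w_kw_j]\,g_0=25\,\PP(A=f_j(X)=f_k(X))\,g_0=5(u+\gamma)\,\PP_X(f_j(X)=f_k(X))$. Hence $C_{kj}/V_j^0$ equals the agreement probability of the two routers under the uniform context.
\end{itemize}

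The key step is therefore the agreement bound. For distinct $(a,b,c)\ne(a',b',c')$, the routers agree exactly when $(a-a')x_1+(b-b')x_2+(c-c')=0$ over $\mathbb F_5$. There are two cases.
\begin{itemize}
\item If $(a-a',b-b')\ne(0,0)$, this is a nontrivial affine equation in $x\in\mathbb F_5^2$. Its solution set is an affine line with $5$ of the $25$ points, so the agreement is $1/5$.
\item If $(a,b)=(a',b')$, then $c\ne c'$ and the equation has no solutions, so the agreement is $0$.
\end{itemize}
This gives $\zeta=1/5<1/2$ and establishes the pairwise-agreement claim. The 125 routers are distinct, since the map $(a,b,c)\mapsto f_{a,b,c}$ is injective: agreement $0$ or $1/5$ is never $1$.

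For the subset statement, every hypothesis of Theorem~\ref{thm:instance}(b) is either per-candidate ($G_j$, $V_j^0$) or a pairwise bound on $C_{kj}$. All of these are inherited by any fixed subset of size $K\ge2$, so the theorem applies with that $K$.

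The main obstacle is bookkeeping rather than difficulty: the constants must be tracked so that the ratio $C_{kj}/V_j^0$ is exactly the agreement probability. This relies on the factor $W^2\cdot(1/W)$ cancelling consistently. One should also note that the context distribution enters only through the uniform counting of solutions of a linear equation over $\mathbb F_5^2$.
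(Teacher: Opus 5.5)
Your proposal is correct and follows the paper's proof essentially step for step. You compute $w_j=5\,\1\{A=f_j(X)\}$, obtain $V_j^0=5(u+\gamma)$ and $C_{kj}=5(u+\gamma)\,\PP_X\{f_k(X)=f_j(X)\}$, and bound the agreement by the same case split: equal slopes give no solutions, distinct slopes give one line with five points. Your explicit checks of $G_j(P_0)=u+\gamma$ via $\E w_j=1$ and of inheritance by subsets fill in steps the paper leaves implicit.
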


\paragraph{Proof.}
Enumerate these routers as $f_1,\ldots,f_{125}$. Two distinct affine functions with equal slopes have no agreements. If their slopes differ, their equality is one nontrivial linear equation on $\mathbb F_5^2$, which has exactly five solutions. Uniform logging gives $w_j=5\1\{A=f_j(X)\}$, and constant $g_0$ yields
\[
 V_j^0=5(u+\gamma),\qquad
 C_{kj}=5(u+\gamma)\PP_X\{f_k(X)=f_j(X)\}.
\]
Thus $C_{kj}/V_j^0\in\{0,1/5\}$ and $V_j^0\ge3\gamma W$ when $\gamma\le u/2$. For the illustrative choice $u=0.05$, $\gamma=0.0125$, these quantities are $V_j^0=0.3125$ and $3\gamma W=0.1875$. The perturbation reduces the conditional tail probability from $0.0625$ to $0.0375$ on router $j$'s chosen action. Router $j$ therefore has tail probability $0.0375$, and every other router has tail probability $0.0575$ or $0.0625$. These are exact feasibility calculations, not experimental observations.
\section{Additional certificate constructions}\label{app:certificates}
These supporting constructions complement the recovery workflow in Section~\ref{sec:repair}. They cover learned policy mixtures, the CDF and betting implementations used in the experiments, and valid use of normalization information; the main information-separation result does not require all of them.

\subsection{Posterior-uniform exceedance certificates}\label{app:posterior_certificate}
This extends the fixed-candidate setting of Section~\ref{sec:repair} to a policy drawn from a data-selected distribution over candidates; the guarantee concerns the resulting pooled outcome law.

For this appendix, replace the finite list by a family $\{\pi_\theta:\theta\in\Theta\}$ fixed independently of the log. Require $\pi_\theta(\cdot\mid x)\ll\pi_0(\cdot\mid x)$ for every $x,\theta$, and Assumption~\ref{ass:overlap} with $j$ replaced by every $\theta\in\Theta$. Define $w_\theta$, $G_\theta$, and $\widehat G_\theta$ by the formulas in Section~\ref{sec:setup} with the same replacement, and let $Q_p(\theta)=\inf\{t:G_\theta(t)\le1-p\}$.

A prior $\pi$ over $\Theta$ is fixed independently of $\calD_n$; a posterior $\rho\ll\pi$ may depend on all of $\calD_n$. The deployed randomized policy draws $\theta\sim\rho$ per decision, so its outcome law has exceedance, CDF, and quantile
\begin{align*}
 \bar G_\rho(t)&=\E_{\theta\sim\rho}G_\theta(t),\\
 \bar F_\rho(t)&=1-\bar G_\rho(t),\\
 \bar Q_p(\rho)&=\inf\big\{t:\bar G_\rho(t)\le 1-p\big\},
\end{align*}
and we write $\widehat G_\rho(t)=\E_{\theta\sim\rho}\widehat G_\theta(t)$. For a finite list $\Theta=\{\theta_1,\ldots,\theta_K\}$ with positive prior masses, a point posterior $\rho=\delta_\theta$ recovers a deterministic choice among candidates, with $\KL(\delta_{\theta_j}\|\pi)=\log(1/\pi(\theta_j))$, which equals $\log K$ under a uniform prior over $K$ candidates. Reporting uses a nonempty finite predeclared grid $\mathcal T\subset[0,B)$; the trivial bound $\bar Q_p(\rho)\le B$ is always available.

\paragraph{Why the pooled quantile.} $\bar Q_p(\rho)$ is the quantile of the outcomes the operator actually observes when the randomized policy is deployed; it is the natural target of a service-level constraint. The posterior average of per-policy quantiles, $\E_\rho Q_p(\theta)$, is a different object because quantiles are nonlinear; we do not target it.

\begin{theorem}[Selection-valid exceedance envelope]\label{thm:exc}
Under Assumption~\ref{ass:overlap}, let $\Lambda\subset(0,2/W)$ be a nonempty finite predeclared set. With probability at least $1-\delta$, simultaneously for every data-dependent posterior $\rho\ll\pi$, every $t\in\mathcal T$ and every $\lambda\in\Lambda$,
\begin{equation}\label{eq:exc}
 \bar G_\rho(t)\le
 U_{\rho,\lambda}(t):=
 \frac{\widehat G_\rho(t)+\dfrac{\KL(\rho \,\|\, \pi)+\log\frac{|\mathcal T|\,|\Lambda|}{\delta}}{\lambda n}}{1-\lambda W/2}.
\end{equation}
Let $U_\rho(t)=\min_{\lambda\in\Lambda}U_{\rho,\lambda}(t)$. Then, on the same event,
\begin{equation}\label{eq:quant}
 \bar Q_p(\rho)\le \widehat q^{\,\rm cert}_p(\rho):=\min\big\{t\in\mathcal T: U_\rho(t)\le 1-p\big\},
\end{equation}
whenever the set is nonempty (otherwise the certificate is \emph{inconclusive}).
\end{theorem}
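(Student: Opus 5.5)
The plan is the standard PAC-Bayes route: an exponential-moment bound for each fixed $(\theta,t,\lambda)$, Donsker--Varadhan change of measure to handle every posterior $\rho$ at once, and a union bound over the finite grid $\mathcal T\times\Lambda$. Fix $t\in\mathcal T$ and $\lambda\in\Lambda$. For each $\theta$, set $X_i(\theta)=w_\theta(X_i,A_i)\1\{Y_i>t\}\in[0,W]$. These are i.i.d.\ with mean $G_\theta(t)$ and second moment $\E X_i^2\le W G_\theta(t)$.

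\textbf{Step 1: one-sample exponential bound for nonnegative variables.} Use the inequality $e^{-x}\le 1-x+x^2/2$ for $x\ge0$. With $x=\lambda X$ this gives
\[
\E e^{-\lambda X}\le 1-\lambda G+\tfrac{\lambda^2}{2}\E X^2\le \exp\big(-\lambda G+\tfrac{\lambda^2 W}{2}G\big).
\]
Hence, for each fixed $\theta$,
\[
\E\exp\Big(n\lambda\big[(1-\lambda W/2)G_\theta(t)-\widehat G_\theta(t)\big]\Big)\le 1 .
\]
This is the nonnegative-loss bound of \citet{maurer2003bound} and \citet{gabbianelli2024importance}. It is exactly why the denominator $1-\lambda W/2$ appears, and the condition $\lambda<2/W$ keeps that denominator positive.

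\textbf{Step 2: change of measure.} Integrate the bound from Step 1 against the prior $\pi$, which is fixed independently of $\calD_n$, and swap the order of integration by Tonelli. Markov's inequality then shows that, with probability at least $1-\delta'$,
\[
\E_{\theta\sim\pi}\exp\big(n\lambda[(1-\lambda W/2)G_\theta-\widehat G_\theta]\big)\le 1/\delta'.
\]
By Donsker--Varadhan, simultaneously for all $\rho\ll\pi$,
\[
n\lambda\,\E_\rho\big[(1-\lambda W/2)G_\theta-\widehat G_\theta\big]\le \KL(\rho\|\pi)+\log(1/\delta').
\]
Because $G_\theta$ and $\widehat G_\theta$ enter linearly, the left side equals $n\lambda[(1-\lambda W/2)\bar G_\rho(t)-\widehat G_\rho(t)]$. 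Dividing through and rearranging gives~\eqref{eq:exc}.

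\textbf{Step 3: union bound.} Take $\delta'=\delta/(|\mathcal T||\Lambda|)$ and take a union bound over the finite predeclared set $\mathcal T\times\Lambda$. On the resulting event, which has probability at least $1-\delta$, the bound holds for every $\rho$, $t$ and $\lambda$ simultaneously. Since this holds uniformly over $\rho$, it covers data-dependent posteriors. Taking the minimum over $\lambda$ on the same event gives $\bar G_\rho(t)\le U_\rho(t)$.

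\textbf{Step 4: quantile statement.} If $t^\star=\widehat q^{\,\rm cert}_p(\rho)$ exists, then $\bar G_\rho(t^\star)\le U_\rho(t^\star)\le 1-p$. So $t^\star$ belongs to the set $\{t:\bar G_\rho(t)\le1-p\}$, and its infimum $\bar Q_p(\rho)$ is at most $t^\star$. Atoms cause no difficulty, because the definition of $\bar Q_p$ uses the right-continuous exceedance function evaluated at grid points directly. Measurability of $\theta\mapsto G_\theta,\widehat G_\theta$ is assumed, as is implicit in the setting.

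\textbf{Main obstacle.} The step needing the most care is Step 1: obtaining the one-sided exponential-moment bound with the exact factor $1-\lambda W/2$, using only nonnegativity and the bound $X\le W$ (so that $\E X^2\le WG$), rather than a two-sided Hoeffding or Bernstein argument. A secondary point is that the prior must be independent of the data for the Tonelli and Markov step in Step 2 to be valid.
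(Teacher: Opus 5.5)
Your proof is correct and follows essentially the same route as the paper's proof. Both use the bound $e^{-x}\le 1-x+x^2/2$ together with $\E U_i^2\le W G_\theta(t)$, Tonelli/Fubini against the data-independent prior, Markov's inequality with a union bound over $\mathcal T\times\Lambda$, Donsker--Varadhan with linearity in $\theta$, and the same direct inversion at the first grid crossing; the only difference is that you apply the union bound after the change of measure rather than before it, which changes nothing.
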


The statistic $\widehat G$ is the complementary-CDF estimator of \citet{huang2022offarxiv}, who bound it for a fixed policy (their Lemma~4.4). Theorem~\ref{thm:exc} makes the bound uniform over data-dependent posteriors. It follows from the one-sided bound for nonnegative variables \citep{maurer2003bound} in its PAC-Bayes form \citep{thiemann2017strongly,gabbianelli2024importance}; Appendix~\ref{app:exc} gives a short proof. The summand $w_\theta\1\{Y>t\}$ is nonnegative, so its lower tail needs no range condition, and its second moment is at most $W\,G_\theta(t)$. The second-moment proxy is therefore linear in $G_\theta$ and averages to the \emph{pooled} exceedance. Continuous optimization in $\lambda$, or a sufficiently rich predeclared grid up to constants, gives the ``fast-rate'' form
\begin{equation}\label{eq:scale}
 \bar G_\rho(t)\lesssim \widehat G_\rho(t)+\sqrt{\frac{2W\widehat G_\rho(t)\,C_\rho}{n}}+\frac{2WC_\rho}{n},
\end{equation}
with $C_\rho=\KL(\rho \,\|\, \pi)+\log(|\mathcal T|\,|\Lambda|/\delta)$. Near the target $\widehat G_\rho\approx1-p$, so the radius scales with the tail mass rather than a constant.

\paragraph{Properties.} The certificate reads off a crossing of the exceedance envelope, so it needs no density or inverse-slope condition and remains meaningful at atoms. Because~\eqref{eq:exc} is simultaneous over $\rho$, the posterior may be chosen by any rule on the same data, with no holdout. A known tail-specific weight bound can sharpen this expression (Remark~\ref{rem:tailW}); otherwise the global $W$ still controls it.

\paragraph{Other certificate choices.}
We call this envelope \emph{Exc-PB}. Appendix~\ref{app:betting} gives \emph{Exc-Bet}, a variance-adaptive betting certificate for a fixed candidate list with no $\lambda$-grid cost; its reported power scaling is empirical. Proposition~\ref{prop:pbpower} proves a worst-case rate for Exc-PB with its tuning-grid cost, while the matched rate without that cost uses established Exc-EB. Appendix~\ref{app:cdf} gives the complementary CDF-PB construction. Its bulk second moment can be much larger than the tail second moment; Section~\ref{sec:boundary} identifies an observation model in which this information difference has an unavoidable cost.


\subsection{Proof of Theorem~\ref{thm:exc}}\label{app:exc}
The proof makes explicit the two ingredients behind the preceding envelope: a tail-sensitive exponential bound and a change of measure that permits same-data policy selection.

Fix $\theta$, $t$ and $\lambda\in(0,2/W)$, and let $U_i=w_\theta(X_i,A_i)\1\{Y_i>t\}\ge0$ with $\E U_i=G_\theta(t)$. Since $e^{-x}\le1-x+x^2/2$ for $x\ge0$,
\[
 \E e^{-\lambda U_i}\le 1-\lambda\E U_i+\tfrac{\lambda^2}{2}\E U_i^2\le\exp\Bigl(-\lambda\E U_i+\tfrac{\lambda^2}{2}\E U_i^2\Bigr).
\]
Assumption~\ref{ass:overlap} gives $\E U_i^2\le W\E U_i=WG_\theta(t)$. By independence,
\[
 \E\exp\Bigl\{\lambda n\bigl(G_\theta(t)-\widehat G_\theta(t)\bigr)-\tfrac{\lambda^2nW}{2}G_\theta(t)\Bigr\}\le1.
\]
Denote the exponent by $f_{t,\lambda}(\theta)$. Integrating over $\pi$ (Fubini; $\pi$ is independent of the data), applying Markov's inequality and a union bound over $\mathcal T\times\Lambda$, we obtain an event of probability at least $1-\delta$ on which $\E_{\theta\sim\pi}e^{f_{t,\lambda}(\theta)}\le|\mathcal T||\Lambda|/\delta$ for all $(t,\lambda)$. On this event, the Donsker--Varadhan inequality gives, for every $\rho\ll\pi$,
\[
 \lambda n\Bigl(\bar G_\rho(t)-\widehat G_\rho(t)\Bigr)-\tfrac{\lambda^2nW}{2}\bar G_\rho(t)\le\KL(\rho \,\|\, \pi)+\log\tfrac{|\mathcal T|\,|\Lambda|}{\delta},
\]
using linearity of $\theta\mapsto G_\theta(t)$ under $\E_\rho$. Rearranging and dividing by $\lambda n(1-\lambda W/2)>0$ gives~\eqref{eq:exc}. For~\eqref{eq:quant}, if $t^\ast=\widehat q_p^{\rm cert}(\rho)$ exists, then $\bar G_\rho(t^\ast)\le U_\rho(t^\ast)\le1-p$, so $\bar F_\rho(t^\ast)\ge p$ and $\bar Q_p(\rho)\le t^\ast$.

\begin{remark}[Tail-local overlap]\label{rem:tailW}
The proof uses Assumption~\ref{ass:overlap} only through $\E[w_\theta^2\1\{Y>t\}]\le W\,G_\theta(t)$. Any known bound $0\le W_t\le W$ with $w_\theta(X,A)\le W_t$ almost surely on $\{Y>t\}$ for all $\theta$ may replace $W$ in~\eqref{eq:exc}, retaining the same paid grid $\Lambda$. A larger proposed bound can first be replaced by its minimum with $W$. This improvement requires the tail-local bound to be known independently of the certification sample. Without it, using the global $W$ can still make large weights outside the tail costly through the bound or its range term.
\end{remark}

\subsection{Worst-case power of the PAC-Bayes envelope}\label{app:pbpower}
This result quantifies the power of Exc-PB as a supporting certificate for Section~\ref{sec:experiments}; it includes the tuning-grid cost and is separate from the matched Exc-EB rate in Theorem~\ref{thm:boundary}.

\begin{proposition}[Finite-list PAC-Bayes power]\label{prop:pbpower}
Let $K\ge1$ candidates satisfy Assumption~\ref{ass:overlap}, let $p\in(0,1)$, $u=1-p$, $0<\gamma\le u/2$, and $\delta,\beta\in(0,1/2)$. Use a uniform prior, $\mathcal T=\{\tau\}$, and the grid
\[
 \Lambda=\Big\{2^{-k}/(2W):k=0,\ldots,\lceil\log_2n\rceil\Big\}.
\]
Write $G_j=G_j(\tau)$ and $\widehat G_j=\widehat G_j(\tau)$ in this proposition, and set $C^*=\max\{\log(K|\Lambda|/\delta),\log(1/\beta)\}$. The rule selecting any candidate certified by Theorem~\ref{thm:exc} is $\delta$-valid. If $s$ is uniquely safe and $G_s\le u-\gamma$, it outputs $s$ with probability at least $1-\beta-\delta$ whenever
\begin{equation}\label{eq:pbupper}
 n\ge\frac{41WuC^*}{\gamma^2}.
\end{equation}
\end{proposition}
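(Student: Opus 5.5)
Validity is immediate from Theorem~\ref{thm:exc}. On its $1-\delta$ event every point posterior $\delta_{\theta_j}$ satisfies $G_j\le U_j:=\min_{\lambda\in\Lambda}U_{\delta_{\theta_j},\lambda}(\tau)$, where $\KL(\delta_{\theta_j}\|\pi)=\log K$. Any output candidate therefore has $G_j\le U_j\le u$, so it is safe. This holds for a data-selected output as well.

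For power, fix the uniquely safe $s$. It suffices to find a single $\lambda\in\Lambda$ with $U_{s,\lambda}\le u$ on an event of probability at least $1-\beta$. Then $s$ is certified, and on the intersection with the validity event no unsafe candidate is certified. Any certified output is therefore $s$, with failure probability at most $\beta+\delta$.

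\emph{Step 1: upward deviation of $\widehat G_s$.} Apply Bernstein's inequality to $U_i=w_s\1\{Y_i>\tau\}\in[0,W]$, whose variance is at most $WG_s\le Wu$. With probability at least $1-\beta$,
\[
 \widehat G_s\le G_s+\sqrt{2WuC^*/n}+WC^*/(3n).
\]
Under~\eqref{eq:pbupper}, $\sqrt{2WuC^*/n}\le\gamma\sqrt{2/41}$ and $WC^*/(3n)\le\gamma^2/(123u)\le\gamma/246$. Hence $\widehat G_s\le u-\gamma+c_1\gamma$ with $c_1\approx0.23$.

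\emph{Step 2: choice of $\lambda$.} The ideal value is $\lambda^\star\asymp\gamma/(Wu)$. It balances the bias factor $1/(1-\lambda W/2)\approx1+\lambda W/2$, whose cost is about $\lambda Wu/2$, against the complexity term $C/(\lambda n)$ with $C=\log K+\log(|\Lambda|/\delta)$. Note $C\le C^*$, since $\log(K|\Lambda|/\delta)$ is exactly this quantity.

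Since $\gamma\le u/2$, we have $\lambda^\star W\le1/2$, so $\lambda^\star$ lies in the range $[2^{-k}/(2W)]$ covered by the grid down to $1/(2Wn)$. The lower end is fine because $\gamma/u\ge\gamma\ge$ (something like) $1/n$ under~\eqref{eq:pbupper}; this needs a small check. The dyadic grid gives some $\lambda\in[\lambda^\star/2,\lambda^\star]$.

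With $\lambda=a\gamma/(Wu)$, $a\in[a_0/2,a_0]$, we need
\[
 \widehat G_s+\frac{CWu}{a\gamma n}\le u\Bigl(1-\frac{a\gamma}{2u}\Bigr)=u-\frac{a\gamma}{2}.
\]
Using $n\ge41WuC^*/\gamma^2$, the middle term is at most $\gamma/(41a)$. So the condition reduces to $\gamma(1-c_1)\ge a\gamma/2+\gamma/(41a)$.

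\emph{Step 3: the main obstacle.} The difficulty is purely the constant bookkeeping: the factor-two grid slack and the Bernstein deviation must together fit under the margin with the constant $41$. I would take $a_0=1/2$, so $a\in[1/4,1/2]$. The worst cases give $a/2\le1/4$ and $1/(41a)\le4/41\approx0.098$, a total of about $0.35$. This is below $1-c_1\approx0.77$, so there is room. If the lower-grid-end check fails, I would instead use the largest grid point $1/(2W)$ directly, since $\lambda^\star\ge$ the smallest grid point whenever $n\gtrsim Wu/\gamma$, which~\eqref{eq:pbupper} implies.
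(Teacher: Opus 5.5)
Your proposal is correct and follows the paper's proof essentially line for line: the same target $\lambda_0=\gamma/(2Wu)$ and dyadic choice $\lambda\in(\lambda_0/2,\lambda_0]$, Bernstein with $\log(1/\beta)\le C^*$, and the same four-term budget $\sqrt{2/41}+1/246+4/41+1/4<1$. The lower-grid check you left open is exactly $n\ge u/\gamma$. It follows from \eqref{eq:pbupper} because $C^*\ge\log 2$, $W\ge1$ and $\gamma\le 1/2$, so the fallback to ``the largest grid point'' is unnecessary (and would be the wrong end of the grid anyway).
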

\begin{proof}
Validity follows from Theorem~\ref{thm:exc}. For power, put $g=G_s$ and $\lambda_0=\gamma/(2Wu)\le1/(4W)$. Condition~\eqref{eq:pbupper} implies $n\ge u/\gamma$, because $C^*\ge\log2$, $W\ge1$, and $\gamma\le1/2$. Hence the smallest grid point, at most $1/(2Wn)$, is no larger than $\lambda_0$. Choose $\lambda\in\Lambda$ with $\lambda_0/2<\lambda\le\lambda_0$. Bernstein's inequality gives, with probability at least $1-\beta$,
\[
 \widehat G_s-g
 \le\sqrt{\frac{2Wu\log(1/\beta)}n}
       +\frac{W\log(1/\beta)}{3n}.
\]
Writing $C=\log(K|\Lambda|/\delta)\le C^*$, it follows that
\begin{align*}
 (\widehat G_s-g)+\frac{C}{\lambda n}+\frac{u\lambda W}{2}
 &\le\gamma\left(\sqrt{\frac2{41}}
            +\frac1{246}+\frac4{41}+\frac14\right)\\
 &<\gamma.
\end{align*}
Here the four terms use, respectively,~\eqref{eq:pbupper}, $\gamma/u\le1/2$, $\lambda>\gamma/(4Wu)$, and $\lambda\le\gamma/(2Wu)$. Since $g\le u-\gamma$, this inequality implies
\[
 \widehat G_s+\frac{C}{\lambda n}
 \le u(1-\lambda W/2),
\]
so $U_{\delta_{\theta_s},\lambda}(\tau)\le u$ and $s$ is certified. On the simultaneous-validity event every certified candidate is safe; uniqueness therefore forces output $s$. The intersection has probability at least $1-\beta-\delta$.
\end{proof}

This is a worst-case $Wu$ bound with the logarithmic cost of the predeclared grid. It does not establish a rate adaptive to the tail second moment for arbitrary learned posteriors. To target output power exactly $1-\beta$, one may run the result with validity level $\min(\delta,\beta/2)$ and power-failure parameter $\beta/2$.

\subsection{TailCert: the CDF-envelope certificate}\label{app:cdf}
This supplies the TailCert construction described in Section~\ref{sec:repair}, called CDF-PB in the experimental comparisons. It uses the independently fixed policy family, prior $\pi$, bounded weights, pooled CDF, and admissible posteriors $\rho\ll\pi$ defined in Appendix~\ref{app:posterior_certificate}. The posterior expectation must be evaluated exactly or accompanied by a valid approximation-error bound; an unprotected finite Monte Carlo approximation is not covered.

Write $\widehat F_\theta(t)=n^{-1}\sum_iw_\theta(X_i,A_i)\1\{Y_i\le t\}$ and $\widehat F_\rho(t)=\E_{\theta\sim\rho}\widehat F_\theta(t)$. In general,
\[
 \widehat F_\rho(t)+\widehat G_\rho(t)
 =\E_\rho\frac1n\sum_iw_\theta(X_i,A_i)\ne1,
\]
so complementing the empirical CDF is not the same calculation as directly estimating exceedance.

For a fixed threshold, write $F_\theta(t)=1-G_\theta(t)$, $\bar F_\rho(t)=\E_{\theta\sim\rho}F_\theta(t)$, and $V_F(\theta,t)=\E[w_\theta^2\1\{Y\le t\}]\le WF_\theta(t)\le W$. To distinguish its tuning range from Exc-PB, let $\Lambda_F\subset(0,3/W)$ be a nonempty finite predeclared grid and put $C^F_\rho=\KL(\rho\|\pi)+\log(|\mathcal T||\Lambda_F|/\delta)$. A Bernstein exponential bound and the PAC-Bayes change of measure give, simultaneously over $t\in\mathcal T$, $\lambda\in\Lambda_F$, and admissible posteriors,
\begin{align*}
 \bar F_\rho(t)&\ge\widehat F_\rho(t)-r_\lambda(\rho,t),\\
 r_\lambda(\rho,t)&=\frac{C^F_\rho}{\lambda n}
       +\frac{\lambda\E_\rho V_F(\theta,t)}{2(1-W\lambda/3)},
 \qquad \lambda\in\Lambda_F,
\end{align*}
This display uses a population second moment: an executable certificate must use an independently justified bound (for example $V_F\le W$), or a valid empirical-variance construction; substituting an unprotected sample estimate is not licensed by the argument. On the rare-action model, $V_F(\theta_j,\tau)=WF_{\theta_j}(\tau)$, of order $Wp$ near the threshold. Theorem~\ref{thm:boundary}(b) applies to procedures whose entire observation is the joint CDF sequence at that threshold. A rule that additionally observes weights, action counts, the endpoint CDF at $B$, or full-log selection scores is outside that restriction. The lower bound is not a universal claim about all CDF envelopes.
A fully executable, possibly conservative alternative follows by using $\E_\rho V_F\le W\bar F_\rho$ and rearranging:
\[
 \bar F_\rho(t)\ge
 \frac{\widehat F_\rho(t)-C^F_\rho/(\lambda n)}
 {1+\lambda W/\{2(1-W\lambda/3)\}}.
\]
Clip the lower bound at zero and optimize only over the paid predeclared grid. The CDF-PB comparator in our experiments uses the first display with the conservative proxy $\E_\rho V_F\le W$, minimized over a predeclared grid of $60$ values of $\lambda$ whose size is paid in $C^F_\rho$ (\texttt{common/certs.py}).

\paragraph{Executable envelope and quantile inversion.}
With $[x]_0^1=\min\{1,\max\{0,x\}\}$, the conservative proxy $\E_\rho V_F\le W$ gives
\begin{equation}\label{eq:tailcertcdf}
 L_\rho(t)=\left[\max_{\lambda\in\Lambda_F}
 \left\{\widehat F_\rho(t)-\frac{C^F_\rho}{\lambda n}
 -\frac{\lambda W}{2(1-\lambda W/3)}\right\}\right]_0^1.
\end{equation}
An infinite KL makes the bound vacuous. For finite KL, $\bar F_\rho(t)\ge L_\rho(t)$ holds with probability at least $1-\delta$, simultaneously over $t\in\mathcal T$ and data-dependent $\rho$. At point posteriors, this is the envelope used in our experiments by \texttt{common/certs.py::cdf\_pb}, which returns $1-L_\rho(t)$ to express every experimental comparison in exceedance units.

For fixed $t\in\mathcal T$ and $\lambda\in\Lambda_F$, the Bernstein exponential inequality gives
\[
 \E_{\calD_n}\exp\left\{
 \lambda n \Big(\widehat F_\theta(t)-F_\theta(t)\Big)
 -\frac{n\lambda^2 V_F(\theta,t)}{2(1-\lambda W/3)}
 \right\}\le1.
\]
Integrate over the independent prior and apply Markov's inequality, allocating error $\delta/(|\mathcal T||\Lambda_F|)$ to each grid pair. On the resulting common event, the change-of-measure inequality gives, for every posterior, the preceding CDF bound with $C^F_\rho$. Replacing $\E_\rho V_F$ by $W$, maximizing over the paid grid, and clipping to $[0,1]$ proves~\eqref{eq:tailcertcdf}. This argument also justifies the general second-moment expression above.

TailCert takes the first grid crossing of $p$. On the common event,
$\bar F_\rho(\widehat q^{\rm TailCert}_p(\rho))\ge L_\rho(\widehat q^{\rm TailCert}_p(\rho))\ge p$,
so $\bar Q_p(\rho)\le\widehat q^{\rm TailCert}_p(\rho)$. The implication uses the definition of a quantile, with no density or strict-monotonicity condition, and therefore remains valid at atoms. If no grid point crosses, the nontrivial certificate is inconclusive; the known support bound $B$ remains available. At a predeclared budget $\tau\in\mathcal T$, selecting any posterior satisfying $L_\rho(\tau)\ge p$ preserves the same guarantee. The boundary in Theorem~\ref{thm:boundary} concerns the finite-class selection problem and uses empirical Bernstein for its matching achievable rates; it is not a sharp-rate theorem for this PAC-Bayes construction.

\subsection{The betting certificate}\label{app:betting}
This gives the implementation and validity argument for Exc-Bet in Section~\ref{sec:experiments}, followed by the stratified construction used for held-out benchmark evaluation.

Fix a candidate $j$, suppress its label by writing $w=w_j$ and $G(t)=G_j(t)$, and define the normalized exceedance observations
\[
 R_i=w(X_i,A_i)\1\{Y_i>\tau\}/W\in[0,1],
 \qquad \mu=\E R_i=G(\tau)/W.
\]
The $R_i$ are i.i.d. The test bets against the candidate being unsafe: for $m\in[0,1]$, predictable bets $\lambda_i\in[0,1/2]$ give wealth $\mathcal W_n(m)=\prod_{i=1}^n\{1+\lambda_i(m-R_i)\}$, a nonnegative supermartingale under the null $\mu\ge m$. Certifying at $m=(1-p)/W$ when wealth reaches $K/\delta$ is therefore valid after a union over candidates. The implementation adapts the predictable plug-in strategy of \citet{waudby2024estimating}, holding bets fixed within doubling blocks and inverting a quadratic lower bound in closed form.

For the predeclared horizon $n$, partition $\{1,\ldots,n\}$ into consecutive blocks $I_b$ of nominal sizes $64\cdot2^{b-1}$, truncating the final block at $n$. Define
\[
 n_b=|I_b|,\quad c_b=\sum_{h<b}n_h,\quad
 S_{1b}=\sum_{i\in I_b}R_i,\quad S_{2b}=\sum_{i\in I_b}R_i^2,
\]
and let $S_{r,<b}=\sum_{h<b}S_{rh}$ for $r\in\{1,2\}$, with empty sums equal to zero. Before block $b$, compute the regularized variance estimate
\[
 v_b=\frac{1/4+\max\Big\{S_{2,<b}-S_{1,<b}^2/\max(c_b,1),0\Big\}}{c_b+1}.
\]
Thus $v_1=1/4$. Set $\lambda_b=\min\big\{\sqrt{2\log(K/\delta)/(v_bn)},1/2\big\}$ and $\lambda_i=\lambda_b$ for $i\in I_b$. For any $m\in[0,1]$ and $y=\lambda_b(m-R_i)\ge-1/2$,
\begin{align*}
 \log\big\{1+\lambda_b(m-R_i)\big\}
 &\ge \lambda_b(m-R_i)+\phi\,\lambda_b^2(m-R_i)^2,
\end{align*}
where $\phi=4(\log\tfrac12+\tfrac12)<0$. Consequently,
\begin{align*}
 \log\mathcal W_n(m)&\ge Q(m),\\
 Q(m)&:=\sum_b\Bigl\{\lambda_b\big(n_bm-S_{1b}\big)+\phi\lambda_b^2\big(n_bm^2-2mS_{1b}+S_{2b}\big)\Bigr\},
\end{align*}
$Q$ is a concave quadratic. Each observation contributes derivative
\[
 \lambda_b\big\{1+2\phi\lambda_b(m-R_i)\big\}
 \ge\lambda_b(1+\phi)>0
\]
on $[0,1]$, using $\lambda_b\le1/2$ and $\phi>-1$. Thus $Q$ is increasing there. Define $\hat m=\min\{m\in[0,1]:Q(m)\ge\log(K/\delta)\}$ (or $1$ if none) and the bound $U=\min\{1,W\hat m\}$.

\emph{Validity.} For the predeclared horizon $n$, the bets do not depend on $m$ and are predictable, and $1+\lambda_b(\mu-R_i)\ge1/2>0$. Hence $\mathcal W_n(\mu)$ is a nonnegative martingale with unit mean. If $G(\tau)>U$, then $G(\tau)\le1$ forces $U<1$, hence $U=W\hat m$ and $\mu>\hat m$, and monotonicity gives $\log\mathcal W_n(\mu)\ge Q(\mu)\ge\log(K/\delta)$. Ville's inequality bounds the probability of this event by $\delta/K$, and a union bound over the $K$ candidates makes the selected candidate's certificate valid. The certificate is valid for a fixed candidate list; for data-dependent posteriors we use Theorem~\ref{thm:exc}.

\paragraph{Stratified version: construction.}
Suppose two independent strata are each i.i.d.\ with fixed sizes and the same known weight bound $W$. Their exceedance means are $G^{(1)},G^{(2)}$, and the target is $G=qG^{(1)}+(1-q)G^{(2)}$ for a declared $q\in(0,1)$. Run the preceding test within each stratum, obtaining quadratic lower bounds $Q_1,Q_2$. At the true means, independence and the unit-mean wealths imply
\[
 \PP\!\left\{Q_1(G^{(1)}/W)+Q_2(G^{(2)}/W)
                  \ge\log(K/\delta)\right\}\le\delta/K.
\]
For $g\in[0,W]$, define
\[
 H(g)=\min_{\substack{m_1,m_2\in[0,1]\\qWm_1+(1-q)Wm_2=g}}
           \{Q_1(m_1)+Q_2(m_2)\}.
\]
Let $U$ be the smallest $g$ with $H(g)\ge\log(K/\delta)$, or $W$ if no crossing exists.

\paragraph{Exact computation.}
For $x=g/W$, the feasible interval for $m_1$ is $[a,b]$, with $a=\max\{0,(x-(1-q))/q\}$ and $b=\min\{1,x/q\}$. The objective $Q_1(m_1)+Q_2((x-qm_1)/(1-q))$ is concave, so its minimum is the smaller endpoint value, a continuous function of $g$. A discrete-grid minimum would not be conservative. Moreover, any feasible pair at a larger $g$ can be reduced coordinatewise to a feasible pair at a smaller $g$. Since each $Q_s$ is increasing, $H$ is nondecreasing, allowing bisection for $U$.

\paragraph{Validity after selection.}
If $G>U$, a crossing exists because otherwise $U=W\ge G$. Monotonicity then gives $H(G)\ge\log(K/\delta)$. The true split is feasible at $G$, so the event in the probability display occurs. Thus $\PP(G>U)\le\delta/K$; a union over candidates gives selection validity. Returning $\min\{1,U\}$ preserves this guarantee because $G\le1$. This uses one confidence budget for both strata. It is tighter than separate stratum-wise bounds in the reported deployment comparison; no pointwise ordering is asserted.

\subsection{Normalization information in the tail}\label{app:sn}
This explains why the recovery discussion in Section~\ref{sec:repair} calls for a normalization-aware baseline: retained weight information can improve inference, and the CDF-only lower bound does not compare all such methods.

For a fixed candidate $j$, write $w=w_j(X,A)$, $w_i=w_j(X_i,A_i)$, $Z=\1\{Y>\tau\}$, and $Z_i=\1\{Y_i>\tau\}$. Let $G=\E[wZ]$ and $V=\E[w^2Z]$; support coverage gives $\E w=1$. The unnormalized estimate is $\widehat G=n^{-1}\sum_iw_iZ_i$, and the self-normalized estimate is $\widehat G_{\rm SN}=\sum_iw_iZ_i/\sum_iw_i$ whenever its denominator is positive. For fixed distributions with bounded weights, $\sigma_E^2=n\operatorname{Var}(\widehat G)$ and the asymptotic variance coefficient $\sigma_{\rm SN}^2$ of $\sqrt n(\widehat G_{\rm SN}-G)$ are
\begin{align*}
 \sigma_E^2&=V-G^2,\\
 \sigma_{\rm SN}^2&=\E \big[w^2(Z-G)^2 \big]
                 =V(1-2G)+G^2\E w^2.
\end{align*}
Thus both are asymptotic to $V$ along a small-tail sequence only under additional conditions, for example $G^2/V\to0$ and $G^2\E w^2/V\to0$. Small $G$ alone is insufficient. With two actions, logging $(1/2,1/2)$, target $(\varepsilon,1-\varepsilon)$ for $\varepsilon\in(0,1)$, and deterministic tails $(1,0)$, weights are bounded by two and
\[
 G=\varepsilon,\quad V=2\varepsilon^2,\quad
 \sigma_E^2=\varepsilon^2,\quad
 \sigma_{\rm SN}^2=4\varepsilon^2(1-\varepsilon)^2.
\]
The ratio $\sigma_{\rm SN}^2/\sigma_E^2$ tends to four as $\varepsilon\downarrow0$. Self-normalization may also improve constants on other instances. The full-data lower bounds delimit possible rates on their specified families; they do not rank all normalized and unnormalized procedures pointwise.

Whenever the denominator is positive, the self-normalized CDF and exceedance estimates sum to one. They retain the weight normalizer and therefore lie outside the joint-CDF-only model. Finite-sample normalized inference need not pay for separate numerator and denominator intervals: joint betting or control-variate methods can use $\E w=1$ directly \citep{karampatziakis2021offpolicy,kuzborskij2021confident}.

\paragraph{A valid normalization-aware comparator.}
For a fixed candidate and null $G\ge m$, with $m\in[0,1]$, choose predictable bets $0\le\lambda_i\le1/(2W)$ and form
\[
 E_n(m)=\prod_{i=1}^n\big\{1+\lambda_iw_i(m-Z_i)\big\}.
\]
Every factor is at least $1/2$, and its conditional expectation under the null is $1+\lambda_i(m-G)\le1$. Hence $E_n(m)$ is a nonnegative supermartingale starting at one. At the unscaled tail budget $m=u=1-p$, certifying a candidate when $E_n(u)\ge K/\delta$ is valid after a union bound over a fixed list of $K$ candidates; Ville's inequality also permits monitoring over time when the predictable strategy is fixed consistently. A predeclared mixture of such wealth processes is valid, while post-hoc maximization across strategies requires its own error allocation. This is an established style of off-policy inference, offered as a comparator rather than a new optimality result. Our implementation (Norm-Bet) holds bets fixed within doubling blocks, chooses them from earlier blocks by a plug-in variance at the running estimate of $G$, and inverts a quadratic lower bound on the log-wealth exactly as in Appendix~\ref{app:betting}; the bound is increasing in $m$ because $\lambda_i w_i\le1/2$. On the rare-action family it matches Exc-Bet within $9\%$ (Figure~\ref{fig:information}).

\section{Logging design and scope limits}\label{app:design_scope}
This section explains two boundaries of the main message: how logging design changes tail overlap, and why overlap or normalization alone does not remove every cost of data-dependent selection.

\subsection{Tail-aware logging design}\label{app:loggingdesign}
This records the classical importance-sampling optimization for the tail second moment, complementing the overlap interpretation in the main text. It concerns a fixed target policy, rather than joint logging design for an unknown selected candidate.

Tail-Influence Sampling \citep{bourigault2026tail} uses a related variance-allocation principle for fixed-policy CVaR estimation through fresh conditional queries. Here we minimize the second moment of a weighted exceedance under action logging; the objectives and sampling models differ.

\begin{proposition}[Tail-aware logging]\label{prop:logging}
Fix a target policy $\pi_{\rm tar}$, a threshold $\tau$, conditional tail probabilities $g(x,a)=\PP(Y>\tau\mid X=x,A=a)$, and $M$ actions indexed by $a\in\{1,\ldots,M\}$. For a logger $\pi_0$, write $w(x,a)=\pi_{\rm tar}(a\mid x)/\pi_0(a\mid x)$ on its support; in the ratio sums below, a summand with $\pi_{\rm tar}(a\mid x)=\pi_0(a\mid x)=0$ is defined as zero. The infimum of
\[
 V_\tau(\pi_0)=\E_X\sum_a
 \frac{\pi_{\rm tar}(a\mid X)^2g(X,a)}{\pi_0(a\mid X)}
\]
over logging policies dominating $\pi_{\rm tar}$ is
\[
 V_\tau^\star=\E_X\left(\sum_a\pi_{\rm tar}(a\mid X)\sqrt{g(X,a)}\right)^2.
\]
Where the normalizer is positive, the formal minimizer is $\pi_0^\star(a\mid x)\propto\pi_{\rm tar}(a\mid x)\sqrt{g(x,a)}$. It attains the infimum when it dominates $\pi_{\rm tar}$; in general the infimum is approached by mixtures with uniform exploration. On contexts where all target-supported tail probabilities are zero, any admissible logger has zero contribution. Relative to target-policy logging,
\[
 V_\tau(\pi_{\rm tar})-V_\tau^\star
 =\E_X\operatorname{Var}_{a\sim\pi_{\rm tar}(\cdot\mid X)}
       [\sqrt{g(X,a)}].
\]
If $\kappa\ge1$ and an estimate satisfies $g/\kappa^2\le\hat g\le\kappa^2g$ on the target support, set $\widehat\pi_0^\star\propto\pi_{\rm tar}\sqrt{\hat g}$, choosing any distribution when its normalizer is zero. For $\varepsilon\in(0,1)$, the logger $\pi_0=(1-\varepsilon)\widehat\pi_0^\star+\varepsilon/M$ satisfies
\[
 V_\tau(\pi_0)\le\frac{\kappa^2}{1-\varepsilon}V_\tau^\star,
 \qquad w\le M/\varepsilon.
\]
\end{proposition}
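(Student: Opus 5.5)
The plan is to minimize $V_\tau$ context by context, and to apply Cauchy--Schwarz separately to each context's action sum. Fix $x$ and write $t_a=\pi_{\rm tar}(a\mid x)$, $q_a=\pi_0(a\mid x)$, $g_a=g(x,a)$. Since $\pi_0$ dominates $\pi_{\rm tar}$ and $\sum_aq_a\le1$, Cauchy--Schwarz gives
\[
 \Big(\sum_a t_a\sqrt{g_a}\Big)^2
 =\Big(\sum_a \frac{t_a\sqrt{g_a}}{\sqrt{q_a}}\sqrt{q_a}\Big)^2
 \le\sum_a\frac{t_a^2g_a}{q_a}\sum_aq_a
 \le\sum_a\frac{t_a^2g_a}{q_a}.
\]
The sums run over the target support. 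Summands with $t_a=q_a=0$ are zero by convention, and summands with $t_a>0$ force $q_a>0$. Taking $\E_X$ then yields $V_\tau(\pi_0)\ge V_\tau^\star$. Equality in Cauchy--Schwarz holds when $q_a\propto t_a\sqrt{g_a}$, which gives the formal minimizer. If this minimizer assigns zero mass to a target action with $g_a=0$, it fails to dominate $\pi_{\rm tar}$, although that summand contributes nothing.

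To show that the infimum is approached in that case, I would take the mixture $\pi_0^\varepsilon=(1-\varepsilon)\pi_0^\star+\varepsilon/M$. Because $q_a\ge(1-\varepsilon)q_a^\star$, the same bound as in the last paragraph, with $\kappa=1$, gives $V_\tau(\pi_0^\varepsilon)\le V_\tau^\star/(1-\varepsilon)$. Letting $\varepsilon\to0$ finishes this part. On contexts where every target-supported $g_a$ vanishes, every summand is zero. The variance identity follows by expansion. Since $\sum_at_a=1$, target logging gives $\sum_at_ag_a=\E_{a\sim t}[(\sqrt{g_a})^2]$, while $V_\tau^\star$ contributes $(\E_{a\sim t}\sqrt{g_a})^2$ per context. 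Their difference is the conditional variance of $\sqrt{g(X,a)}$.

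For the estimated logger, fix $x$ with normalizer $\hat Z=\sum_at_a\sqrt{\hat g_a}>0$. On any action with $t_a>0$, we have $q_a\ge(1-\varepsilon)t_a\sqrt{\hat g_a}/\hat Z$, which is positive whenever $\hat g_a>0$. When $\hat g_a=0$, the sandwich forces $g_a=0$, so the summand is zero, and domination still holds through the uniform component. Then
\[
 \sum_a\frac{t_a^2g_a}{q_a}
 \le\frac{\hat Z}{1-\varepsilon}\sum_at_a\frac{g_a}{\sqrt{\hat g_a}}
 \le\frac{\kappa\hat Z}{1-\varepsilon}\sum_at_a\sqrt{g_a}.
\]
The last step uses $g_a\le\kappa^2\hat g_a$, which gives $g_a/\sqrt{\hat g_a}\le\kappa\sqrt{g_a}$. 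Applying $\hat g_a\le\kappa^2g_a$ to $\hat Z$ gives $\hat Z\le\kappa\sum_at_a\sqrt{g_a}$. Multiplying the two estimates yields the per-context bound $\kappa^2(\sum_at_a\sqrt{g_a})^2/(1-\varepsilon)$, and taking expectations gives the claim.

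It remains to check the degenerate cases. If $\hat Z=0$, every target-supported $\hat g_a$ is zero, hence every such $g_a$ is zero, and the contribution vanishes. The weight bound holds because $q_a\ge\varepsilon/M$ and $t_a\le1$, so $w\le M/\varepsilon$. The main obstacle is this bookkeeping of zero tail probabilities and support, so that domination and the $0/0$ convention never break the inequalities. The analytic steps are just Cauchy--Schwarz and the two sides of the $\kappa$ sandwich.
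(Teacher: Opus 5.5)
Your proposal is correct and follows the paper's proof essentially step for step. Both arguments use per-context Cauchy--Schwarz for the lower bound, the mixture $(1-\varepsilon)\pi_0^\star+\varepsilon/M$ to approach the infimum, direct expansion for the variance identity, and the same two sides of the $\kappa$-sandwich (on $g/\sqrt{\hat g}$ and on the normalizer $\hat Z=H$) for the robust logger. Your explicit use of $\sum_a q_a\le 1$ over the target support and your treatment of actions with $\hat g_a=0$ spell out the support bookkeeping more carefully than the paper does, but they add no new idea.
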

\begin{proof}
For each context, Cauchy--Schwarz yields
\[
 \sum_a\frac{\pi_{\rm tar}^2g}{\pi_0}
 \ge\left(\sum_a\pi_{\rm tar}\sqrt g\right)^2.
\]
Equality holds at the stated formal proposal on positive-tail support. Choose it arbitrarily on zero-normalizer contexts and set $\pi_{0,\varepsilon}=(1-\varepsilon)\pi_0^\star+\varepsilon/M$. The lower bound above and $\pi_{0,\varepsilon}\ge(1-\varepsilon)\pi_0^\star$ on positive-tail support imply
\[
 V_\tau^\star\le V_\tau(\pi_{0,\varepsilon})
                   \le\frac{V_\tau^\star}{1-\varepsilon}.
\]
Zero-tail contributions vanish. Letting $\varepsilon\downarrow0$ proves the claimed infimum. Under target-policy logging, the corresponding conditional moment is $\sum_a\pi_{\rm tar} g$; subtracting $(\sum_a\pi_{\rm tar}\sqrt g)^2$ gives the displayed variance identity.

For robustness, let $H(x)=\sum_a\pi_{\rm tar}(a\mid x)\sqrt{\hat g(x,a)}$. On positive-tail support,
\[
 \frac{\pi_{\rm tar}^2g}{\pi_0}
 \le\frac{H\pi_{\rm tar} g}{(1-\varepsilon)\sqrt{\hat g}}.
\]
The assumed accuracy gives $g/\sqrt{\hat g}\le\kappa\sqrt g$ and $H\le\kappa\sum_a\pi_{\rm tar}\sqrt g$. Summation and expectation prove the bound. If $H=0$, the accuracy assumption implies $g=0$ on the target support, so that context contributes zero. Finally, $\pi_0\ge\varepsilon/M$ implies $w\le M/\varepsilon$.
\end{proof}

The exact improvement concerns $V_\tau$. For a deterministic target it is zero relative to target-policy logging. A positive moment improvement need not yield the same sample-size improvement: Theorem~\ref{thm:instance}(a) retains a range term, and the certificate must adapt to the realized variance. The multiplicative model-accuracy condition is an assumption, not a guarantee for an arbitrary fitted tail model. Any logging design used for certification must be fixed independently of that certification sample, or analyzed under a suitable adaptive extension.

\subsection{Mixture overlap does not suffice after selection}\label{app:mixture}
This counterexample explains a limitation of extending Section~\ref{sec:repair} to learned mixtures: small overlap for the selected mixture alone does not control the cost of selecting that mixture on the same log.

For a fixed posterior, the deployed mixture has weight $w_\rho(x,a)=\E_{\theta\sim\rho}w_\theta(x,a)$ and $V_{\rm mix}(\rho)=\E_{\pi_0}[w_\rho^2]\le\E_\rho V_w(\theta)$, where $V_w(\theta)=\E_{\pi_0}[w_\theta^2]$. The mixture second moment can be much smaller than $\E_\rho V_w(\theta)$.

\begin{proposition}\label{prop:mixture}
Fix $0<q<1$, confidence $\delta\in(0,1)$, and nonzero grid cardinalities $|\mathcal T|,|\Lambda|$ that remain fixed as $n=K\to\infty$, so $C_\rho=\KL(\rho\|\pi)+\log(|\mathcal T||\Lambda|/\delta)$ has a fixed non-KL term. In the spirit of propensity overfitting \citep{swaminathan2015self}, for each $n=K$, there is a one-context problem with uniform logging over $K$ actions, a uniform prior $\pi$ over their deterministic policies, and a data-selected posterior $\rho_n$ such that $\bar F_{\rho_n}(0)=q$ for every dataset, while with probability tending to one $\widehat F_{\rho_n}(0)=1$ and $\frac1n\sum_iw_{\rho_n}(A_i)=1$. Moreover, in probability, $V_{\rm mix}(\rho_n)=\|w_{\rho_n}\|_\infty\to e/q$ and $\KL(\rho_n\|\pi)\to\log(e/q)$. Thus no universal-constant bound on $\widehat F_\rho(0)-\bar F_\rho(0)$ of order $\sqrt{V_{\rm mix}(\rho)C_\rho/n}+\|w_\rho\|_\infty C_\rho/n$ can hold uniformly over selected posteriors.
\end{proposition}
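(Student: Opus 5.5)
The plan is to build an explicit construction in which the outcome is independent of the action, so that every posterior has the same pooled CDF. The posterior is then chosen by looking at which actions happened to succeed. Take one context and $K=n$ actions logged uniformly, so $w_a\in\{0,K\}$ and $W=K$. Let $Y\in\{0,1\}$, say with $B=1$, independent of the action, with $\PP(Y=0)=q$. Then $F_\theta(0)=q$ for every deterministic policy, and by linearity $\bar F_\rho(0)=\E_\rho F_\theta(0)=q$ for every posterior $\rho$, whatever the data. This gives the first claim immediately.

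For the selection step, let $N_a$ be the number of times action $a$ appears in the log. Define $S_n$ as the set of actions that appear exactly once and whose single outcome is $Y=0$. Let $\rho_n$ be uniform on $S_n$ when $S_n\ne\varnothing$, and put $\rho_n=\pi$ otherwise; that case has vanishing probability. On $\{S_n\ne\varnothing\}$:
\begin{itemize}
\item The mixture weight is $w_{\rho_n}(a)=K\rho_n(a)=K/|S_n|$ for $a\in S_n$, and zero otherwise.
\item Each action of $S_n$ contributes exactly one logged observation, and that observation is a success. Hence $\frac1n\sum_i w_{\rho_n}(A_i)=\frac1n|S_n|\cdot K/|S_n|=1$ (using $n=K$).
\item By the same computation restricted to successes, $\widehat F_{\rho_n}(0)=1$.
\item $\|w_{\rho_n}\|_\infty=K/|S_n|$.
\item $V_{\rm mix}(\rho_n)=\frac1K|S_n|(K/|S_n|)^2=K/|S_n|$.
\item $\KL(\rho_n\|\pi)=\log(K/|S_n|)$.
\end{itemize}

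Everything therefore reduces to showing $|S_n|/K\to qe^{-1}$ in probability, which is the only probabilistic step. I would handle it by computing the first two moments of $|S_n|=\sum_a \1\{N_a=1,\ Y_{(a)}=0\}$. The mean is
\[
 \E|S_n|=K\cdot n\frac1K\Big(1-\frac1K\Big)^{n-1}q\to Ke^{-1}q .
\]
The indicators for distinct actions are weakly negatively correlated multinomial occupancy events, with pairwise covariances of order $1/K$. Hence $\mathrm{Var}(|S_n|)=O(K)$, and Chebyshev gives concentration. This also shows $\PP(S_n=\varnothing)\to0$. The continuous mapping theorem then yields $V_{\rm mix}=\|w_{\rho_n}\|_\infty\to e/q$ and $\KL\to\log(e/q)$ in probability.

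For the final claim, note that $C_{\rho_n}$ converges in probability to the finite constant $\log(e/q)+\log(|\mathcal T||\Lambda|/\delta)$. So the proposed radius $\sqrt{V_{\rm mix}C_\rho/n}+\|w_\rho\|_\infty C_\rho/n$ is $O_P(n^{-1/2})\to0$. Meanwhile $\widehat F_{\rho_n}(0)-\bar F_{\rho_n}(0)=1-q>0$ with probability tending to one. Suppose a bound with some universal constant held uniformly over posteriors with probability at least $1-\delta$. Then for large $n$ the event where $\widehat F_{\rho_n}(0)-\bar F_{\rho_n}(0)$ exceeds the radius would have probability tending to one, which exceeds $\delta$ and gives a contradiction.

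I expect the occupancy concentration to be the main (mild) obstacle. A Poissonization argument or the bounded-differences inequality would serve equally well if the covariance computation gets messy: moving one sample changes $|S_n|$ by at most $2$, so McDiarmid's inequality gives deviations of order $\sqrt{K}$ directly.
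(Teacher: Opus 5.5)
Your proposal is correct and follows essentially the same route as the paper. The construction is the same: singleton actions whose one outcome is a success, a posterior uniform on them, and the identities $w_{\rho_n}=V_{\rm mix}=K/|S_n|$ and $\KL=\log(K/|S_n|)$. The concentration step is also the same second-moment/Chebyshev argument, based on $\E[I_jI_k]=q^2\frac{K-1}{K}(1-2/K)^{K-2}$.

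One small correction: at $n=K$ the pairwise covariances are slightly positive, of order $K^{-2}$, not negative. Since you only use a magnitude bound, the conclusion $\mathrm{Var}(|S_n|)=O(K)$ and the rest of the proof are unaffected. Your McDiarmid alternative, with bounded differences $2$, would also work.
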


\begin{proof}
Let $A_i$ be uniform on $\{1,\dots,K\}$ and $B_i\sim\mathrm{Bernoulli}(q)$ independent, $Y_i=1-B_i$. Let $N_j=\sum_i\1\{A_i=j\}$, $S=\{j:N_j=1,\text{ the unique }B_i=1\}$, $M_S=|S|$. If $M_S>0$ let $\rho_n$ be uniform on $S$; otherwise $\rho_n=\pi$. Every candidate has $F_j(0)=q$. For $M_S>0$, $w_{\rho_n}(a)=(K/M_S)\1\{a\in S\}$, exactly $M_S$ logged points have selected actions and all have $B_i=1$, so $\widehat F_{\rho_n}(0)=\frac{M_S}{K}\cdot\frac{K}{M_S}=1$, $V_{\rm mix}(\rho_n)=\|w_{\rho_n}\|_\infty=K/M_S$ and $\KL(\rho_n\|\pi)=\log(K/M_S)$. 

For the occupancy step, put $I_j=\1\{j\in S\}$. For $j\ne k$ and $K\ge3$,
\[
 \E I_j=q(1-1/K)^{K-1},\qquad
 \E[I_jI_k]=q^2\frac{K-1}{K}(1-2/K)^{K-2}.
\]
The second identity assigns the two singleton actions to distinct observations; all other observations avoid both actions. Consequently,
\[
 \E[(M_S/K)^2]
 =\frac qK(1-1/K)^{K-1}
   +q^2\left(\frac{K-1}{K}\right)^2(1-2/K)^{K-2}
 \longrightarrow q^2/e^2.
\]
Together with $\E[M_S/K]\to q/e$, this gives vanishing variance and $M_S/K\to q/e$ in probability. Thus $\PP(M_S>0)\to1$. The proposed bound's right-hand side tends to zero in probability, while the estimation error equals $1-q>0$ with probability tending to one. For any fixed multiplicative constant its failure probability therefore tends to one, contradicting a uniform $1-\delta$ guarantee.
\end{proof}

\subsection{Scope of the comparison}\label{app:positioning}
Table~\ref{tab:positioning} complements Section~\ref{sec:related} by separating the information comparison proved here from the existing inference tools used to attain its rates.

\begin{table}[!htbp]
\centering\small
\caption{\textbf{Where our contribution lies.}}\label{tab:positioning}
\begin{tabularx}{\textwidth}{p{0.23\textwidth}XX}
\toprule
\rowcolor{tablehead}
\textbf{Prior tool or result} &\textbf{What is already available} &\textbf{What the present comparison adds} \\
\midrule
CCDF inference \citep{huang2022offarxiv} & Tail-sensitive exceedance estimates, empirical-Bernstein bounds, and CDF/CCDF combination. & \textbf{Matched selection-and-certification costs} for full logs versus the joint CDF reduction.\\
Normalized off-policy inference \citep{kuzborskij2021confident,karampatziakis2021offpolicy} & Valid use of weight normalization and policy-selection inference. & An explicit binary-model \textbf{sufficiency and recovery explanation}; no universal advantage over normalized methods is claimed.\\
PAC-Bayes nonnegative-loss bounds \citep{gabbianelli2024importance,thiemann2017strongly} & Posterior-uniform concentration and small-mean adaptation. & Application to the pooled tail constraint, used as \textbf{supporting machinery}.\\
Off-policy lower bounds \citep{li2015toward,huang2022offarxiv,wan2026riskaware} & Lower bounds for their estimation or learning targets. & \textbf{Tail-constraint certification converses} on explicit families, including a shared-action construction under a cross-moment condition.\\
Categorical mode identification \citep{jain2022mode} & Identification from categorical observations using KL information. & Comparison of the categorical reduction with the fuller off-policy experiment and its \textbf{recoverable failure labels}.\\\bottomrule
\end{tabularx}
\end{table}

\section{Experimental protocols and supporting results}\label{app:experiments_support}
This section documents the studies summarized in Section~\ref{sec:experiments}. The protocols come first, followed by implementation conditions, simulation references, and detailed results. The paired information study is in Appendix~\ref{app:infostudy}.

\subsection{Empirical protocols and results}\label{app:empirical_details}
Our experiments address five questions linking certification cost, information recovery, and deployment. \textbf{Q1}: does $W(1-p)\log(K/\delta)/\gamma^2$ describe when certification succeeds? \textbf{Q2}: does removing failure information cause the predicted cost, and does retaining counts or normalization totals remove it? \textbf{Q3}: is certifying a policy selected on the same data valid? \textbf{Q4}: does certification enable a better deployed policy than a safe default at realistic log sizes? \textbf{Q5}: does the selected policy meet its constraint on fresh held-out benchmark generations under the declared population model?

\paragraph{Certificates compared.}
All methods use $\delta=0.05$ and the same candidate lists; valid methods pay their applicable multiplicity and tuning costs. Exceedance methods are Exc-PB, Exc-Bet, Exc-EB, and Norm-Bet. Exc-EB uses the established CCDF construction of \citet[Lemma~4.4]{huang2022offarxiv} in the Maurer--Pontil form~\eqref{eq:summaryrepair}, unioned across candidates. Norm-Bet retains weight information (Appendix~\ref{app:sn}). The CDF methods are CDF-PB and CDF-EB; the latter is the per-threshold Thomas et al.\ interval used by UnO without truncation \citep{thomas2015high,chandak2021universal}. Split selects on one half-sample and certifies with Exc-PB on the other. The naive synthetic reference omits both search and tuning-grid penalties; the real-data PAC-Bayes and betting references omit only the search correction. These are invalid post-selection comparators, not interchangeable ablations.

\paragraph{Selection and data sharing.}
Full-sample outcome-table methods maximize importance-weighted utility among certified candidates, falling back to a reference policy and recording $\Psi=\bot$ if none certifies. This selection uses full logs even for CDF certificates. Full-sample methods in the rare-action boundary study select the tightest certified bound, breaking ties randomly. Its split baseline uses independently generated halves with the same total sample budget; the full-sample methods share counts. The paired information study instead uses identical logs for every method and selects the first certified candidate in a fixed order, keeping CDF-only decisions measurable in the stipulated reduction (Appendix~\ref{app:infostudy}).

\paragraph{Reading empirical log sizes.}
Reported $50\%$ and $90\%$ thresholds are estimates obtained by linear interpolation in log sample size between adjacent tested sizes at the first empirical rate crossing. They are not confidence bounds on the required sample size. A missing crossing is reported as ``--'' or a censored point. Grid spacing and repetition counts therefore limit precision, especially for the reasoning study's roughly twofold spacing.

\subsubsection{Q1--Q3: the rare-action class}
We simulate the class of Theorem~\ref{thm:boundary} with $W=64$, $K\in\{2,8,32\}$, $p\in\{0.90,0.95,0.99\}$ and $\gamma=(1-p)/2$, with $3000$ repetitions per point and sample sizes from $0.05\,n^\star$ to $3000\,n^\star$. We add an exact per-action Clopper--Pearson certificate as a model-specific reference, and an oracle ceiling on the average power of procedures valid on the full binary-outcome model (Appendix~\ref{app:ceiling}; simulation details in Appendix~\ref{app:simulation}).

\begin{figure}[!htbp]
\centering
\includegraphics[width=0.98\linewidth]{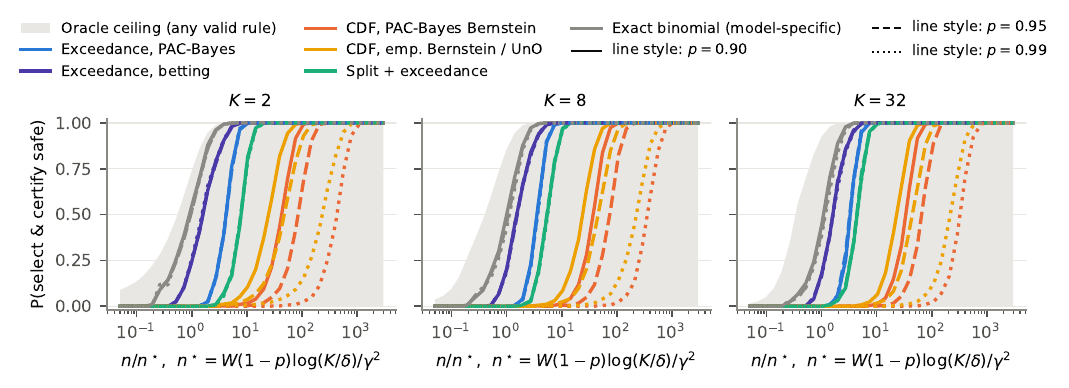}
\caption{\textbf{The tail-sharp boundary and the CDF penalty} (rare-action class, $W=64$, $K\in\{2,8,32\}$, $\delta=0.05$, margin $\gamma=(1-p)/2$; $3000$ repetitions per point; line style encodes $p$). Probability of selecting and certifying the unique safe candidate against $n$ divided by the predicted scale $n^\star=W(1-p)\log(K/\delta)/\gamma^2$. Exceedance certificates (PAC-Bayes in blue, betting in purple) nearly collapse across $p$, consistent with the scale in Theorem~\ref{thm:boundary}. Exc-PB has the grid-adjusted bound in Proposition~\ref{prop:pbpower}; betting rate attainment is empirical here. Relative to the PAC-Bayes exceedance certificate, the PAC-Bayes CDF envelope shifts right by $11$--$12\times$, $22$--$24\times$ and $110$--$121\times$ at $p=0.90$, $0.95$ and $0.99$, consistent with the order-$p/(1-p)$ separation (Theorem~\ref{thm:boundary}(b)). The shaded curve is the $p=0.90$ oracle Neyman--Pearson ceiling for procedures valid on the full binary-outcome model.}
\label{fig:boundary}
\end{figure}

\paragraph{Scaling and information loss.}
Figure~\ref{fig:boundary} shows the increasing CDF penalty across all tested $K$. Against Exc-Bet, the strongest tested CDF baseline (CDF-EB, the untruncated UnO per-threshold interval) needs $12$--$13\times$, $26\times$ and $122$--$135\times$ more data at $p=0.90$, $0.95$ and $0.99$, respectively.

\paragraph{Constants and reference procedures.} At $90\%$ power, Exc-Bet needs $n\approx(3.1$--$3.8)\,n^\star$, Exc-EB $(4.7$--$5.4)\,n^\star$ and Exc-PB $(5.0$--$6.7)\,n^\star$. Exc-Bet uses $1.4$--$1.8\times$ the log size of the model-specific exact-binomial reference and $2.1$--$3.5\times$ that of the oracle ceiling. Splitting requires $1.4$--$1.9\times$ the data of full-sample Exc-PB.

\paragraph{Validity.} Table~\ref{tab:validity} reports the null checks. Every generic valid procedure falsely certifies in at most $0.4\%$ of repetitions; the exact-binomial reference reaches $2.7\%$. The naive rule omitting both search and tuning penalties falsely certifies about $10\%$, twice the nominal $\delta$; this comparison does not isolate their individual effects.

\subsubsection{Q2--Q4: certifying the cost tail of LLM routers}
\paragraph{Data and candidates.} RouterBench \citep{hu2024routerbench} records the quality and dollar cost of five LLMs (GPT-4, GPT-3.5, Claude-2, Mixtral-8x7B, Mistral-7B) on $36{,}481$ prompts, split into $10{,}944$ development and $25{,}537$ evaluation prompts. On the development set, ridge predictors of quality and log-cost define nested families of $K\in\{8,32,128\}$ routers that trade quality against cost. The budget $\tau$ is chosen there for each $p\in\{0.90,0.95,0.99\}$ so that half of the $K=32$ routers are feasible with margin and the most accurate one is not. On the evaluation population, the best feasible router has quality $0.614$, the most accurate $0.779$ with $91$--$92\%$ of requests over budget, and the all-Mistral-7B fallback $0.305$.

\paragraph{Logging and repetitions.} Logging is $\pi_0^\epsilon=(1-\epsilon)\pi_{\rm base}+\epsilon/5$ with $\pi_{\rm base}$ the fallback and $\epsilon\in\{1,0.5,0.2\}$ ($W\in\{5,10,25\}$); the certifier sees only the logged model's cost. We use $300$ repetitions on $30$ log sizes for the main study, $500$ for stress tests.

\paragraph{Certification cost.} The largest gaps in Table~\ref{tab:router} are between CDF and exceedance certificates (curves in Appendix~\ref{app:routerfig}). Under uniform logging at $p=0.99$, CDF-EB does not reach $90\%$ certification within $300{,}000$ requests, versus $6{,}900$ for Exc-Bet. The CDF-EB/Exc-Bet log-size ratio shrinks to $1.8$--$3.7$ when logging is close to the certified routers ($\epsilon=0.2$). Thus the practical gap depends on logging as well as the tail level, consistent with the variance and range terms in Theorem~\ref{thm:instance}(a).

Differences among exceedance certificates are smaller: Exc-PB needs $1.50$--$1.94\times$ the logs of Exc-Bet, and Exc-EB $1.20$--$1.27\times$ ($1.56\times$ at $p=0.99$, $\epsilon=0.2$), about one grid step. Splitting requires $6.6$--$11\times$ the logs of full-sample Exc-PB; its certification rate temporarily plateaus near $0.4$ at intermediate log sizes because the selection half often picks a router just under the budget.

\paragraph{Explaining the gap from population moments (Q2).}
Figure~\ref{fig:predict} compares observed median log sizes with the empirical-Bernstein prediction obtained from evaluation-population variances and margins. Appendix~\ref{app:predict} gives the calculation and Table~\ref{tab:predict} its RouterBench values. No constants are fitted; this is a retrospective explanation, not an independently validated planning procedure. RouterBench predictions are within a factor of two for both statistics and within $10\%$ for exceedances, as close as the grid resolves. The gap follows the variance and range terms, rather than a fixed $p/(1-p)$ factor.

\begin{figure}[!htbp]
\centering
\begin{minipage}[c]{0.60\linewidth}
\includegraphics[width=0.9\linewidth]{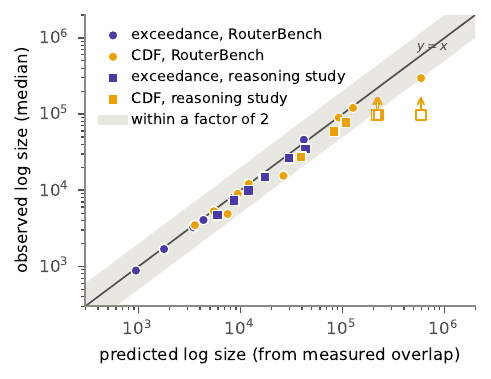}
\end{minipage}\hfill
\begin{minipage}[c]{0.4\linewidth}
\caption{\textbf{Retrospective explanation from population moments.} Log size needed for a $50\%$ certification rate, predicted from the measured tail-effective (exceedance) or bulk-effective (CDF) overlap with no fitted constants, against the observed value. Points cover both real-data studies, three logging policies and two or three quantile levels. Hollow markers with arrows: the CDF certificate did not reach a $50\%$ certification rate within the largest log tested, consistent with a prediction beyond it. The grey band marks a factor of $2$.}
\label{fig:predict}
\end{minipage}
\end{figure}

\textbf{\emph{Practical payoff (Q4).}} At $p=0.95$ and $\epsilon=0.5$, the router certified by the betting certificate from $3{,}640$ logged requests has quality $0.606$, close to the best feasible $0.614$. With the same logs, the CDF certificates still fall back to the Mistral-7B router (quality $0.305$). The empirical-Bernstein CDF bound reaches comparable quality only at about $100{,}000$ logs. The quality-greedy router would send about $91\%$ of requests over budget.

\textbf{\emph{Validity (Q3).}} No valid procedure, including the betting certificate, falsely certified in any cell of the main study. In two stress tests with up to $1024$ routers whose tail mass lies just above the budget, the worst rate was $0.8\%$ at nominal $5\%$, for Exc-Bet and Norm-Bet alike. Omitting the search cost is not safe. This is a winner's curse for tail certification: the same betting test without the union over candidates falsely certifies in $26\%$, $67\%$ and $100\%$ of repetitions with $32$, $128$ and $1024$ near-boundary routers, and a plug-in normal interval in every repetition.

\textbf{\emph{Designing logs for the tail.}} Appendix~\ref{app:tailaware} tests Proposition~\ref{prop:logging} for fixed stochastic routers, using development data to estimate tail probabilities. Tail-aware logging can reduce the required log size, but its benefit depends on the margin, range term, and certificate. Tables~\ref{tab:tailaware}--\ref{tab:tailaware_gr} report both gains and settings with little or no gain.

\subsubsection{Q2--Q5: reasoning budgets and held-out benchmark evaluation}
\paragraph{Data and outcomes.} We draw $16{,}000$ GSM8K~\citep{cobbe2021gsm8k} and MATH~\citep{hendrycks2021math} problems and split them uniformly at random, stratified by dataset, into $4{,}200$ development, $9{,}800$ evaluation and $2{,}000$ held-out benchmark problems. The held-out problems receive fresh generations only after router selection.

Seven open-weight configurations each answer the same $14{,}000$ development and evaluation problems once ($98{,}000$ generations): Qwen3-1.7B, Qwen3-8B and Qwen3-30B-A3B~\citep{qwen2025qwen3}, each with thinking off and on, and DeepSeek-R1-Distill-Qwen-7B~\citep{deepseek2025r1}. We use vLLM~\citep{kwon2023efficient} at the recommended settings and a cap of $8{,}192$ generated tokens. The outcome is generated tokens per request, a proxy for latency and cost, with constraint $P(\text{tokens}>\tau)\le1-p$. Thinking multiplies output length about $5\times$, with a heavy tail and an atom at the cap: $10$--$12\%$ of Qwen3 thinking traces ($6\%$ for R1-Distill) are truncated and almost always graded wrong (Table~\ref{tab:actions}).

\paragraph{Replay protocol.} Routers ($K\in\{8,32,128\}$), logging ($\epsilon\in\{1,0.5,0.2\}$, $W\in\{7,14,35\}$) and certificates follow RouterBench. The development rule gives $\tau=2{,}161$ tokens at $p=0.90$ and $3{,}509$ at $p=0.95$; at $p=0.99$ no budget separates the most accurate router from the safe reference, so we omit it. As in RouterBench, replay logs are drawn with replacement from the evaluation pool. These certificates and false-certification rates concern that finite pool, including when logs exceed its size.

\paragraph{Population model for the held-out study.} We additionally model each dataset's problems as an i.i.d.\ sample from a population of GSM8K-style or MATH-style requests. The target population is their mixture with the pool's GSM8K share $q=0.413$. Within each dataset, a uniformly random disjoint split of such a sample is again i.i.d. This assumption supports the separate population certificate described below.

\paragraph{Replay results.} The RouterBench pattern holds for generated-token outcomes (Table~\ref{tab:gpu}). At $p=0.90$ under uniform logging, Exc-Bet reaches $90\%$ certification from $4.3$k logs, against $46$k for CDF-EB and $86$k for CDF-PB. At $p=0.95$, neither tested CDF certificate reaches $90\%$ within $98$k requests under any logging policy (the best reaches $65\%$), whereas Exc-Bet needs $7.1$k, $12.6$k and $34$k. Exc-PB needs $1.7$--$2.2\times$ the logs of Exc-Bet and Exc-EB $1.7$--$2.1\times$. Norm-Bet is within $6\%$ of Exc-Bet: retaining the weight normalizer changes little here once exceedance information is used.

Valid procedures never falsely certified in the main study and did so in at most $0.4\%$ of stress-test repetitions. This stress family has few candidates near the boundary, so the naive certificate also rarely fails ($0.1\%$). Overlap-based predictions are slightly conservative: $13$--$27\%$ above observed log sizes for exceedances and about $40\%$ for the CDF. Where the CDF bound never reached a $50\%$ rate, the prediction ($215$k--$586$k) exceeds the largest log tested.

\paragraph{Held-out study: log and certificate (Q5).} At $p=0.90$ ($\tau=2{,}161$, $\epsilon=0.5$, $K=32$), we draw one log of $9{,}800$ requests in which every evaluation problem appears once, in random order, with its action drawn from the logging policy. Under the stated population model, observations are i.i.d.\ within each dataset; the split fixes dataset counts, so the log is not i.i.d.\ across datasets. We therefore use a \emph{stratified} betting certificate. The two datasets' betting wealths are independent, so their product at the true means is an e-value. The bound on $qG_{\rm GSM8K}+(1-q)G_{\rm MATH}$ uses a single confidence budget (Appendix~\ref{app:betting}) and concerns the declared population.

\paragraph{Certificate and held-out outcome.} Stratified Exc-Bet certifies a router that mostly uses Qwen3-30B-A3B without thinking, with bound $0.0968\le0.10$. The $2{,}000$ held-out problems receive fresh model generations; under the stated model they are a separate sample from the same dataset populations. They check the certificate empirically but are not part of its guarantee. Observed exceedance is $102/2000=0.051$ and accuracy $0.902$ (Table~\ref{tab:deploy}). This router is deterministic and coincides with the accuracy-greedy candidate on this log, whose fresh responses are reported.

\paragraph{Comparisons and scope.} Stratified Exc-PB does not certify from the same log, so it falls back to the reference router (accuracy $0.883$). From a $10\times$ larger log resampled with replacement, Exc-PB certifies a similar router (bound $0.062$, held-out exceedance $0.048$); this bound concerns the evaluation pool. Both evaluated certified routers had empirical held-out exceedance below budget. Always enabling reasoning with the largest model instead exceeds the budget on $53\%$ of fresh problems, with a $90$th percentile of $7{,}941$ tokens and accuracy $0.882$.

For comparison, the unstratified Exc-Bet diagnostic gives $0.0970$, while a union bound combining separate per-dataset bounds at $\delta/(2K)$ gives $0.111$ and certifies nothing. The unstratified Exc-PB diagnostic also has no threshold crossing. The reported population certificate uses the product-based stratified Exc-Bet bound above.

\begin{table}[!htbp]
\centering\small
\setlength{\tabcolsep}{2.5pt}
\caption{\textbf{Certified and always-reasoning routers have different empirical tail behavior.} Fresh generations on $2{,}000$ held-out benchmark problems ($p=0.90$, $\tau=2{,}161$ tokens, target exceedance $\le0.10$). The stratified bound concerns the stated population model; the $98$k bound concerns the resampled finite pool. Held-out measurements are empirical checks, not population guarantees.}
\label{tab:deploy}
\begin{tabular}{lrrrr}
\toprule
\rowcolor{tablehead}
\textbf{Router (log size)} &\textbf{Bound} &\textbf{\shortstack{Observed\\exceedance}} &\textbf{\shortstack{90th percentile\\(tokens)}} &\textbf{Accuracy} \\
\midrule
Exc-Bet certified, stratified ($9.8$k) & \keycell{$0.097$} & \keycell{$0.051$} & $1{,}435$ & \keycell{$0.902$}\\
Exc-PB on same log: fallback ($9.8$k) & -- & $0.053$ & $1{,}386$ & $0.883$\\
Exc-PB certified ($98$k, pool) & $0.062$ & $0.048$ & $1{,}362$ & $0.913$\\
\midrule
Always reason, 30B (uncertified) & -- & \keycell{$0.531$} & $7{,}941$ & $0.882$\\\bottomrule
\end{tabular}
\end{table}


\phantomsection\label{app:asset_records}
\paragraph{Asset provenance.} The code configuration records the model identifiers, sampling settings, and dataset loader candidates. The supplementary \texttt{ASSETS.md} gives primary-source attributions and stated licenses. The released prompt table records the source of every problem: GSM8K was loaded from \texttt{openai/gsm8k} and MATH from \texttt{EleutherAI/hendrycks\_math}, the first configured mirror. The RouterBench file used in the experiments has SHA-256 \texttt{ba4f77f1\ldots65d36d}, identical to the file committed by the publisher (\texttt{ASSETS.md}).

\phantomsection\label{app:compute_records}
\paragraph{Computing resources and reproducibility.}
Generation used one NVIDIA H100 NVL GPU (94\,GB) for about 17 GPU-hours, plus 1.2 GPU-hours for the held-out evaluation; simulations and replays used CPU instances with 13--24 effective cores, with run times in the supplementary READMEs. The inference software include vLLM, Transformers, PyTorch, CUDA. The supplement contains the prompts with their partitions and sources, the outcome table, and the per-request held-out generations (response text, extracted answer, grade and token count). Every held-out measurement in Table~\ref{tab:deploy} is recomputed exactly from these records; regenerating the responses requires a GPU and need not reproduce them token for token.

\phantomsection\label{app:data_content}
\paragraph{Data reuse and released content.}
The study reuses public benchmarks and recruits no participants; no separate provider/curator consent record is included. Released reasoning tables contain numerical outcomes, context features, problem identifiers, dataset/partition labels, and generation-status metadata; the released prompt table and held-out generation records also contain public benchmark problem text and model responses.

\subsection{Implementation details}\label{app:execution}
These implementation conditions complement the workflow in Section~\ref{sec:repair}; in particular, they specify when tuning and estimated propensities preserve the confidence guarantee.

\paragraph{Recovery computation and storage.}
Given the stored summaries, Algorithm~\ref{alg:recovered} uses $O(K|\mathcal T|)$ arithmetic operations and $O(K|\mathcal T|)$ storage including its inputs. The repair adds only $2K$ retained scalars. Accumulating these two weight totals from $n$ observations takes $O(nK)$ arithmetic operations, excluding policy-weight evaluation. These costs do not include constructing the original CDF summaries or utility scores, or learning the candidates.

\paragraph{Grids.} For grid-based certificates, $\lambda$ optimization uses a predeclared finite grid whose size is paid in the confidence term. The same applies to threshold grids $\mathcal T$.
\paragraph{Estimated propensities.}
For $\epsilon_{\rm prop}\in[0,1)$, suppose an independent sample yields a propensity estimate $\widehat\pi_0$ and a simultaneous lower bound $\pi_0\ge(1-\epsilon_{\rm prop})\widehat\pi_0$ with failure probability at most $\delta_{\rm prop}$. Let $\widehat w=\pi_\theta/\widehat\pi_0$ and a corresponding known bound $\widehat W$ be fixed from that sample. On its success event, $w\le\widehat w/(1-\epsilon_{\rm prop})$ and the true weights are bounded by $\widehat W/(1-\epsilon_{\rm prop})$. Conditional on this independently constructed design, apply Theorem~\ref{thm:exc} with error $\delta_{\rm cert}$, a predeclared admissible grid $\Lambda\subset(0,2(1-\epsilon_{\rm prop})/\widehat W)$, and that enlarged weight bound. The envelope is nondecreasing in its empirical exceedance and in the range bound, so replacing the unobserved true empirical exceedance by the plug-in estimate divided by $1-\epsilon_{\rm prop}$ is conservative. The final error probability is at most $\delta_{\rm prop}+\delta_{\rm cert}$; allocate their sum to be at most $\delta$. Unprotected plug-in propensities, or same-log propensity fitting without further analysis, are outside this guarantee.

\subsection{Oracle power ceiling}\label{app:ceiling}
This derives the oracle reference curve in Figure~\ref{fig:boundary}; the ceiling benchmarks achievable power but is not an executable certificate for deployment.

For a (randomized) selector-certifier, let $\phi_j=\PP(\Psi=j\mid\calD_n)$. We use two upper bounds on its average power $K^{-1}\sum_j\E_{P_j}\phi_j$ and report their minimum. For $q_0\in\{p-\gamma,p\}$ and $a\in[0,1]$, let $\mathcal R_n(a;q_0)$ denote the Neyman--Pearson power at size $a$ for testing ``action $j$ has $\PP(Y\le\tau)=q_0$'' against ``$=p+\gamma$'', all other actions being as in $P_j$. It is the same for all $j$ by symmetry.

\textbf{\emph{(a) Search-aware ceiling.}} Under the class null $P_0$ (Section~\ref{sec:boundary}), $P_0$ and $P_j$ differ only on action $j$. With $\alpha_j=\E_{P_0}\phi_j$ and $\sum_j\alpha_j\le\delta$, we have $\E_{P_j}\phi_j\le\mathcal R_n(\alpha_j;p-\gamma)$, and concavity and monotonicity of the randomized ROC give $K^{-1}\sum_j\E_{P_j}\phi_j\le\mathcal R_n(\delta/K;p-\gamma)$.

\textbf{\emph{(b) Boundary ceiling.}} Let $Q_j$ equal $P_j$ except that action $j$ has exceedance exactly $1-p$. $Q_j$ is a limit of instances in which every candidate is unsafe, and for fixed $n$ the law of $\calD_n$ is continuous in total variation, so validity gives $Q_j\{\Psi\ne\bot\}\le\delta$. Since $Q_j$ and $P_j$ differ only on $j$, $\E_{P_j}\phi_j\le\mathcal R_n(\delta;p)$.

Both bounds hold for procedures that are $\delta$-valid on the full binary-outcome model, including neighborhoods of the boundary; validity merely on the finite family in Theorem~\ref{thm:boundary} does not suffice for bound (b). Bound (b) ignores the search cost but uses the least-favourable null. We compute $\mathcal R_n$ exactly on the sufficient statistic (target-action count $N_j\sim\mathrm{Bin}(n,1/W)$, successes $S_j\mid N_j\sim\mathrm{Bin}(N_j,\cdot)$) by sorting states by likelihood ratio, randomizing on the boundary, and truncating at $9$ standard deviations. Omitted mass is credited to the ceiling, so truncation keeps it conservative.

\subsection{Simulation details}\label{app:simulation}
This records the sampling design and null checks behind the rare-action study in Section~\ref{sec:experiments}, including the reported simulation-code location.

\begin{table}[!htbp]
\centering\small
\caption{\textbf{Naive reuse exceeds nominal error in this stress test.} False-certification rate when all $K=32$ candidates are unsafe ($W=64$, nominal $\delta=0.05$). Each cell is the maximum over four sample sizes and over relative excess tail mass $\{2,5,20\}\%$, with $9000$ repetitions per configuration.}
\label{tab:validity}
\begin{tabular}{lccc}
\toprule
\rowcolor{tablehead}
\textbf{Procedure} &\textbf{$p=0.90$} &\textbf{$p=0.95$} &\textbf{$p=0.99$} \\
\midrule
Exc-PB (PAC-Bayes) & 0.0000 & 0.0000 & 0.0000\\
Exc-Bet (betting) & 0.0038 & 0.0027 & 0.0030\\
CDF, PAC-Bayes Bernstein & 0.0000 & 0.0001 & 0.0001\\
CDF, emp.\ Bernstein / UnO & 0.0002 & 0.0006 & 0.0012\\
Split & 0.0000 & 0.0000 & 0.0000\\
Exact binomial (reference) & 0.0266 & 0.0248 & 0.0238\\
\midrule
\keycell{Naive (no search/tuning penalties)} & \keycell{0.0972} & \keycell{0.0980} & \keycell{0.0963}\\\bottomrule
\end{tabular}
\end{table}

Here, per repetition, shared full-sample target-action counts are drawn from the exact multinomial and successes from binomials. The split reference independently generates halves of sizes $\lfloor n/2\rfloor$ and $\lceil n/2\rceil$; these have the usual i.i.d.\ split-sample law but are not paired with the full-sample draws. For $K\in\{2,8,32\}$, $W=64$, $p\in\{0.90,0.95,0.99\}$ and $\gamma=(1-p)/2$, sample sizes are $34$ geometric multiples of $n^\star$ in $[0.05,3000]$; $3000$ repetitions per cell. Data are generated in doubling blocks so that the betting certificate bets predictably. Validity: all $K=32$ candidates have exceedance $(1-p)(1+e)$ with $e\in\{0.02,0.05,0.2\}$, $9000$ repetitions. The exact binomial reference uses Clopper--Pearson lower limits at level $\delta/K$.

\subsection{Paired information study and shared actions}\label{app:infostudy}
\paragraph{Design.} The information study uses the family of Theorem~\ref{thm:boundary} with $\gamma=u/2$: $K=W=2$ for $p\in\{0.80,0.90,0.95,0.98,0.99,0.995\}$; $W=32$ with $K\in\{2,4,8,16,32\}$ at $p\in\{0.90,0.99\}$ and $K=8$ at $p\in\{0.80,0.95,0.98\}$; and one candidate logged with probability $1/W$, $W\in\{1,2,5,10,20,32\}$, at $p=0.95$. Each setting uses $44$ log sizes on a geometric grid from $0.02$ to $5000$ times $Wu\log(K/\delta)/\gamma^2$ and $2000$ repetitions per size; the safe index is drawn uniformly.

\paragraph{Count-based implementation.} Since $w_j=W\1\{A=j\}$, every certificate is an exact function of per-block counts of matched actions and of their exceedances, so the logs are generated as exact multinomial and binomial counts in the doubling blocks used by the betting certificates; a test in the supplementary code checks each count-based certificate against its observation-level implementation. Here $F_j:=1-G_j=\PP_{\pi_j}(Y\le\tau)$.

\paragraph{Rules and selection.} CDF-EB, CDF-Bet (the betting test applied to $1-w\1\{Y\le\tau\}/W$), and CDF-exact (Clopper--Pearson on $S_j\sim\mathrm{Bin}(n,F_j/W)$) see only the joint CDF indicators; Exc-EB and Exc-Bet see the exceedance indicators; Counts-exact (Clopper--Pearson on $N_j-S_j$ given $N_j$) and Norm-Bet see the action counts as well. Every rule uses level $\delta/K$ per candidate and outputs the first certified candidate in a fixed order. False certification is measured under the alternatives, under the all-unsafe family null ($u+\gamma$), and under a near-boundary null with every tail at $1.05u$.

\begin{table}[!htbp]
\centering\small
\setlength{\tabcolsep}{3pt}
\caption{\textbf{Compare exact CDF and count-based certificates.} Information study: log size (thousands) for $90\%$ selection-and-certification power, and the ratio CDF-exact/Counts-exact against $p/u$. Bold gives the CDF-exact/Counts-exact log-size ratio.}
\label{tab:infostudy}
\begin{tabular}{llrrrrrrrrr}
\toprule
& & \multicolumn{3}{c}{\cellcolor{cdfhead}\textbf{CDF only}} & \multicolumn{2}{c}{\cellcolor{exchead}\textbf{Exceedance}} & \multicolumn{2}{c}{\cellcolor{recoveryrow}\textbf{With counts}} & \multicolumn{2}{c}{\textbf{Ratios}}\\
\cmidrule(lr){3-5}\cmidrule(lr){6-7}\cmidrule(lr){8-9}\cmidrule(lr){10-11}
Setting & $p$ & EB & Bet & Exact & EB & Bet & Exact & Norm-Bet & CDF/counts & $p/u$\\
\midrule
$K=W=2$ & 0.8 & 2.2 & 2.2 & 1.0 & 0.8 & 0.5 & 0.3 & 0.5 & \keycell{3.6} & 4\\
$K=W=2$ & 0.9 & 8.0 & 9.1 & 4.2 & 1.6 & 1.1 & 0.6 & 1.1 & \keycell{6.9} & 9\\
$K=W=2$ & 0.95 & 30.2 & 36.6 & 16.7 & 3.1 & 2.2 & 1.3 & 2.2 & \keycell{12.7} & 19\\
$K=W=2$ & 0.98 & 190 & 227 & 110 & 7.8 & 5.5 & 3.2 & 5.4 & \keycell{33.9} & 49\\
$K=W=2$ & 0.99 & 740 & 922 & 421 & 15.6 & 11.2 & 6.8 & 11.3 & \keycell{61.5} & 99\\
$K=W=2$ & 0.995 & 2844 & 3675 & 1607 & 31.3 & 22.8 & 12.9 & 22.9 & \keycell{124.3} & 199\\
\midrule
$W=32$, $K=2$ & 0.9 & 222 & 270 & 122 & 24.9 & 18.6 & 9.8 & 16.9 & \keycell{12.3} & 9\\
$W=32$, $K=2$ & 0.99 & 22574 & 28157 & 12721 & 252 & 179 & 107 & 177 & \keycell{119.4} & 99\\
$W=32$, $K=4$ & 0.9 & 257 & 305 & 150 & 28.2 & 19.9 & 11.8 & 19.2 & \keycell{12.7} & 9\\
$W=32$, $K=4$ & 0.99 & 25229 & 32089 & 15267 & 288 & 209 & 128 & 207 & \keycell{119.7} & 99\\
$W=32$, $K=8$ & 0.9 & 286 & 346 & 167 & 30.6 & 21.7 & 13.5 & 20.7 & \keycell{12.4} & 9\\
$W=32$, $K=8$ & 0.99 & 28028 & 36019 & 18019 & 307 & 222 & 147 & 219 & \keycell{122.6} & 99\\
$W=32$, $K=16$ & 0.9 & 307 & 375 & 186 & 34.4 & 24.5 & 15.2 & 23.5 & \keycell{12.3} & 9\\
$W=32$, $K=16$ & 0.99 & 29646 & 40145 & 20259 & 345 & 245 & 163 & 241 & \keycell{124.0} & 99\\
$W=32$, $K=32$ & 0.9 & 322 & 403 & 212 & 38.3 & 25.7 & 16.5 & 25.0 & \keycell{12.8} & 9\\
$W=32$, $K=32$ & 0.99 & 32415 & 43760 & 21902 & 381 & 256 & 185 & 257 & \keycell{118.1} & 99\\
\midrule
one cand., $W=1$ & 0.95 & 1.4 & 1.0 & 0.5 & 1.4 & 1.0 & 0.5 & 1.0 & \keycell{1.0} & 19\\
one cand., $W=2$ & 0.95 & 27.6 & 31.3 & 14.5 & 2.8 & 2.0 & 1.1 & 1.9 & \keycell{13.7} & 19\\
one cand., $W=5$ & 0.95 & 103 & 123 & 55.1 & 6.9 & 5.1 & 2.7 & 5.0 & \keycell{20.6} & 19\\
one cand., $W=10$ & 0.95 & 234 & 273 & 115 & 14.1 & 9.8 & 5.4 & 9.3 & \keycell{21.6} & 19\\
one cand., $W=20$ & 0.95 & 493 & 564 & 259 & 28.5 & 20.0 & 10.5 & 19.1 & \keycell{24.6} & 19\\
one cand., $W=32$ & 0.95 & 806 & 932 & 402 & 45.0 & 32.2 & 17.5 & 31.5 & \keycell{23.0} & 19\\\bottomrule
\end{tabular}
\end{table}

\begin{table}[!htbp]
\centering\small
\setlength{\tabcolsep}{3pt}
\caption{\textbf{Maximum empirical rates across many configurations.} Information study: maximum false-certification rate over all settings and log sizes ($1100$ cells per null, $2000$ repetitions per cell, Monte Carlo standard error about $0.005$ at rate $0.05$). The exact tests are valid by construction; their rates average $0.016$ (CDF-exact) and $0.003$ (Counts-exact) over null cells.}
\label{tab:infovalidity}
\begin{tabular}{lrrr}
\toprule
\rowcolor{tablehead}
\textbf{Rule} &\textbf{family null} &\textbf{near boundary} &\textbf{alternatives} \\
\midrule
CDF-EB & 0.0025 & 0.0035 & 0.0010\\
CDF-Bet & 0.0025 & 0.0050 & 0.0010\\
CDF-exact & 0.0515 & 0.0640 & 0.0425\\
Exc-EB & 0.0000 & 0.0010 & 0.0000\\
Exc-Bet & 0.0010 & 0.0105 & 0.0000\\
Counts-exact & 0.0085 & 0.0390 & 0.0035\\
Norm-Bet & 0.0005 & 0.0090 & 0.0000\\\bottomrule
\end{tabular}
\end{table}

\paragraph{Shared actions.} The routers of Corollary~\ref{cor:affine} are ordered by a fixed random permutation, and the candidate list is its first $K$ entries, $K\in\{2,5,10,25,50,125\}$; the control list repeats the first five routers $25$ times each. With $p\in\{0.90,0.95\}$ and $\gamma=u/4$, the construction is checked exactly: agreement fractions are $0$ or $1/5$, $V_j^0=5(u+\gamma)\ge3\gamma W$, and every other router stays at least $0.6\gamma$ above the budget. Logs are cell counts over the $125$ (context, action) pairs, $36$ log sizes from $0.05$ to $400$ times $V\log(K/\delta)/\gamma^2$, where $V=V_j^0=5(u+\gamma)$, and $2000$ repetitions.

\begin{table}[!htbp]
\centering\small
\setlength{\tabcolsep}{5pt}
\caption{\textbf{The tested certificates need more data as the candidate list grows.} Shared actions: log size (thousands) for $90\%$ power with $K$ affine routers over five actions. Bold marks the endpoint comparisons; the repeated-candidate control retains the nominal multiplicity correction.}
\label{tab:shared}
\begin{tabular}{llrrrrrr}
\toprule
\rowcolor{tablehead}
\textbf{$p$} &\textbf{$K$} &\textbf{Exc-EB} &\textbf{Exc-Bet} &\textbf{Norm-Bet} &\textbf{Counts-exact} &\textbf{CDF-EB} &\textbf{CDF-exact} \\
\midrule
0.9 & \keycell{$2$} & \keycell{15.4} & 14.7 & 14.0 & 7.1 & \keycell{112} & 64.4\\
0.9 & $5$ & 17.7 & 17.5 & 16.6 & 8.7 & 131 & 79.3\\
0.9 & $10$ & 19.7 & 19.2 & 17.9 & 10.0 & 144 & 89.5\\
0.9 & $25$ & 22.1 & 21.9 & 20.5 & 11.3 & 160 & 103\\
0.9 & $50$ & 23.8 & 23.7 & 21.8 & 12.6 & 172 & 118\\
0.9 & \keycell{$125$} & \keycell{26.5} & 26.3 & 24.2 & 14.3 & \keycell{189} & 131\\
0.9 & \keycell{$125$ (5 distinct)} & 26.2 & 25.7 & 23.5 & 14.2 & 188 & 128\\\addlinespace[3pt]
0.95 & \keycell{$2$} & \keycell{31.3} & 30.3 & 28.5 & 14.0 & \keycell{458} & 263\\
0.95 & $5$ & 34.6 & 33.7 & 33.0 & 18.5 & 526 & 321\\
0.95 & $10$ & 39.5 & 39.0 & 37.9 & 21.2 & 574 & 369\\
0.95 & $25$ & 44.7 & 44.6 & 43.2 & 24.0 & 650 & 446\\
0.95 & $50$ & 48.5 & 47.7 & 45.3 & 26.4 & 693 & 493\\
0.95 & \keycell{$125$} & \keycell{52.1} & 51.7 & 50.1 & 29.4 & \keycell{757} & 529\\
0.95 & \keycell{$125$ (5 distinct)} & 53.4 & 52.8 & 50.5 & 30.6 & 759 & 544\\\bottomrule
\end{tabular}
\end{table}

\paragraph{Exact reconstruction.} On synthetic contextual logs with $20$ stochastic candidates (including zero weights, threshold ties and thresholds with no exceedance) and on both outcome tables with $16$ random stochastic targets under uniform logging, the exceedance moments reconstructed from CDF moments and the two totals of Section~\ref{sec:repair} agree with direct accumulation within $1.4\times10^{-11}$, the Exc-EB bounds to $10^{-14}$, and all certification decisions coincide.

\subsection{RouterBench curves}\label{app:routerfig}
Figure~\ref{fig:router} supplements the log-size summary in Table~\ref{tab:router} with the full RouterBench certification curves, showing how the comparison changes with logging overlap.

\begin{figure}[!htbp]
\centering
\includegraphics[width=0.9\linewidth]{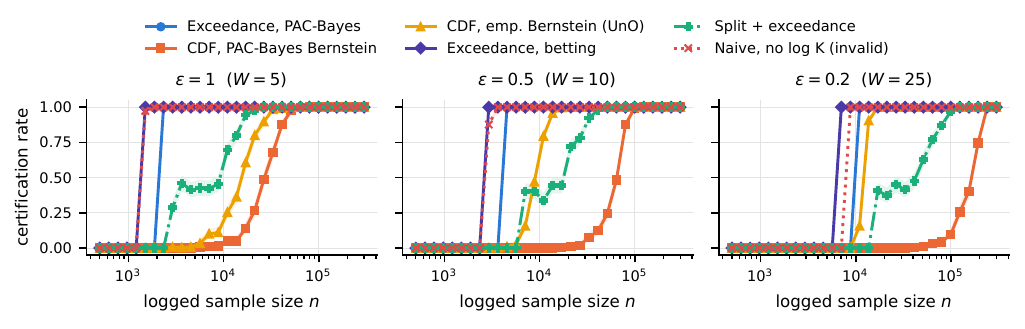}
\caption{\textbf{LLM routing (RouterBench, $p=0.95$, $K=32$).} Probability of certifying a router, as a function of the number of logged requests, for three logging policies. At $90\%$ empirical certification under uniform logging ($\epsilon=1$), Exc-Bet (purple) needs $18\times$ fewer logs than CDF-EB. Shading gives pointwise $95\%$ Wilson intervals over $300$ repetitions. The gap narrows as logging approaches the certified routers. The naive bound (red, dotted) omits the search correction and is invalid after selection.}
\label{fig:router}
\end{figure}
\FloatBarrier

\subsection{Predicted versus observed log sizes}\label{app:predict}
This expands the overlap interpretation in Section~\ref{sec:experiments} into a retrospective calculation; it explains the observed log sizes without treating population moments as an available planning input.

For router $j$ with exceedance $G_j<1-p$ and margin $\gamma_j=(1-p)-G_j$, an empirical-Bernstein certificate on a statistic with variance $\sigma^2$ certifies once $\sqrt{2\sigma^2L/n}+7WL/(3n)\le\gamma_j$, ignoring sampling fluctuation, where $L=\log(2K/\delta)$. For the exceedance statistic $\sigma^2=V_{\tau,j}-G_j^2$; for the CDF statistic $\sigma^2=V_{F,j}-(1-G_j)^2$. Both second moments are computed on the evaluation population for the given logging policy. This is a retrospective population-moment approximation, not a finite-sample power guarantee or an independent-pilot forecast. The prediction is the minimum of the resulting $n$ over feasible routers, compared with the observed log size at which the certification rate reaches $1/2$.

\begin{table}[!htbp]
\centering\small
\caption{\textbf{Observed thresholds versus retrospective predictions.} Predicted vs.\ observed median log sizes (RouterBench, $K=32$, fine grid). Bold values are observed thresholds; predictions use population moments.}
\label{tab:predict}
\begin{tabular}{llrrrr}
\toprule
\rowcolor{tablehead}
\textbf{$p$} &\textbf{$\epsilon$} &\textbf{Exc.\ pred.} &\textbf{Exc.\ obs.} &\textbf{CDF pred.} &\textbf{CDF obs.} \\
\midrule
0.90 & 1 & 940 & \keycell{881} & 7{,}464 & \keycell{4{,}870}\\
0.90 & 0.5 & 1{,}766 & \keycell{1{,}682} & 3{,}562 & \keycell{3{,}473}\\
0.90 & 0.2 & 4{,}327 & \keycell{4{,}065} & 5{,}473 & \keycell{5{,}285}\\
0.95 & 1 & 1{,}776 & \keycell{1{,}689} & 26{,}317 & \keycell{15{,}480}\\
0.95 & 0.5 & 3{,}432 & \keycell{3{,}261} & 9{,}359 & \keycell{8{,}979}\\
0.95 & 0.2 & 8{,}487 & \keycell{7{,}879} & 12{,}026 & \keycell{12{,}139}\\
0.99 & 1 & 8{,}347 & \keycell{7{,}879} & 588{,}945 & \keycell{296{,}815}\\
0.99 & 0.5 & 16{,}694 & \keycell{15{,}271} & 126{,}581 & \keycell{120{,}573}\\
0.99 & 0.2 & 41{,}735 & \keycell{46{,}008} & 91{,}739 & \keycell{89{,}995}\\\bottomrule
\end{tabular}
\end{table}

\subsection{Reasoning-budget study: outcomes and certificates}\label{app:gpu}
Tables~\ref{tab:actions} and~\ref{tab:gpu} supply the outcome characteristics and replay-certification results for the reasoning-budget study in Section~\ref{sec:experiments}.

\begin{table}[!htbp]
\centering\small
\setlength{\tabcolsep}{3pt}
\caption{\textbf{Reasoning configurations have an atom at the token cap.} Outcome table: accuracy, generated tokens and truncation rate at the $8{,}192$-token cap ($14{,}000$ problems per configuration).}
\label{tab:actions}
\begin{tabular}{lrrrr}
\toprule
\rowcolor{tablehead}
\textbf{Configuration} &\textbf{Accuracy} &\textbf{Mean tokens} &\textbf{95th pct.\ tokens} &\textbf{Truncated} \\
\midrule
Qwen3-1.7B & 0.787 & 688 & 2{,}276 & 0.7\%\\
Qwen3-1.7B, thinking & 0.833 & 3{,}446 & \keycell{8{,}192} & \keycell{12.5\%}\\
Qwen3-8B & 0.892 & 707 & 2{,}196 & 1.0\%\\
Qwen3-8B, thinking & 0.867 & 3{,}528 & \keycell{8{,}192} & \keycell{11.2\%}\\
Qwen3-30B-A3B & 0.908 & 667 & 2{,}205 & 0.5\%\\
Qwen3-30B-A3B, thinking & 0.882 & 3{,}207 & \keycell{8{,}192} & \keycell{9.6\%}\\
R1-Distill-Qwen-7B & 0.904 & 2{,}540 & \keycell{8{,}192} & \keycell{5.7\%}\\\bottomrule
\end{tabular}
\end{table}

\begin{table}[!htbp]
\centering\small
\setlength{\tabcolsep}{3pt}
\caption{\textbf{The EB routes separate again on token budgets.} Estimated log sizes (thousands) at $90\%$ empirical certification of the selected router's token budget (seven open-weight configurations, $K=32$, $\delta=0.05$, $1000$ repetitions). ``--'': not reached within $98$k. $^\dagger$The empirical rate first crosses $0.90$ at the largest size ($904/1000$; $95\%$ Wilson interval $[0.884,0.921]$). Log sizes form a coarse grid of about $\times2$ steps; logs larger than the $9{,}800$-problem evaluation pool resample it with replacement. Bold compares Exc-EB with CDF-EB.}
\label{tab:gpu}
\begin{tabular}{llrrrrrrr}
\toprule
\rowcolor{tablehead}
\textbf{$p$} &\textbf{$\epsilon$} &\textbf{Exc-PB} &\textbf{Exc-Bet} &\textbf{Exc-EB} &\textbf{Norm-Bet} &\textbf{CDF-EB} &\textbf{CDF-PB} &\textbf{Split} \\
\midrule
0.90 & 1 & 8.7 & 4.3 & \keycell{8.5} & 4.3 & \keycell{46.2} & 86.2 & 18.2\\
0.90 & 0.5 & 18.9 & 8.8 & \keycell{18.4} & 8.6 & \keycell{97.5$^\dagger$} & -- & 36.3\\
0.90 & 0.2 & 41.7 & 20.7 & \keycell{41.5} & 20.4 & \keycell{--} & -- & 85.7\\\addlinespace[3pt]
0.95 & 1 & 13.6 & 7.1 & \keycell{11.8} & 6.8 & \keycell{--} & -- & 21.4\\
0.95 & 0.5 & 21.7 & 12.6 & \keycell{21.2} & 12.7 & \keycell{--} & -- & 45.9\\
0.95 & 0.2 & 74.9 & 34.4 & \keycell{68.1} & 35.5 & \keycell{--} & -- & --\\\bottomrule
\end{tabular}
\end{table}

\subsection{Alternative exceedance bounds}\label{app:klvariant}
This supporting comparison holds the exceedance statistic fixed to distinguish the choice of confidence bound from the information-loss mechanism in Section~\ref{sec:boundary}.

We compare three bounds on the same weighted exceedance statistic, using the rare-action design of Section~\ref{sec:experiments} ($W=64$, $\gamma=(1-p)/2$, $3000$ repetitions). The bounds are the PAC-Bayes Bernstein bound of Theorem~\ref{thm:exc}, the empirical-Bernstein bound, and the PAC-Bayes-kl bound $\kl(\widehat G_\rho(\tau)/W\,\|\,\bar G_\rho(\tau)/W)\le\{\KL(\rho\|\pi)+\ln(2\sqrt n/\delta)\}/n$ \citep{seeger2002pac,maurer2004note}, applied to the $[0,1]$-valued loss $w\1\{Y>\tau\}/W$. In this finite-list experiment, $\rho=\delta_{\theta_j}$ selects candidate $j$ and $\pi$ is uniform on the $K$ candidates, so $\KL(\rho\|\pi)=\log K$. Table~\ref{tab:klvariant} reports $n/n^\star$ at $90\%$ power from a separate run (different seed and grid; its PAC-Bayes values are within $7\%$ of the main run). The kl bound is the loosest of the three in every setting. This comparison shows only that this particular PAC-Bayes-kl implementation does not close the gap. It does not isolate its cause: tuning penalties, finite-sample constants, and conditioning on action counts all differ across methods. In particular, the exact conditional-binomial reference uses normalization information unavailable from the exceedance sequence alone.

\begin{table}[!htbp]
\centering\small
\caption{\textbf{This PAC-Bayes-kl implementation does not reduce log size.} $n/n^\star$ at $90\%$ selection-and-certification power on the rare-action class. Bold marks the tested alternative, not a winner.}
\label{tab:klvariant}
\begin{tabular}{llrrrr}
\toprule
\rowcolor{tablehead}
\textbf{$K$} &\textbf{$p$} &\textbf{Exact binomial} &\textbf{Exc-EB} &\textbf{Exc-PB} &\textbf{Exc-kl} \\
\midrule
2 & 0.90 & 2.06 & 5.16 & 6.60 & \keycell{6.84}\\
2 & 0.95 & 2.24 & 5.14 & 6.52 & \keycell{7.12}\\
2 & 0.99 & 2.24 & 5.14 & 6.63 & \keycell{7.63}\\
8 & 0.90 & 2.07 & 4.78 & 5.42 & \keycell{5.56}\\
8 & 0.95 & 2.17 & 4.73 & 5.38 & \keycell{5.58}\\
8 & 0.99 & 2.25 & 4.78 & 5.49 & \keycell{5.98}\\
32 & 0.90 & 2.05 & 4.43 & 4.72 & \keycell{4.89}\\
32 & 0.95 & 2.11 & 4.42 & 4.71 & \keycell{4.94}\\
32 & 0.99 & 2.21 & 4.43 & 4.69 & \keycell{5.20}\\\bottomrule
\end{tabular}
\end{table}

\subsection{Tail-aware logging}\label{app:tailaware}
These fixed-target experiments illustrate Proposition~\ref{prop:logging} and complement Section~\ref{sec:experiments} by showing when a smaller tail second moment translates into fewer logs.

\paragraph{RouterBench.}
The target routers are $\pi_b(a\mid x)=(1-b)\1\{a=\text{Mistral-7B}\}+b\1\{a=\text{GPT-4}\}$, with $b$ set so that $G(\tau)=(1-r)(1-p)$ on the evaluation population for relative margins $r\in\{0.1,0.2,0.5\}$, using the budgets $\tau$ of the main study. The estimate $\hat g$ is the per-(task, model) exceedance frequency on the development split, shrunk to the per-model mean and floored at $10^{-3}$. The target-policy and tail-aware logs mix in $10\%$ uniform exploration. For each policy and log size (32 sizes from $200$ to $400{,}000$, $300$ repetitions), we draw logs from the evaluation population and certify the fixed target ($K=1$, $\delta=0.05$). Table~\ref{tab:tailaware} reports estimated log sizes (thousands) at $90\%$ empirical certification. Gains depend on the margin and method: at a $50\%$ relative margin the range term leaves little benefit, and Exc-PB can require more data because its worst-case proxy $WG$ increases with $W$ despite a smaller tail second moment.

\begin{table}[!htbp]
\centering\small
\setlength{\tabcolsep}{3pt}
\caption{\textbf{Logging gains depend on the certificate and margin.} Tail-aware vs.\ target-policy (with $10\%$ uniform exploration floor) vs.\ uniform logging: logs (thousands) to certify a fixed stochastic router with probability $0.9$; margin is the relative margin $r=\gamma/(1-p)$, and the last column is target-policy$/$tail-aware. Bold ratios above one favor tail-aware logging; ratios below one favor target-policy logging.}
\label{tab:tailaware}
\begin{tabular}{lllrrrr}
\toprule
\rowcolor{tablehead}
\textbf{$p$} &\textbf{margin} &\textbf{certificate} &\textbf{target} &\textbf{tail-aware} &\textbf{uniform} &\textbf{ratio} \\
\midrule
0.90 & 10\% & Exc-Bet & 12.7 & 3.5 & 7.8 & \keycell{3.7}\\
0.90 & 10\% & Exc-EB & 12.5 & 6.4 & 13.5 & \keycell{1.9}\\
0.90 & 10\% & Exc-PB & 25.4 & 66.8 & 81.8 & \keycell{0.4}\\
0.90 & 20\% & Exc-Bet & 3.3 & 1.4 & 2.5 & \keycell{2.3}\\
0.90 & 20\% & Exc-EB & 3.5 & 2.6 & 4.6 & \keycell{1.4}\\
0.90 & 20\% & Exc-PB & 6.5 & 15.6 & 20.4 & \keycell{0.4}\\
0.90 & 50\% & Exc-Bet & 0.4 & 0.4 & 0.8 & \keycell{0.9}\\
0.90 & 50\% & Exc-EB & 0.6 & 0.7 & 1.3 & \keycell{0.9}\\
0.90 & 50\% & Exc-PB & 0.9 & 1.8 & 3.3 & \keycell{0.5}\\
0.95 & 10\% & Exc-Bet & 27.0 & 4.5 & 10.0 & \keycell{6.0}\\
0.95 & 10\% & Exc-EB & 25.9 & 7.9 & 19.4 & \keycell{3.3}\\
0.95 & 10\% & Exc-PB & 49.6 & 87.7 & 165.9 & \keycell{0.6}\\
0.95 & 20\% & Exc-Bet & 6.3 & 1.8 & 3.8 & \keycell{3.5}\\
0.95 & 20\% & Exc-EB & 6.9 & 3.2 & 7.5 & \keycell{2.2}\\
0.95 & 20\% & Exc-PB & 11.9 & 20.4 & 42.1 & \keycell{0.6}\\
0.95 & 50\% & Exc-Bet & 0.6 & 0.5 & 1.4 & \keycell{1.1}\\
0.95 & 50\% & Exc-EB & 1.1 & 0.9 & 2.3 & \keycell{1.3}\\
0.95 & 50\% & Exc-PB & 1.7 & 2.4 & 6.1 & \keycell{0.7}\\\bottomrule
\end{tabular}
\end{table}

\paragraph{Reasoning study.} Same protocol on the outcome table of Section~\ref{sec:experiments} (outcome: generated tokens; budgets $\tau=2{,}161$ and $3{,}509$ tokens from the main study). The targets answer with Qwen3-30B-A3B without thinking and escalate a fraction $b$ of requests to the same model with thinking, with $b$ set so that $G(\tau)=(1-r)(1-p)$ on the evaluation population ($p=0.90$, $r=0.5$ is infeasible because the non-thinking model alone exceeds $0.05$). The estimate $\hat g$ is the development-split exceedance frequency per (dataset, difficulty level, configuration), shrunk to the per-configuration mean; $300$ repetitions per log size. Because the target puts little weight on the risky action, the across-action variance of $\sqrt g$, which is the gain in Proposition~\ref{prop:logging}, is small relative to $V_\tau$, so a small gain is predicted. The observed log-size ratio tracks the $V_\tau$ ratio (Table~\ref{tab:tailaware_gr}).

\begin{table}[!htbp]
\centering\small
\setlength{\tabcolsep}{2.5pt}
\caption{\textbf{Gains are modest and method-dependent here.} Tail-aware logging on the reasoning study: relative margin $r=\gamma/(1-p)$, escalation rate $b$, $V_\tau$ ratio target-policy$/$tail-aware, logs (thousands) for Exc-Bet to certify with probability $0.9$ under target-policy, tail-aware and uniform logging, and the target$/$tail-aware log-size ratio for Exc-Bet and Exc-EB. Bold compares tail second-moment and log-size ratios, including values below one.}
\label{tab:tailaware_gr}
\begin{tabular}{llrrrrrrr}
\toprule
\rowcolor{tablehead}
\textbf{$p$} &\textbf{margin} &\textbf{$b$} &\textbf{$V_\tau$ ratio} &\textbf{target} &\textbf{tail-aware} &\textbf{uniform} &\textbf{ratio (Bet)} &\textbf{ratio (EB)} \\
\midrule
0.90 & 30\% & 0.038 & \keycell{1.11} & 1.5 & 1.4 & 8.6 & \keycell{1.13} & \keycell{0.89}\\
0.90 & 20\% & 0.059 & \keycell{1.18} & 3.6 & 3.2 & 18.1 & \keycell{1.14} & \keycell{0.95}\\
0.90 & 10\% & 0.079 & \keycell{1.25} & 17.1 & 12.8 & 65.2 & \keycell{1.33} & \keycell{1.03}\\
0.95 & 50\% & 0.011 & \keycell{1.01} & 1.0 & 0.9 & 5.7 & \keycell{1.08} & \keycell{0.99}\\
0.95 & 30\% & 0.044 & \keycell{1.18} & 3.3 & 2.7 & 14.7 & \keycell{1.22} & \keycell{0.94}\\
0.95 & 20\% & 0.060 & \keycell{1.26} & 7.5 & 6.2 & 30.9 & \keycell{1.21} & \keycell{1.00}\\
0.95 & 10\% & 0.076 & \keycell{1.33} & 34.2 & 27.8 & 114.0 & \keycell{1.23} & \keycell{1.13}\\\bottomrule
\end{tabular}
\end{table}

\end{document}